\pdfoutput=1
\documentclass[11pt]{article}
\usepackage{float}
\usepackage[textsize=tiny]{todonotes}
\usepackage{fullpage,parskip}
\usepackage[english]{babel}
\usepackage[utf8]{inputenc}
\usepackage[T1]{fontenc}
\usepackage{enumitem,bbm,ifthen}
\usepackage[authoryear,round]{natbib}
\usepackage{amsmath,amssymb,amsthm,mathtools,bm}
\usepackage{thm-restate}
\usepackage{booktabs,array}
\usepackage{algorithm,algpseudocode}
\usepackage{graphicx}
\usepackage{tikz}
\usetikzlibrary{decorations.pathreplacing,calc}
\usepackage{subcaption}
\usepackage{microtype}
\usepackage{xcolor}
\usepackage{hyperref}
\usepackage{url}

\hypersetup{colorlinks=true,citecolor=blue,linkcolor=blue,urlcolor=blue}

\newcommand{\R}{\mathbb{R}}
\newcommand{\N}{\mathbb{N}}
\newcommand{\E}{\mathbb{E}}
\newcommand{\Pp}{\mathbb{P}}
\newcommand{\Law}{\operatorname{Law}}
\newcommand{\TV}{\operatorname{TV}}
\newcommand{\KL}{D_{\rm KL}}
\newcommand{\Unif}{\mathcal{U}}
\newcommand{\excessloss}{\Delta}
\newcommand{\avgexcess}{A}
\newcommand{\riskdecay}{\Psi}
\newcommand{\Normal}{\mathcal{N}}

\newcommand{\norm}[1]{\left\lVert #1\right\rVert}
\newcommand{\norms}[1]{\lVert #1\rVert}
\newcommand{\ip}[2]{\left\langle #1,#2\right\rangle}
\newcommand{\dd}{\,\mathrm{d}}

\newcommand{\ma}{{\mathbf{A}}}
\newcommand{\mb}{{\mathbf{B}}}
\newcommand{\mE}{{\mathbf{E}}}

\newcommand{\mR}{{\mathbf{R}}}
\newcommand{\vtheta}{{\boldsymbol{\theta}}}

\newcommand{\Lfm}{\widehat L_{\rm FM}}

\newcommand{\Leeg}{\widehat L_{\rm EEG}}
\newcommand{\Lenigma}{\widehat L_{\rm ENIGMA}}

\newcommand{\mq}{{\mathbf{Q}}}
\newcommand{\mw}{{\mathbf{W}}}
\newcommand{\mSigma}{{\boldsymbol{\Sigma}}}
\newcommand{\va}{{\mathbf{a}}}
\newcommand{\vb}{{\mathbf{b}}}
\newcommand{\vr}{{\mathbf{r}}}
\newcommand{\vf}{{\mathbf{f}}}
\newcommand{\vg}{{\mathbf{g}}}
\newcommand{\vh}{{\mathbf{h}}}
\newcommand{\vm}{{\mathbf{m}}}
\newcommand{\vo}{{\mathbf{o}}}
\newcommand{\vu}{{\mathbf{u}}}
\newcommand{\vv}{{\mathbf{v}}}
\newcommand{\vw}{{\mathbf{w}}}
\newcommand{\vx}{{\mathbf{x}}}
\newcommand{\vy}{{\mathbf{y}}}
\newcommand{\vz}{{\mathbf{z}}}
\newcommand{\vzero}{{\mathbf{0}}}
\newcommand{\Id}{{\mathbf{I}}}

\newtheorem{theorem}{Theorem}
\newtheorem{lemma}{Lemma}

\newtheorem{proposition}{Proposition}
\newtheorem{corollary}{Corollary}
\theoremstyle{definition}

\newtheorem{model}{Model}
\newtheorem{problem}{Problem}
\newtheorem{assumption}{Assumption}
\theoremstyle{remark}
\newtheorem*{remark}{Remark}

\providecommand{\ma}{{\mathbf{A}}}
\providecommand{\mb}{{\mathbf{B}}}

\providecommand{\mq}{{\mathbf{Q}}}

\providecommand{\mw}{{\mathbf{W}}}

\providecommand{\va}{{\mathbf{a}}}
\providecommand{\vb}{{\mathbf{b}}}

\providecommand{\vf}{{\mathbf{f}}}
\providecommand{\vg}{{\mathbf{g}}}
\providecommand{\vh}{{\mathbf{h}}}

\providecommand{\vm}{{\mathbf{m}}}

\providecommand{\vo}{{\mathbf{o}}}

\providecommand{\vr}{{\mathbf{r}}}

\providecommand{\vu}{{\mathbf{u}}}
\providecommand{\vv}{{\mathbf{v}}}
\providecommand{\vw}{{\mathbf{w}}}
\providecommand{\vx}{{\mathbf{x}}}
\providecommand{\vy}{{\mathbf{y}}}
\providecommand{\vz}{{\mathbf{z}}}

\title{Conditional Flow Matching for\\Transport Between Markov Processes}

\author{
\normalsize Syamantak Kumar\textsuperscript{\ensuremath{\dagger}}\quad
Dheeraj Nagaraj\textsuperscript{*}\quad
Saptarshi Roy\textsuperscript{\ensuremath{\ddagger}}\quad
Purnamrita Sarkar\textsuperscript{\ensuremath{\mathsection}}
}

\date{}

\begin{document}
\allowdisplaybreaks
\hypersetup{pageanchor=false}
\pagenumbering{gobble}
\maketitle
\begingroup
\renewcommand{\thefootnote}{\fnsymbol{footnote}}
\footnotetext[1]{Google Deepmind, \texttt{dheerajnagaraj@google.com}. Dheeraj Nagaraj was involved in an Advisory Role.}
\footnotetext[2]{The University of Texas at Austin, Department of Computer Science, \texttt{syamantak@utexas.edu}.}
\footnotetext[3]{Mohamed bin Zayed University of Artificial Intelligence, CMS Division, \texttt{Saptarshi.Roy@mbzuai.ac.ae}.}
\footnotetext[4]{The University of Texas at Austin, \texttt{purna.sarkar@utexas.edu}.}
\endgroup

\begin{abstract}
Motivated by sequence-to-sequence transport in the context time-series domain adaptation, we study the problem of transportation between trajectories of Markov processes. Given a limited number of trajectories from source distribution and the target distribution, we formulate a flow matching based algorithm which learns a transport map from the source to target trajectory distribution, while preserving the Markov structure. We show that this is consistent in the population limit and derive finite-sample error bounds under mixing time assumptions, following the analysis of classical statistical problems including
regression \citep{nagaraj2020least}, principal component analysis
\citep{kumar2023streaming}, and matrix
concentration \citep{neeman2024concentration} in the Markov setting. We complement that with a lower-bound construction showing that a mixing-time dependent sample complexity is unavoidable even with regular Gaussian conditional transitions. We evaluate on synthetic
and real-world data. For image retrieval from electroencephalography (EEG) on THINGS-EEG2
\citep{gifford2022things}, the task is to identify the
viewed image from EEG signals captured from human subjects, which suffers from high inter subject variability. We augment the ENIGMA decoder \citep{kneeland2026enigma} with a conditional flow before its subject-specific temporal map. This improves mean top-5 retrieval accuracy from $43.87\%$ to $49.05\%$, an $11.82\%$ relative improvement.

% We show that adding our width-64 flow before the temporal map of the
% reproduced ENIGMA decoder \citep{kneeland2026enigma}, with $k=32$, raises mean top-5 retrieval accuracy from $43.87\%$ to
% $49.05\%$ under a matched subject-adaptation protocol.

% Given one
% source and one target trajectory, we learn to transform a fresh source
% trajectory into one following the target dynamics. We use
% conditional flow matching to learn a transition map conditioned on the current
% source and generated target states and reuse it recursively. At the population
%level, we prove that the resulting trajectory has exactly the target %trajectory distribution. 

\end{abstract}

\thispagestyle{empty}
\newpage
\tableofcontents
\thispagestyle{empty}
\newpage
\pagenumbering{arabic}
\hypersetup{pageanchor=true}

\section{Introduction}\label{sec:introduction}

Sequence-to-sequence tasks seek to learn a map from an input sequence to an output sequence. In time-series adaptation, such a map can transform observations from a source process to match the statistical behavior of a target process. Related transport and alignment problems arise in sensor calibration \citep{kim2020adaptive}, climate-model bias correction \citep{robin2019multivariate}, and time-series domain adaptation \citep{painblanc2023match,mishra2026causal}.

In many of these cases, the sequences themselves are long, but the number of
sequences available from the source and target distributions is limited.
A practical concern is how much data must be collected to learn this map for
a subject or device. Sensor calibration can require repeated, costly
data collection \citep{kim2020adaptive}, while neural decoding motivates
adapting pretrained models using limited recordings from each new subject.
ENIGMA and MindEye2 pursue this objective for EEG and functional magnetic
resonance imaging (fMRI), respectively
\citep{kneeland2026enigma,scotti2024mindeye2}. These constraints motivate
studying adaptation from one or a few observed trajectories. Although each
trajectory contains many observations, those observations are temporally
dependent. This raises the question:

\textit{Can we learn, from \textbf{one} source and \textbf{one} target trajectory, a map that
transforms a fresh source trajectory into one that follows the target dynamics?}

In this paper, we study this problem for Markovian trajectories. When both
the source and target follow Markov processes, we want the learned mapping
to preserve the target's transition law and hence its Markov structure in
the generated trajectory. At first glance, this task seems daunting:
treating an entire trajectory as one high-dimensional datapoint leaves us
with only one sample from each trajectory distribution. However, the Markov
property and time homogeneity let us learn a common transition rule from
successive observations. Under stationarity, these observations share a
common transition-pair distribution, although they remain dependent.
We exploit this structure to give a principled answer to the question above.

\textit{In particular, we propose an algorithm based on conditional flow
matching for transporting a source trajectory to a target trajectory,
accounting for the source history through the current source state.
We prove exact population-level transport under suitable assumptions,
provide finite-sample error bounds, and evaluate the algorithm on real
and simulated data.}

Transport between Markov chains has also been studied through
cost-minimizing joint distributions over transitions and trajectories
\citep{oconnor2022optimal,moulos2021bicausal}. Our objective is to learn a
recursively reusable generator from one source and one target trajectory.

Formally, let $(\vx_i)_{i\geq0}$ and $(\vy_i)_{i\geq0}$ be independent, time-homogeneous Markov
processes on $\R^d$ with transition kernels $P_X$ and $P_Y$. We observe trajectories
$\vx_{0:m}$ and $\vy_{0:m}$. At test time, we receive a fresh source trajectory
$\vx'_{0:n}$ with transition kernel $P_X$ and seek to map it to a trajectory
$\widehat{\vy}_{0:n}$ with transition kernel $P_Y$. Given the source trajectory, we map each next source state conditional on the current source and generated target states.
Lemma~\ref{lem:current-state-sufficiency} shows that, conditional on their
joint history, the next source state still follows the source transition
kernel. Thus maps that transport each source transition to the corresponding
target transition can be applied recursively. We learn a single family of maps
$\Phi_{\vx,\vy}:\R^d\to\R^d$ and generate recursively by
\begin{equation}
 \widehat{\vy}_{i+1}
 =\Phi_{\vx'_i,\widehat{\vy}_i}(\vx'_{i+1}).
 \label{eq:intro-transition-goal}
\end{equation}

Natural heuristics such as Histogram or marginal matching matches state frequencies alone and need not preserve the transition law.
For e.g., consider two Markov processes on $\{0,1\}$ that are equally likely
to be in either state. If the source repeats its current state with probability
$0.9$ while the target switches with probability $0.9$, their state frequencies
agree, with very different transition behaviour.

% \paragraph{Our approach.}
% \red{\textbf{TODO(Purna):}
% Learning an unrestricted transport between trajectories by treating entire trajectories as datapoints would provide us with only
% one very high-dimensional sample, i.e the entire trajectory, from each of the source and target trajectory distribution. However, practical trajectory distributions reveal a lot of hidden structure that we can exploit by using the dynamics of the trajectory. Particularly, we use the Markov property and time
% homogeneity to split each trajectory into multiple observations from a common transition-pair
% distribution, which provides us with enough samples to learn the inherent transport maps efficiently. On THINGS-EEG2, our flow raises ENIGMA's mean Top-5 retrieval accuracy from $43.87\%$ to $49.05\%$.}

\paragraph{Our approach.} To learn the transition map, we use conditional flow matching (CFM)
\citep{lipman2023flow,liu2023flow}, regressing the
next-state displacement $\vy^+-\vx^+$ on
$\vz_t=(1-t)\vx^++t\vy^+$, with flow time $t\in[0,1]$ and current states
$(\vx,\vy)$ as conditioning inputs. Here
$\vx^+\sim P_X(\vx,\cdot)$ and $\vy^+\sim P_Y(\vy,\cdot)$ are independent
given the current states. At test time, integrating the learned velocity
from $\vx'_{i+1}$ with $(\vx'_i,\widehat{\vy}_i)$ fixed implements
the recursive update \eqref{eq:intro-transition-goal}.
Our analysis quantifies how transition mixing time affects the required training trajectory length.
Section~\ref{sec:setup} defines the conditional velocity and its ODE map,
and Figure~\ref{fig:overview} summarizes the algorithm.

\begin{figure}[H]
 \centering
 \includegraphics[width=\linewidth]{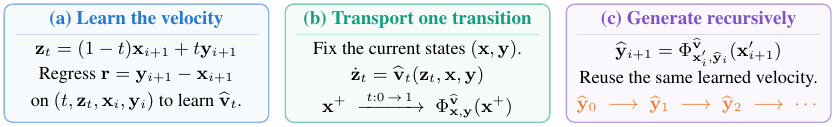}
 \caption{Each learned flow transports one source transition to one target
 transition. The index $i$ denotes time in the Markov process, while
 $t\in[0,1]$ denotes flow time within a single transition.}
 \label{fig:overview}
\end{figure}

\subsection{Our results}\label{sec:intro-results}

\textbf{Population correctness.}
Propositions~\ref{prop:conditional-transport} and~\ref{prop:exact-trajectory}
prove exact generation under suitable assumptions.
The population flow maps each source transition $P_X(\vx,\cdot)$ to the
target transition $P_Y(\vy,\cdot)$. Initialized from the stationary target
distribution independently of the fresh source trajectory, its recursive
application in \eqref{eq:intro-transition-goal} produces the exact target
trajectory law.

\textbf{Sample complexity.}
Theorem~\ref{thm:velocity-estimation} bounds the squared velocity error of
least-squares estimation, integrated over flow time, from two independent
stationary trajectories.
For a finite candidate class $\mathcal V$ containing the population velocity,
under suitable assumptions, error at most $\epsilon>0$
holds with probability at least $1-\delta$, for $\delta\in(0,1)$, if
\begin{equation}
 m\geq CK\max\{1,\epsilon^{-1}\}
 \log\left(\frac{6K|\mathcal V|}{\delta}\right),
 \label{eq:intro-sample-complexity}
\end{equation}
where $m$ counts transitions per trajectory, $K=\tau_{\rm mix}(\delta/(3m))$
is the observation spacing in Theorem~\ref{thm:velocity-estimation}, and $C$ depends only on the
assumption parameters. Compared with independent-sample analyses
\citep{zhou2025error,gaur2025continuous}, dependence replaces the sample size
$m$ by $m/K$, up to logarithmic factors, as in Markovian
regression \citep{nagaraj2020least}, streaming principal component analysis
\citep{kumar2023streaming}, and matrix concentration
\citep{neeman2024concentration}.

\textbf{Lower bound.}
Proposition~\ref{prop:mixing-lower-bound} adapts the lower-bound argument of
\citet[Appendix~C.1]{nagaraj2020least} to Gaussian conditional transitions.
Even with only two candidate velocities, achieving small squared velocity
error with high probability requires a number of transitions proportional
to the mixing time.

\textbf{Accuracy of generated states and trajectories.}
Theorem~\ref{thm:finite-rollout} in Appendix~\ref{app:stability-proofs} bounds
generation error under a coverage condition on generated state pairs.
For contractive target dynamics and coverage uniform over the horizon,
the state-marginal error bound has a decaying initialization term and an
estimation term bounded independently of trajectory length.
Proposition~\ref{prop:rollout} in the same appendix bounds trajectory Wasserstein distance with transport cost equal to the sum of squared state distances.

\textbf{Empirical evaluation.}
Section~\ref{sec:experiments} evaluates Gaussian and nonlinear transition
recovery and supervised EEG-to-image retrieval. For EEG, we freeze the
ENIGMA encoder and projection network pretrained on other subjects
\citep{kneeland2026enigma,enigma2026code}. Using 4,096 calibration images
per subject, we compare adapting only the subject-specific temporal map
with jointly training that map and a preceding width-64 flow conditioned
on the 32 previous EEG samples. This raises mean top-5 retrieval accuracy
from $43.87\%$ to $49.05\%$, an $11.82\%$ relative improvement.

\subsection{Related work}\label{sec:related-work}

\citet{kerrigan2024dynamic} and \citet{chemseddine2025conditional} construct flows
between conditional distributions. For Markov chains,
\citet{moulos2021bicausal} and \citet{oconnor2022optimal} optimize trajectory and
transition couplings through dynamic programming, while
\citet{oconnor2024estimation} estimate stationary transport plans and their
costs from observations. We use conditional flow matching to learn a
continuous transition map from one source and one target trajectory.
Conditioning on current source and target states lets us reuse this map to generate from a fresh source trajectory.

\citet{zhou2025error} and \citet{gaur2025continuous} give finite-sample error bounds
for neural flow matching trained on independent samples.
For sequential data, \citet{lim2026trajectory} express the empirical
flow-matching solution as a similarity-weighted average of observed
transition velocities and prove convergence as the trajectory grows.
We give finite-sample bounds for learning from dependent transitions:
for a finite candidate class, we bound the number of transitions needed
for a prescribed velocity error in terms of the transition mixing time,
and prove that a mixing-dependent sample threshold is necessary.
Appendix~\ref{app:related-work} surveys further related work.

\section{Preliminaries and notation}\label{sec:preliminaries}

% We introduce mixing to control training dependence and flow matching to transport transition distributions.

\paragraph{Notation.}
Here, a Markov process is discrete-time, possibly continuous state-space, and a trajectory is a finite
sequence of its states.
Vectors and vector-valued functions use bold lowercase letters, e.g.\  $\vx\in\R^d$ and the velocity field $\vv_t$. Matrices use
bold uppercase letters, e.g.\  $\ma\in\R^{d\times d}$. Scalars, indices, scalar-valued functions, and Markov kernels use ordinary italics. We use $\norm{\cdot}$ for the Euclidean norm for vectors and
$\norms{\ma}_{\rm F}=(\sum_{j,k}A_{jk}^2)^{1/2}$ for the Frobenius norm
of a matrix. For a random vector and $p\geq1$, $\norms{\vu}_{L^p}=(\E\norms{\vu}^p)^{1/p}$.
Distributions are probability measures; densities are not assumed to exist.
We write $\Law(\vu)$ for the law of $\vu$, so
$\Law(\vu)(A)=\Pp(\vu\in A)$ for every measurable set $A$, and
$g_\#\mu$ for the law of $g(\vu)$ when $\vu\sim\mu$.
The quadratic Wasserstein distance
$W_2(\mu,\nu)$ is the square root of the minimum $\E\norm{\vu-\vw}^2$ over joint
distributions with $\vu\sim\mu$ and $\vw\sim\nu$; such a joint distribution
is called a \emph{coupling}
\citep[Chapter~6]{villani2009optimal}. We also write
$\norm{\mu-\nu}_{\TV}=\sup_A|\mu(A)-\nu(A)|$ for total-variation distance.
Time is indexed by $i$, while
$t\in[0,1]$ denotes flow time within one transition. We write $\Unif(0,1)$
for the uniform distribution on $[0,1]$.

\paragraph{Markov processes and mixing.}
For a Markov kernel $P$, $P(\vx,\cdot)$ is the distribution of the next state
given the current state $\vx$. The first-order Markov property means
$\Law(\vx_{i+1}\mid\vx_{0:i})=P(\vx_i,\cdot)$; time homogeneity means
that $P$ does not depend on time index $i$. A distribution $\pi$ is stationary when
$\int P(\vx,A)\pi(\dd\vx)=\pi(A)$ for every set $A$. For a stationary Markov
process $(\vx_i)_{i\geq0}$, initialized with $\vx_0\sim\pi$, each
$\vx_i$ has distribution $\pi$. Using standard definitions of mixing~\citep{berbee1979random,yu1994rates,bradley2005basic}, let
\begin{equation}
 \beta_\ell=\E_{\vx\sim\pi}
 \norm{\Law(\vx_\ell\mid \vx_0=\vx)-\pi}_{\TV},
 \qquad
 \tau_{\rm mix}(\alpha)=\min\{\ell\geq1:\beta_\ell\leq\alpha\}.
 \label{eq:mixing-profile}
\end{equation}
We write $\tau_{\rm mix}=\tau_{\rm mix}(1/4)$ and call it the mixing time;
$\tau_{\rm mix}(\alpha)$ denotes the mixing time at any other tolerance
$\alpha\in(0,1)$. If $\beta_\ell\leq c_0e^{-\ell/\tau_0}$, then
$\tau_{\rm mix}(\alpha)\leq
\max\{1,\lceil\tau_0\log(c_0/\alpha)\rceil\}$.
% This profile and its use in coupling regularly spaced observations are standard
% \citep{berbee1979random,yu1994rates,bradley2005basic}.

\begin{remark}[Comparison with worst-case mixing]
Let $d_\ell^{\rm wc}=\sup_{\vx}\norm{\Law(\vx_\ell\mid \vx_0=\vx)-\pi}_{\TV}$ and
$\tau_{\rm wc}(\alpha)=\min\{\ell\geq1:d_\ell^{\rm wc}\leq\alpha\}$, the
standard worst-case total-variation mixing time
\citep[Chapter~4]{levin2017markov}. Since an average is at most a supremum,
$\beta_\ell\leq d_\ell^{\rm wc}$ and therefore
$\tau_{\rm mix}(\alpha)\leq\tau_{\rm wc}(\alpha)$. 

% The converse fails for
% $X_{i+1}=aX_i+\eta_{i+1}$ with $0<|a|<1$ and independent
% $\eta_i\sim N(0,1)$. Its stationary-average distance tends to zero
% (Appendix~\ref{app:gaussian}), but $d_\ell^{\rm wc}=1$ for each finite
% $\ell$, since the conditional mean $a^\ell X_0$ is unbounded while the
% conditional variance is fixed.
\end{remark}

\paragraph{Flow matching.}
Let $\mu_0$ and $\mu_1$ be distributions on $\R^d$ with finite second
moments. Draw $\vu_0\sim\mu_0$ and $\vu_1\sim\mu_1$ independently, and let
$\varphi:\R^d\to\R$ be any smooth compactly supported scalar function.
\begin{equation}
 \vz_t=(1-t)\vu_0+t\vu_1,
 \qquad
 \vv_t^\star(\vz)=\E[\vu_1-\vu_0\mid\vz_t=\vz],
 \qquad
 \frac{\dd}{\dd t}\E[\varphi(\vz_t)]
 =\E\left[\ip{\nabla\varphi(\vz_t)}{\vv_t^\star(\vz_t)}\right].
 \label{eq:generic-flow-matching}
\end{equation}
The interpolation $\vz_t$ joins the sampled endpoints by a straight line. The velocity $\vv_t^\star(\vz)$ averages the endpoint displacements
of paths passing through $\vz$ at time $t$, giving the local direction and
speed of transport. The function $\varphi$ tests the evolving interpolation law through
its expectation; smooth compact support means that $\varphi$ is infinitely differentiable and vanishes outside a bounded set.
Under the regularity in Assumption~\ref{ass:flow-regularity}, uniqueness for
this continuity equation identifies the interpolation law with the law
generated by the velocity's ordinary differential equation (ODE)
\citep[Proposition~8.1.7]{ambrosio2008gradient}. Specifically, for each
$\va\in\R^d$, define $\Phi_t(\va)$ by
\begin{equation*}
 \frac{\dd}{\dd t}\Phi_t(\va)=\vv_t^\star(\Phi_t(\va)),
 \qquad \Phi_0(\va)=\va.
\end{equation*}
Then $(\Phi_1)_\#\mu_0=\mu_1$, so $\Phi_1$ transports the initial
distribution to the target distribution \citep{lipman2023flow,liu2023flow}.
We apply this identity to conditional transition laws in
Proposition~\ref{prop:conditional-transport}.

\section{Problem and model}\label{sec:setup}

The model separates training trajectories from the fresh source trajectory used for generation.

\begin{model}[Markov process transport]
\label{model:markov-transport}
Fix $d,m,n\in\N$. Let $(\vx_i)_{i\geq0}$ and $(\vy_i)_{i\geq0}$ be
independent, time-homogeneous Markov processes on $\R^d$, with transition
kernels $P_X$ and $P_Y$. We observe one source trajectory $\vx_{0:m}$ and one
target trajectory $\vy_{0:m}$. At generation time, we receive a fresh source
trajectory $\vx'_{0:n}$ with transition kernel $P_X$, independent of both
training trajectories. Its initial distribution is arbitrary.
\end{model}
The finite-sample analysis additionally assumes stationary, mixing training data.
\begin{model}[Stationary training trajectories]
\label{model:stationary-training}
Instantiate Model~\ref{model:markov-transport} with stationary distributions
$\pi_X$ and $\pi_Y$, and initialize the training trajectories independently as
$\vx_0\sim\pi_X$ and $\vy_0\sim\pi_Y$. Write
$(\beta_\ell^X,\tau_X)$ and $(\beta_\ell^Y,\tau_Y)$ for their respective
mixing profiles and times in \eqref{eq:mixing-profile}.
Assume $\beta_\ell^X\to0$ and $\beta_\ell^Y\to0$ as $\ell\to\infty$.
The two processes need not have the same mixing profile or mixing time.
\end{model}

The generation objective specifies the conditional law of each output
transition.

\begin{problem}[Transport between Markov processes]
\label{prob:markov-transport}
Under Model~\ref{model:markov-transport}, given the two training trajectories
and a fresh source trajectory $\vx'_{0:n}$, learn a single family of maps
$(\Phi_{\vx,\vy})$ such that the recursion
\begin{equation}
 \widehat{\vy}_{i+1}
 =\Phi_{\vx'_i,\widehat{\vy}_i}(\vx'_{i+1})
 \label{eq:population-sampler}
\end{equation}
has transition kernel $P_Y$. If $P_Y$ has stationary distribution $\pi_Y$ and
$\widehat{\vy}_0\sim\pi_Y$ independently of $\vx'$, the desired output is a
stationary target trajectory. We call \eqref{eq:population-sampler} a \emph{rollout}.
\end{problem}

\paragraph{Conditional transport problem.}
Fix a time index $i$ and condition on the current states
$\vx_i=\vx$ and $\vy_i=\vy$. Here $\vx_i,\vy_i$ are random states,
whereas $\vx,\vy\in\R^d$ are their fixed values.
To construct the map $\Phi_{\vx,\vy}$ in \eqref{eq:population-sampler},
write $\vx^+=\vx_{i+1}$ and $\vy^+=\vy_{i+1}$. Their conditional laws are
\begin{equation}
 \Law(\vx^+\mid\vx_i=\vx,\vy_i=\vy)=P_X(\vx,\cdot),
 \qquad
 \Law(\vy^+\mid\vx_i=\vx,\vy_i=\vy)=P_Y(\vy,\cdot),
 \label{eq:conditional-endpoints}
\end{equation}
and they are conditionally independent.
Define
\begin{equation}
 \vz_t=(1-t)\vx^++t\vy^+,
 \qquad
 \vv_t^\star(\vz,\vx,\vy)
 =\E\left[\vy^+-\vx^+
 \mid \vz_t=\vz,\vx_i=\vx,\vy_i=\vy\right].
 \label{eq:population-velocity}
\end{equation}
Write $q_{t\mid\vx,\vy}=\Law(\vz_t\mid\vx_i=\vx,\vy_i=\vy)$.
For any velocity function $\vv$, let
$\Phi^{\vv}_{\vx,\vy,t}(\va)$ denote the solution at flow time $t$ of
\begin{equation}
 \frac{\dd\vz_t}{\dd t}=\vv_t(\vz_t,\vx,\vy),
 \qquad \vz_0=\va,
 \label{eq:conditional-ode}
\end{equation}
and write $\Phi^{\vv}_{\vx,\vy}=\Phi^{\vv}_{\vx,\vy,1}$. The required
one-transition identity is
\begin{equation}
 (\Phi_{\vx,\vy})_\#P_X(\vx,\cdot)=P_Y(\vy,\cdot).
 \label{eq:desired-kernel}
\end{equation}

The regression in \eqref{eq:population-velocity} applies standard flow matching
\citep{lipman2023flow,liu2023flow} conditional on $(\vx,\vy)$.
The ODE requires the following regularity.

\begin{assumption}[Flow regularity]\label{ass:flow-regularity}
At every current-state pair $(\vx,\vy)$ where a population flow is invoked,
the transition distributions have finite second moments. The population
velocity is measurable in $t$ and satisfies, for all $\vz,\vz'\in\R^d$ and
almost every $t\in[0,1]$,
\begin{equation}
 \norm{\vv_t^\star(\vz,\vx,\vy)-\vv_t^\star(\vz',\vx,\vy)}
 \leq L_t(\vx,\vy)\norm{\vz-\vz'},
 \qquad
 \norm{\vv_t^\star(\vz,\vx,\vy)}
 \leq G_t(\vx,\vy)(1+\norm{\vz}),
 \label{eq:flow-regularity}
\end{equation}
where $L_\cdot(\vx,\vy)$ and $G_\cdot(\vx,\vy)$ are nonnegative and integrable on
$[0,1]$. Whenever a learned flow is invoked, the learned velocity satisfies the
same conditions with its own nonnegative integrable functions $L_t$ and $G_t$.
\end{assumption}

These bounds give a unique solution on $[0,1]$ by
\citet[Lemma~8.1.4]{ambrosio2008gradient} and Gronwall's inequality, ensuring
the ODE uniqueness required in \citet[Definition~3.1]{liu2022marginal}.
\citet[Assumption~1(b)]{bansal2026straightness} similarly require the learned
velocity to be Lipschitz, with a constant uniform in flow time.
Appendix~\ref{app:gaussian} verifies the bounds for Gaussian transitions.
Algorithm~\ref{alg:transport} trains the velocity and generates the target
trajectory. Exact stationary generation uses $\widehat\mu_0=\pi_Y$; the
finite-sample bounds retain the error of approximate initialization.
\begin{algorithm}[b]
\caption{Conditional flow matching for transport between Markov processes}
\label{alg:transport}
\begin{minipage}{\dimexpr\linewidth-28pt\relax}
\begin{algorithmic}[1]
\Require Training trajectories $\vx_{0:m},\vy_{0:m}$; candidate velocity class $\mathcal V$; fresh source trajectory $\vx'_{0:n}$; initial distribution $\widehat\mu_0$.
\For{$i=0,\ldots,m-1$}\hfill\tikz[remember picture,overlay]\coordinate (train-start);
\State Set $\vr_i=\vy_{i+1}-\vx_{i+1}$ and $\vz_i(t)=(1-t)\vx_{i+1}+t\vy_{i+1}$ for $t\in[0,1]$.
\EndFor
\State Minimize $\frac1m\sum_{i=0}^{m-1}\int_0^1\norm{\vv_t(\vz_i(t),\vx_i,\vy_i)-\vr_i}^2\dd t$ over $\vv\in\mathcal V$ to obtain $\widehat{\vv}$.\hfill\tikz[remember picture,overlay]\coordinate (train-end);
\State Draw $\widehat{\vy}_0\sim\widehat\mu_0$ independently of $\vx'$.\hfill\tikz[remember picture,overlay]\coordinate (generation-start);
\For{$i=0,\ldots,n-1$}
\State Solve $\dot{\vz}_t=\widehat{\vv}_t(\vz_t,\vx'_i,\widehat{\vy}_i)$ on $[0,1]$ from $\vz_0=\vx'_{i+1}$.
\State Set $\widehat{\vy}_{i+1}=\vz_1$.
\EndFor
\State \Return $\widehat{\vy}_{0:n}$.\hfill\tikz[remember picture,overlay]\coordinate (generation-end);
\end{algorithmic}
\end{minipage}
\begin{tikzpicture}[remember picture,overlay]
\draw[decorate,decoration={brace,amplitude=4pt}]
 ($(train-start)+(5pt,7pt)$) -- ($(train-start |- train-end)+(5pt,-2pt)$)
 node[midway,xshift=15pt,rotate=90,font=\footnotesize]{Training};
\draw[decorate,decoration={brace,amplitude=4pt}]
 ($(train-start |- generation-start)+(5pt,7pt)$) -- ($(train-start |- generation-end)+(5pt,-2pt)$)
 node[midway,xshift=15pt,rotate=90,font=\footnotesize]{Generation};
\end{tikzpicture}
\end{algorithm}

\section{Main results}\label{sec:results}

Transport requires learning target transitions and preserving them under recursive application. We first establish
these properties for the population velocity. This reduces the statistical
problem to estimating that velocity from dependent observations, for which
we give an upper bound and a mixing-dependent lower bound.
We then bound generation error under suitable stability assumptions.

\subsection{From one transition to a trajectory}

For fixed current states $(\vx,\vy)$, applying the flow-matching identity \eqref{eq:generic-flow-matching} to
$P_X(\vx,\cdot)$ and $P_Y(\vy,\cdot)$ shows that the population ODE
follows the conditional interpolation law and reaches the target transition.

\begin{proposition}[Conditional transport]\label{prop:conditional-transport}
Under Model~\ref{model:markov-transport}, fix current-state values
$(\vx,\vy)$ at which Assumption~\ref{ass:flow-regularity} holds.
For the ODE map $\Phi^{\vv^\star}_{\vx,\vy,t}$ in
\eqref{eq:conditional-ode} and every $t\in[0,1]$,
\begin{equation}
 (\Phi^{\vv^\star}_{\vx,\vy,t})_\#P_X(\vx,\cdot)=q_{t\mid\vx,\vy}.
 \label{eq:conditional-marginal}
\end{equation}
In particular, $(\Phi^{\vv^\star}_{\vx,\vy})_\#P_X(\vx,\cdot)=P_Y(\vy,\cdot)$.
\end{proposition}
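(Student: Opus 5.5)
The plan is to reduce the claim to the generic flow-matching identity \eqref{eq:generic-flow-matching}, applied conditionally on the current states, and then to invoke uniqueness for the continuity equation. Fix $(\vx,\vy)$ and set $\mu_0=P_X(\vx,\cdot)$, $\mu_1=P_Y(\vy,\cdot)$. Both have finite second moments by Assumption~\ref{ass:flow-regularity}. By \eqref{eq:conditional-endpoints}, conditionally on $\vx_i=\vx,\vy_i=\vy$ the endpoints $\vx^+\sim\mu_0$ and $\vy^+\sim\mu_1$ are independent. Hence $q_{t\mid\vx,\vy}$ is exactly the interpolation law of the generic construction, and $\vv_t^\star(\cdot,\vx,\vy)$ in \eqref{eq:population-velocity} is a version of the generic velocity $\E[\vu_1-\vu_0\mid\vz_t]$. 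To make this rigorous, I would work with a regular conditional distribution given $(\vx_i,\vy_i)$ and note that conditioning on $(\vz_t,\vx_i,\vy_i)$ and then fixing $(\vx,\vy)$ agrees, for almost every $(\vx,\vy)$, with the conditional expectation under the product measure $\mu_0\otimes\mu_1$. Alternatively, one can simply read \eqref{eq:population-velocity} as that product-measure conditional expectation, which is the intended meaning.

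\textbf{Step 1 (weak continuity equation).} For smooth compactly supported $\varphi$, write
$\frac{\dd}{\dd t}\E\varphi(\vz_t)=\E\ip{\nabla\varphi(\vz_t)}{\vy^+-\vx^+}$. Differentiation under the expectation is justified because $\nabla\varphi$ is bounded and $\vy^+-\vx^+$ is integrable. Next, apply the tower property with respect to $\vz_t$ to replace $\vy^+-\vx^+$ by $\vv_t^\star(\vz_t,\vx,\vy)$. This shows that $t\mapsto q_{t\mid\vx,\vy}$ is a narrowly continuous distributional solution of $\partial_t q_t+\nabla\cdot(\vv_t^\star q_t)=0$ with $q_0=\mu_0$. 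In addition, $\int_0^1\int\norm{\vv_t^\star}\dd q_t\dd t\le\E\norm{\vy^+-\vx^+}<\infty$ by Jensen's inequality.

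\textbf{Step 2 (flow solves the same equation).} Under the Lipschitz and linear-growth bounds \eqref{eq:flow-regularity}, with integrable $L_t$ and $G_t$, the ODE \eqref{eq:conditional-ode} has a unique global solution by \citet[Lemma~8.1.4]{ambrosio2008gradient}. Gronwall's inequality gives $\norm{\Phi_t(\va)}\le C(1+\norm{\va})$, so $(\Phi_t)_\#\mu_0$ has finite second moments. Differentiating $\varphi(\Phi_t(\va))$ along trajectories shows that $(\Phi_t)_\#\mu_0$ also solves the continuity equation with the same initial datum.

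\textbf{Step 3 (uniqueness).} I would invoke \citet[Proposition~8.1.7]{ambrosio2008gradient}: when the velocity satisfies the stated local Lipschitz/integrability hypotheses, every narrowly continuous solution with $\int_0^1\int\norm{\vv_t}\dd q_t\dd t<\infty$ is the push-forward of $q_0$ by the flow. This yields $(\Phi^{\vv^\star}_{\vx,\vy,t})_\#P_X(\vx,\cdot)=q_{t\mid\vx,\vy}$ for all $t\in[0,1]$. Setting $t=1$ and using $\vz_1=\vy^+$ gives $q_{1\mid\vx,\vy}=P_Y(\vy,\cdot)$.

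I expect the main obstacle to be checking the hypotheses of the uniqueness theorem with only integrable-in-$t$ constants, together with the measure-theoretic identification of the conditional velocity at a fixed $(\vx,\vy)$. The first concerns the narrow continuity of $t\mapsto q_t$ and the integrability of $\sup_{B}\norm{\vv_t}+\mathrm{Lip}(\vv_t,B)$. These follow from $L_t,G_t\in L^1[0,1]$ on bounded sets, since the Ambrosio--Gigli--Savar\'e condition only requires local bounds in $L^1(0,1)$.
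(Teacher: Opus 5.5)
Your proposal is correct and takes essentially the same route as the paper. You use the tower property with respect to $\vz_t$ to show that the conditional interpolation law $q_{t\mid\vx,\vy}$ solves the continuity equation with velocity $\vv^\star_t(\cdot,\vx,\vy)$ and initial law $P_X(\vx,\cdot)$, then invoke uniqueness \citep[Proposition~8.1.7]{ambrosio2008gradient} to identify it with the ODE push-forward and read off $t=1$; the extra checks you supply (narrow continuity, $\int_0^1\int\norm{\vv_t^\star}\dd q_t\dd t<\infty$, the local $L^1$ bounds, and fixing a version of the conditional velocity at given $(\vx,\vy)$) are details the paper leaves implicit.
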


During recursive generation, the current target state depends on past
source states. We therefore need the transport identity to remain valid
after conditioning on this shared history. The source Markov property ensures that the next source state still has law
$P_X(\vx'_i,\cdot)$, to which we can apply the transport map.

\begin{lemma}[Conditional transitions during generation]\label{lem:current-state-sufficiency}
Under Model~\ref{model:markov-transport}, let
$\widehat{\vy}_0\sim\widehat\mu_0$ be independent of $\vx'$, and suppose
each $\widehat{\vy}_{i+1}$ is a fixed measurable function of
$(\vx'_{0:i+1},\widehat{\vy}_{0:i})$. Then, for every $i\geq0$,
\mbox{$\Law(\vx'_{i+1}\mid\vx'_{0:i},\widehat{\vy}_{0:i})=P_X(\vx'_i,\cdot)$}
almost surely.
If the updates follow \eqref{eq:population-sampler} with maps satisfying
\eqref{eq:desired-kernel} at every visited state pair, then, for every
$i\geq0$ and measurable $B\subseteq\R^d$, almost surely,
\begin{equation}
 \Pp\left(\widehat{\vy}_{i+1}\in B\mid
 \vx'_{0:i},\widehat{\vy}_{0:i}\right)=P_Y(\widehat{\vy}_i,B).
 \label{eq:current-state-sufficiency}
\end{equation}
\end{lemma}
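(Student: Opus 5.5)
The first claim follows from the source Markov property together with the fact that $\widehat{\vy}_{0:i}$ carries no information about $\vx'_{i+1}$ beyond $\vx'_{0:i}$. I will argue by induction on $i$ that $\widehat{\vy}_{0:i}=F_i(\vx'_{0:i},\widehat{\vy}_0)$ for a fixed measurable $F_i$. This holds because each update is a fixed measurable function of earlier quantities.

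Since $\widehat{\vy}_0$ is independent of the whole process $\vx'$, the pair $(\vx'_{0:i},\widehat{\vy}_0)$ generates a $\sigma$-algebra $\mathcal G_i$. Conditioning on $\mathcal G_i$ leaves the conditional law of $\vx'_{i+1}$ equal to $\Law(\vx'_{i+1}\mid\vx'_{0:i})=P_X(\vx'_i,\cdot)$. Concretely, take a test event $\{\vx'_{0:i}\in A,\widehat{\vy}_0\in C\}$ and $B$ measurable. By independence,
\[
\Pp(\vx'_{i+1}\in B,\vx'_{0:i}\in A,\widehat{\vy}_0\in C)=\widehat\mu_0(C)\,\E\bigl[P_X(\vx'_i,B)\mathbbm{1}\{\vx'_{0:i}\in A\}\bigr].
\]
A $\pi$--$\lambda$ argument extends this from rectangles to all of $\mathcal G_i$.

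The $\sigma$-algebra $\sigma(\vx'_{0:i},\widehat{\vy}_{0:i})$ is sandwiched between $\sigma(\vx'_{0:i})$ and $\mathcal G_i$, since $\widehat{\vy}_{0:i}$ is $\mathcal G_i$-measurable. The function $P_X(\vx'_i,B)$ is measurable with respect to the smaller $\sigma$-algebra. By the tower property, it is therefore also the conditional probability given the intermediate $\sigma$-algebra. This gives the first claim. The only delicate point is this sandwich step: one cannot condition directly on $\widehat{\vy}_{0:i}$ without first passing through $\widehat{\vy}_0$.

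For \eqref{eq:current-state-sufficiency}, write $\mathcal F_i=\sigma(\vx'_{0:i},\widehat{\vy}_{0:i})$. Under \eqref{eq:population-sampler}, $\widehat{\vy}_{i+1}=\Phi_{\vx'_i,\widehat{\vy}_i}(\vx'_{i+1})$, and $(\vx'_i,\widehat{\vy}_i)$ is $\mathcal F_i$-measurable. I will use the standard disintegration, or freezing, lemma. This requires $(\vx,\vy,\va)\mapsto\Phi_{\vx,\vy}(\va)$ to be jointly measurable, which I take as part of the measurability hypothesis. Then
\[
\Pp(\widehat{\vy}_{i+1}\in B\mid\mathcal F_i)=\int \mathbbm{1}\{\Phi_{\vx'_i,\widehat{\vy}_i}(\va)\in B\}\,P_X(\vx'_i,\dd\va),
\]
using the first claim for the conditional law of $\vx'_{i+1}$. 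The right-hand side equals $((\Phi_{\vx'_i,\widehat{\vy}_i})_\#P_X(\vx'_i,\cdot))(B)$. By \eqref{eq:desired-kernel} at the visited pair, this is $P_Y(\widehat{\vy}_i,B)$ almost surely. I expect the main care to lie in the joint measurability needed for the freezing lemma, not in any substantive obstacle.
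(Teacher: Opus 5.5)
Your proposal is correct and follows the paper's proof: condition first on $(\vx'_{0:i},\widehat{\vy}_0)$ using independence and the source Markov property, transfer this to the generated history, and push $P_X(\vx'_i,\cdot)$ forward through $\Phi_{\vx'_i,\widehat{\vy}_i}$ using \eqref{eq:desired-kernel}; your $\pi$--$\lambda$ and freezing-lemma details, including joint measurability of $(\vx,\vy,\va)\mapsto\Phi_{\vx,\vy}(\va)$, make explicit what the paper leaves implicit. The sandwich step is not actually delicate: $\widehat{\vy}_0$ is part of the generated history, so $\sigma(\vx'_{0:i},\widehat{\vy}_{0:i})=\sigma(\vx'_{0:i},\widehat{\vy}_0)$ exactly, and your tower argument collapses to this $\sigma$-algebra equality, which is how the paper argues.
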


The generated process thus has the target transition kernel conditional
on its full history. Its initial distribution determines its trajectory law; choosing $\pi_Y$ gives the stationary target law.
Appendix~\ref{app:population-proofs} gives the conditioning argument
and proves the following consequence.

\begin{proposition}[Exact trajectory generation]\label{prop:exact-trajectory}
Under Model~\ref{model:markov-transport}, suppose $P_Y$ has stationary
distribution $\pi_Y$, draw $\widehat{\vy}_0\sim\pi_Y$ independently of
$\vx'$, and generate by \eqref{eq:population-sampler}. If
\eqref{eq:desired-kernel} holds at every visited current-state pair, then
$\widehat{\vy}_{0:n}$ has the stationary target trajectory law for every $n\geq0$.
\end{proposition}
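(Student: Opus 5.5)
The plan is to prove that the joint law of $\widehat{\vy}_{0:n}$ equals the law of a stationary target chain started from $\pi_Y$, namely
\[
\pi_Y(\dd\vy_0)\,P_Y(\vy_0,\dd\vy_1)\cdots P_Y(\vy_{n-1},\dd\vy_n).
\]
The argument is an induction on $n$ that only uses the conditional kernel identity \eqref{eq:current-state-sufficiency} from Lemma~\ref{lem:current-state-sufficiency}.

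The update \eqref{eq:population-sampler} writes $\widehat{\vy}_{i+1}$ as a fixed measurable function of $(\vx'_i,\widehat{\vy}_i,\vx'_{i+1})$, hence of $(\vx'_{0:i+1},\widehat{\vy}_{0:i})$. In addition, $\widehat{\vy}_0\sim\pi_Y$ is independent of $\vx'$, and \eqref{eq:desired-kernel} holds at every visited pair. So the hypotheses of Lemma~\ref{lem:current-state-sufficiency} are met, and for every $i$ and measurable $B$,
\[
\Pp(\widehat{\vy}_{i+1}\in B\mid\vx'_{0:i},\widehat{\vy}_{0:i})=P_Y(\widehat{\vy}_i,B)\quad\text{a.s.}
\]

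The key step is to pass from conditioning on $(\vx'_{0:i},\widehat{\vy}_{0:i})$ to conditioning on $\widehat{\vy}_{0:i}$ alone. Since $\sigma(\widehat{\vy}_{0:i})\subseteq\sigma(\vx'_{0:i},\widehat{\vy}_{0:i})$, the tower property gives
\[
\Pp(\widehat{\vy}_{i+1}\in B\mid\widehat{\vy}_{0:i})=\E\bigl[P_Y(\widehat{\vy}_i,B)\mid\widehat{\vy}_{0:i}\bigr]=P_Y(\widehat{\vy}_i,B).
\]
The last equality holds because $P_Y(\widehat{\vy}_i,B)$ is already $\sigma(\widehat{\vy}_{0:i})$-measurable, which uses measurability of $\vy\mapsto P_Y(\vy,B)$.

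Now induct on $n$. The base case $n=0$ is immediate because $\widehat{\vy}_0\sim\pi_Y$. For the step, take bounded measurable $f$ on $(\R^d)^{n+1}$ and write
\[
\E[f(\widehat{\vy}_{0:n+1})]=\E\Bigl[\int f(\widehat{\vy}_{0:n},\vw)\,P_Y(\widehat{\vy}_n,\dd\vw)\Bigr].
\]
This is the disintegration from the display above, extended from indicators to bounded $f$ by a monotone-class argument. Applying the induction hypothesis to the inner function of $\widehat{\vy}_{0:n}$ yields the product-form law for $n+1$ steps. Stationarity of each marginal then follows from $\pi_YP_Y=\pi_Y$, so the output is exactly the stationary target trajectory law.

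I expect the main obstacle to be the measure-theoretic bookkeeping rather than any real difficulty. Two points need care: checking that the conditional identity holds almost surely only at visited pairs is enough, and extending the kernel identity from sets $B$ to joint test functions $f$. I would handle both with a $\pi$–$\lambda$ argument on rectangles $B_0\times\cdots\times B_{n+1}$. These rectangles generate the product $\sigma$-algebra, and on them both sides factor through the iterated conditional probabilities.
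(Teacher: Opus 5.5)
Your proposal is correct and follows the paper's proof. You invoke Lemma~\ref{lem:current-state-sufficiency}, use the tower property to get $\Law(\widehat{\vy}_{i+1}\mid\widehat{\vy}_{0:i})=P_Y(\widehat{\vy}_i,\cdot)$, and iterate from $\widehat{\vy}_0\sim\pi_Y$ to obtain the product-form joint law, supplying measure-theoretic detail the paper leaves implicit (minor slip: the test function in your induction step should live on $(\R^d)^{n+2}$, not $(\R^d)^{n+1}$).
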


\subsection{Velocity estimation from two dependent trajectories}\label{sec:estimation}

Algorithm~\ref{alg:transport} estimates the population velocity by regressing next-state
displacements on interpolated states and the current-state pair.
The statistical difficulty is dependence between successive regression pairs from the two trajectories. For each observed transition and flow time $t\in[0,1]$, set
\begin{equation}
 \vz_i(t)=(1-t)\vx_{i+1}+t\vy_{i+1},\quad
 \vr_i=\vy_{i+1}-\vx_{i+1},\quad
 \vh_i(t)=(t,\vz_i(t),\vx_i,\vy_i).
 \label{eq:training-data}
\end{equation}

\paragraph{Regression noise.}
Let $(\vh,\vr)$ be obtained from a fresh stationary transition pair and an
independent flow time $t\sim\Unif(0,1)$ using \eqref{eq:training-data}.
The population velocity in \eqref{eq:population-velocity} satisfies
$\vv^\star(\vh)=\E[\vr\mid\vh]$. Define the regression noise by
$\boldsymbol\xi=\vv^\star(\vh)-\vr$ and
$\boldsymbol\xi_i(t)=\vv^\star(\vh_i(t))-\vr_i$.
The following assumption controls this regression noise.

\begin{assumption}[Conditional sub-Gaussian noise]\label{ass:conditional-noise}
There exists $\nu\geq0$ such that, almost surely, for every
$\va\in\R^d$ and $\theta\in\R$,
\begin{equation}
 \E\left[e^{\theta\ip{\va}{\boldsymbol\xi}}\mid\vh\right]
 \leq e^{\theta^2\nu^2\norm{\va}^2/2}.
 \label{eq:noise}
\end{equation}
By stationarity, the same bound holds for $(\vh_i(t),\vr_i)$ when $t$ is
drawn independently from $\Unif(0,1)$, for every $i$.
\end{assumption}

\begin{remark}
Assumption~\ref{ass:conditional-noise} holds with $\nu=B$ whenever
$\norms{\vr}\leq B$ almost surely. This includes source and target processes
on bounded state spaces. It also holds for transitions with uniformly
bounded additive noise and for Gaussian transition kernels with uniformly
bounded covariance; both allow nonlinear transition means.
Appendix~\ref{app:noise-examples} verifies these cases.
\end{remark}

We fit this conditional mean over a nonempty finite class $\mathcal V$ of measurable functions
$\vv:[0,1]\times\R^d\times\R^d\times\R^d\to\R^d$.
The finite-class assumption reflects finite-precision storage of neural-network parameters. For a fixed architecture with $p$
parameters, each taking at most $q$ values, we have $|\mathcal V|\leq q^p$
and $\log|\mathcal V|\leq p\log q$.
\citet[Section~3]{kumar2025dimensionfree} use the same interpretation.
We average the regression loss over flow time:
\begin{equation}
 \widehat L(\vv)=\frac1m\sum_{i=0}^{m-1}\int_0^1
 \norm{\vv(\vh_i(t))-\vr_i}^2\dd t,
 \qquad \vv\in\mathcal V.
 \label{eq:integrated-empirical-loss}
\end{equation}
Uniformly sampling transition indices and fresh independent flow times gives unbiased minibatch estimates, as in
\citet[Remark~2]{kumar2025dimensionfree}. Appendix~\ref{app:experiments}
specifies this procedure for our experiments. The statistical analysis
concerns empirical minimization; finite-budget optimization is a separate
source of error.
We minimize this loss and measure squared error relative to the population velocity:
\begin{equation}
 \widehat{\vv}\in\arg\min_{\vv\in\mathcal V}\widehat L(\vv),
 \qquad
 R(\vv,\vv^\star)=\E\norm{\vv(\vh)-\vv^\star(\vh)}^2,
 \label{eq:erm-risk}
\end{equation}
where $\vh$ is a fresh draw from the stationary training distribution.
To relate empirical fit to population error, we assume realizability and a fourth-moment bound on prediction errors.

\begin{assumption}[Realizability and fourth-moment bound]\label{ass:velocity-class}
The class is realizable, meaning $\vv^\star\in\mathcal V$. There is a
constant $C_{4\to2}\geq1$ such that, for every $\vv\in\mathcal V$,
$R(\vv,\vv^\star)<\infty$ and
\begin{equation}
 \left(\E\norm{\vv(\vh)-\vv^\star(\vh)}^4\right)^{1/4}
 \leq C_{4\to2}\sqrt{R(\vv,\vv^\star)}.
\label{eq:l4-l2}
\end{equation}
\end{assumption}

The moment comparison parallels \citet[Assumption~2]{kumar2025dimensionfree}
and the assumption in \citet[Section~14.2]{wainwright2019high},
here under the joint training distribution.
Appendix~\ref{app:gaussian} verifies realizability and $C_{4\to2}^4\leq8$
for the finite Gaussian velocity class.

To account for training dependence, note that the function $t\mapsto(\vh_i(t),\vr_i)$ is determined by the transition observation
$\vo_i=(\vx_i,\vx_{i+1},\vy_i,\vy_{i+1})$.
Consecutive observations $\vo_i$ and $\vo_{i+1}$ contain the same states
$\vx_{i+1}$ and $\vy_{i+1}$. Under Model~\ref{model:stationary-training},
this sequence is stationary and Markov by
Lemma~\ref{lem:paired-transition-mixing}. With stationary law $\pi_O$,
its observation mixing profile and mixing time are
\begin{equation}
 \beta_\ell=\E_{\vo_0\sim\pi_O}
 \norm{\Law(\vo_\ell\mid\vo_0)-\pi_O}_{\TV},
 \qquad
 \tau_{\rm mix}(\alpha)=\min\{\ell\geq1:\beta_\ell\leq\alpha\},
 \label{eq:observation-mixing}
\end{equation}
for integers $\ell\geq1$ and $\alpha\in(0,1)$, specializing
\eqref{eq:mixing-profile} to $(\vo_i)$.
Given $\vo_0$, the states $(\vx_\ell,\vy_\ell)$ evolve from
$(\vx_1,\vy_1)$ for $\ell-1$ steps. Independence bounds their joint
total-variation distance by the sum of the two chains' distances; drawing
the next states cannot increase that distance.
Lemma~\ref{lem:paired-transition-mixing} in
Appendix~\ref{app:estimation-proofs} therefore gives, for every
$\ell\geq2$ and $\alpha\in(0,1)$,
\begin{equation}
 \beta_\ell\leq\beta_{\ell-1}^X+\beta_{\ell-1}^Y,
 \qquad
 \tau_{\rm mix}(\alpha)
 \leq1+\max\{\tau_X(\alpha/2),\tau_Y(\alpha/2)\}.
 \label{eq:paired-transition-mixing}
\end{equation}
The slower process controls this upper bound; the mixing times may differ. We combine the noise, moment, and mixing assumptions. The resulting risk bound scales as $K/m$, up to class complexity
and logarithmic factors, where $K$ is the observation mixing time at
tolerance $\delta/(3m)$.

\begin{theorem}[Velocity estimation from dependent trajectories]\label{thm:velocity-estimation}
Under Model~\ref{model:stationary-training}, assume
Assumptions~\ref{ass:velocity-class} and~\ref{ass:conditional-noise}. Fix
$\delta\in(0,1)$ and set
$K=\tau_{\rm mix}(\delta/(3m))$ for the transition-observation
sequence $(\vo_i)$, as defined in \eqref{eq:observation-mixing}. Suppose
\small
\begin{equation}
 m\geq128C_{4\to2}^4K
 \log\left(\frac{6K|\mathcal V|}{\delta}\right).
 \label{eq:estimation-sample-size}
\end{equation}
\normalsize
Then the minimizer $\widehat{\vv}$ of \eqref{eq:integrated-empirical-loss}
satisfies, with probability at least $1-\delta$ over the two training trajectories,
\begin{equation}
 R(\widehat{\vv},\vv^\star)
 \leq128C_{4\to2}^4\nu^2
 \frac{K\log(6K|\mathcal V|/\delta)}m.
 \label{eq:estimation-bound}
\end{equation}
\end{theorem}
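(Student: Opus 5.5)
We follow the blocking-and-coupling strategy of \citet{nagaraj2020least} and \citet{kumar2025dimensionfree}, combined with a one-sided small-ball (lower-tail) argument for the quadratic term and a sub-Gaussian bound for the cross term. The first step is the basic inequality. Since $\vv^\star\in\mathcal V$ (Assumption~\ref{ass:velocity-class}), we have $\widehat L(\widehat{\vv})\leq\widehat L(\vv^\star)$. Write $\vr_i=\vv^\star(\vh_i(t))-\boldsymbol\xi_i(t)$ and set $\Delta_{\vv}=\vv-\vv^\star$. Expanding the squares gives, for $\vv=\widehat{\vv}$,
\begin{equation*}
 \frac1m\sum_{i=0}^{m-1}\int_0^1\norm{\Delta_{\vv}(\vh_i(t))}^2\dd t
 \leq -\frac2m\sum_{i=0}^{m-1}\int_0^1\ip{\Delta_{\vv}(\vh_i(t))}{\boldsymbol\xi_i(t)}\dd t .
\end{equation*}
It then suffices to show, uniformly over $\vv\in\mathcal V$ with $R(\vv,\vv^\star)$ larger than the target bound, that the left side is at least $R/2$ while the right side is below $R/2$.

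The second step removes the dependence. Split the indices $\{0,\dots,m-1\}$ into $K$ residue classes modulo $K$, each containing about $m/K$ observations spaced $K$ apart. By the Berbee coupling lemma applied to the stationary Markov chain $(\vo_i)$ of Lemma~\ref{lem:paired-transition-mixing}, each class can be coupled with an i.i.d.\ sequence from $\pi_O$. The probability that the coupling fails is at most $(m/K)\beta_K\leq \delta/(3K)$, because $\beta_K\leq\delta/(3m)$ by the choice of $K$. A union bound over the $K$ classes gives total failure probability at most $\delta/3$. On the coupling event, each class average is an average of i.i.d.\ terms, and the full empirical sum is the mean of the $K$ class averages. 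Hence a lower bound on each class average transfers to the full average, and an upper bound likewise.

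The third step treats the two terms, working within one class of size $m/K$ with i.i.d.\ observations. For the quadratic term, let $f_{\vv}(\vo)=\int_0^1\norm{\Delta_{\vv}(\vh(t))}^2\dd t\geq0$. It has mean $R$, and its second moment is at most $C_{4\to2}^4R^2$ by Jensen and \eqref{eq:l4-l2}. The one-sided Bernstein bound for nonnegative variables, $\Pp(\bar f\leq R/2)\leq\exp(-(m/K)/(8C_{4\to2}^4))$, together with a union bound over $\mathcal V$ and the $K$ classes, gives failure probability at most $\delta/3$ under \eqref{eq:estimation-sample-size}.

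For the cross term, condition on the $\vh$'s. By Assumption~\ref{ass:conditional-noise}, the noise is conditionally sub-Gaussian. The $t$-integral needs care: the noise $\boldsymbol\xi_i(t)$ depends on $t$ while $\vr_i$ does not. Jensen's inequality applied to the exponential of the time-average, or equivalently Assumption~\ref{ass:conditional-noise} applied with independent uniform $t$, makes each summand sub-Gaussian with variance proxy $\nu^2\int_0^1\norm{\Delta_{\vv}}^2\dd t$. The class sum is therefore bounded by $\nu\sqrt{2(\sum f_{\vv})\log(3K|\mathcal V|/\delta)}$ with probability at least $1-\delta/3$.

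Combining the pieces on the intersection of the three events, which has probability at least $1-\delta$, gives $\bar f\leq 2\nu\sqrt{2\bar f\log(\cdot)K/m}$. This yields $\bar f\lesssim \nu^2K\log/m$, and with $R\leq2\bar f$ we obtain \eqref{eq:estimation-bound} after constant bookkeeping, which is where the factor $128$ arises. The main obstacle is the cross term. Two points need the most care there. First, the conditional-independence structure after coupling: the noise of observation $i$ must be conditionally sub-Gaussian given all the $\vh$'s in its class. This holds for the i.i.d.\ surrogate, but the sub-Gaussianity has to be verified for the time-integrated noise. Second, the cross-term bound must transfer from each class to the full sum without losing more than the factor $K$.
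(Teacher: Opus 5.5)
Your split into a quadratic lower tail (Bernstein for nonnegative variables) and a separately controlled cross term is a genuinely different route from the paper's. The coupling and quadratic parts are fine. The cross-term step, however, does not go through as written for the time-integrated loss.

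\textbf{The gap in the cross term.} You cannot ``condition on the $\vh$'s''. The path $t\mapsto\vh_i(t)$ contains $\vz_i(0)=\vx_{i+1}$ and $\vz_i(1)=\vy_{i+1}$, hence $\vr_i$, so conditioning on it (already at two flow times) makes the noise deterministic. Conditioning on $\vh_i(t)$ at a single $t$ does not make $f_{\vv}(\vo_i)=\int_0^1\norm{\Delta_{\vv}(\vh_i(s))}^2\dd s$ measurable. So there is no $\sigma$-field given which $X_i=\int_0^1\ip{\Delta_{\vv}(\vh_i(t))}{\boldsymbol\xi_i(t)}\dd t$ is centered sub-Gaussian with proxy $\nu^2f_{\vv}(\vo_i)$. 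What Jensen actually gives is weaker. Condition on $\vh_i(t)$ at each fixed $t$, then average the exponent over $t$; this yields only the unconditional inequality $\E\exp\{\theta X_i-\theta^2\nu^2f_{\vv}(\vo_i)/2\}\leq1$ for each fixed $\theta\in\R$. With a random normalizer, this does not by itself give the self-normalized bound $\nu\sqrt{2(\sum f_{\vv})\log(\cdot)}$. Your ``equivalently, with independent uniform $t$'' is valid only for one sampled time per transition, as in Corollary~\ref{cor:sampled-flow-times}, not for \eqref{eq:integrated-empirical-loss}.

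\textbf{The repair.} You never need an optimized $\theta$. Fix $\theta=1/(2\nu^2)$, multiply over the i.i.d.\ copies in $I_a$, and apply Markov's inequality with a union bound over $\mathcal V$ and the $K$ classes. This gives $-\sum_{i\in I_a}X_i\leq2\nu^2\log(3K|\mathcal V|/\delta)+S_a/4$, where $S_a=\sum_{i\in I_a}f_{\vv}(\vo_i)$. Sum over classes and insert into the basic inequality $S\leq-2\sum_iX_i$ to get $S\leq8\nu^2K\log(3K|\mathcal V|/\delta)$. Your Bernstein bound $S\geq mR/2$ then yields $R(\widehat{\vv},\vv^\star)\leq16\nu^2K\log(3K|\mathcal V|/\delta)/m$. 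This implies \eqref{eq:estimation-bound} since $C_{4\to2}\geq1$, and no Cauchy--Schwarz across classes is needed.

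\textbf{Comparison with the paper.} The paper never separates the two terms. Lemma~\ref{lem:averaged-excess} bounds the exponential moment of the whole excess loss, $\E e^{-\lambda\avgexcess_i(\vv)}\leq e^{-\riskdecay(R)}$, in three moves:
\begin{enumerate}
\item condition on $\vh$ at one random time, giving $e^{-G/(8\nu^2)}$;
\item use the Paley--Zygmund bound $\Pp(G\geq R/2)\geq1/(4C_{4\to2}^4)$;
\item apply Jensen over flow time.
\end{enumerate}
It then combines residue classes by H\"older rather than per-class union bounds, and finishes with Markov's inequality. Handling the small-ball event inside the same exponential moment is why $C_{4\to2}^4$ multiplies $\nu^2$ in the paper's rate. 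Your repaired split confines $C_{4\to2}^4$ to the burn-in condition and gives a sharper constant. The paper's single lemma is more compact and transfers verbatim to sampled flow times and to approximate minimizers.
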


The same guarantee holds when each time integral is replaced by the average
of any fixed number $J\geq1$ of independent uniform flow-time samples per
transition, with probability also over these samples
(Corollary~\ref{cor:sampled-flow-times}). The sample size $m$ counts observed
transitions in both cases.

To see how these ingredients enter, put
$\vg_i(\vv,t)=\vv(\vh_i(t))-\vv^\star(\vh_i(t))$ and decompose the excess loss:
\begin{equation}
 \widehat L(\vv)-\widehat L(\vv^\star)
 =\frac1m\sum_i\int_0^1
 \left(\norm{\vg_i(\vv,t)}^2
       +2\ip{\vg_i(\vv,t)}{\boldsymbol\xi_i(t)}\right)\dd t.
 \label{eq:empirical-excess-decomposition}
\end{equation}
At the empirical minimizer, the left side is nonpositive by realizability.
Conditional sub-Gaussianity and the fourth-moment assumption bound the
exponential moment of the negative excess loss at a random flow time.
Jensen's inequality preserves this bound when we average over flow time.
To handle dependence, we split the indices into subsequences
$a,a+K,a+2K,\ldots$ for
$a\in\{0,\ldots,K-1\}$. We couple each subsequence
to independent observations and bound the probability of any mismatch.
H\"older's inequality combines the exponential bounds for these subsequences
without assuming independence between them. In Appendix~\ref{app:estimation-proofs}, we
give the full proof and derive the sample complexity in \eqref{eq:intro-sample-complexity}
by combining \eqref{eq:estimation-sample-size} and \eqref{eq:estimation-bound}.

To establish the necessity of mixing-dependent sample complexity, we construct two models that no estimator can reliably distinguish from a short
trajectory. We do so by
adapting the unvisited-state argument of
\citet[Appendix~C.1]{nagaraj2020least}: a slowly mixing chain
may avoid the region where two candidate velocities differ, leaving their
models indistinguishable on the observed trajectories.

\begin{proposition}[Mixing-time lower bound]\label{prop:mixing-lower-bound}
For every integer $\tau\geq1$ and $\epsilon>0$, there exist two instances of
Model~\ref{model:stationary-training} which satisfy
Assumptions~\ref{ass:conditional-noise}, \ref{ass:flow-regularity}, and
\ref{ass:velocity-class} with universal constants $\nu,C_{4\to2}$,
share a class $\mathcal V$ of two velocities, and both have observation
mixing time $\Theta(\tau)$, such that
for any $\delta\in(0,1/(2\pi))$ and any estimator
$\widetilde{\vv}\in\mathcal V$ which achieves
$R(\widetilde{\vv},\vv^\star)<\epsilon$ with probability at least $1-\delta$
under both instances using $m$ observed transitions from each training
trajectory,
\begin{equation}
 m\geq\frac13\tau_{\rm mix}\log\frac1{2\pi\delta}.
 \label{eq:mixing-lower-sample}
\end{equation}
Here $\tau_{\rm mix}$ denotes either instance's observation mixing time
at tolerance $1/4$.
\end{proposition}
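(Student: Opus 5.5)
We construct two instances that share everything except the behaviour of the target chain in a region that a slowly mixing chain rarely visits. We then argue by indistinguishability on the event that this region is never entered.

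\textbf{Construction.} Take $d=1$. The source chain is i.i.d.\ $\Normal(0,1)$, so $P_X(\vx,\cdot)=\Normal(0,1)$ and it mixes in one step. The target chain is built from a hidden two-state ``regime'' encoded in the sign of the state, with $p=1/\tau$:
\begin{itemize}[nosep]
\item from a state with $\vy\le 0$, the chain stays in the negative regime with probability $1-p$, drawing $\vy^+\sim\Normal(-c,1)$, and otherwise jumps to the positive regime;
\item in the positive regime, the next state is $\Normal(+c,1)$ under instance~$0$ and $\Normal(+c',1)$ under instance~$1$, with $c'\neq c$.
\end{itemize}
A cleaner Gaussian variant keeps regime switching at rate $p$ while the transition mean in regime $+$ is $\mu_0$ or $\mu_1$. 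The conditional-on-current-state transition is then a mixture of Gaussians. To keep the velocity Lipschitz and sub-Gaussian, I would instead make the regime a deterministic function of $\vy$ via a thresholded mean, in the style of \citet[Appendix~C.1]{nagaraj2020least}: $\vy^+=a(\vy)+\eta$, with noise bounded or Gaussian of unit variance. The two instances differ only for $\vy$ in a set $S$ of stationary mass about $1/2$ that is entered at rate about $1/\tau$ and left at rate about $1/\tau$.

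\textbf{Verifying the assumptions.} The velocities $\vv^{\star,0},\vv^{\star,1}$ are Gaussian-endpoint flow velocities (affine in $\vz$ given $\vx,\vy$), as in Appendix~\ref{app:gaussian}. This gives Assumption~\ref{ass:flow-regularity}, Assumption~\ref{ass:conditional-noise} with $\nu=O(1)$ (bounded covariance), and realizability for $\mathcal V=\{\vv^{\star,0},\vv^{\star,1}\}$. Since the two velocities differ by a bounded, nondegenerate quantity on $S$, the fourth-moment comparison holds with universal $C_{4\to2}$. I would also choose the separation so that $R(\vv^{\star,1-j},\vv^{\star,j})\ge\epsilon$; since $\epsilon$ is given, I would scale the mean gap up. The observation mixing time is $\Theta(\tau)$: the lower bound comes from the regime persistence, and the upper bound from \eqref{eq:paired-transition-mixing} together with a direct coupling of the regime chain.

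\textbf{Indistinguishability.} Start $\vy_0\sim\pi_Y$, which places mass $\ge$ const in the complement $S^c$, where the two kernels agree. Let $E$ be the event that $\vy_0,\dots,\vy_m\in S^c$, so no transition ever originates in $S$. Then $\Pp(E)\ge c_1(1-p)^m\ge c_1e^{-2m/\tau}$ for $p\le1/2$. On $E$ the joint laws of the training data coincide under both instances, since the trajectory law restricted to $E$ involves only the shared kernel on $S^c$. Because $R(\vv^{\star,1-j},\vv^{\star,j})\ge\epsilon$, correctness forces $\widetilde\vv=\vv^{\star,j}$ under instance~$j$. Writing $A=\{\widetilde\vv=\vv^{\star,0}\}$, we get
\[
\Pp_0(A^c\cap E)+\Pp_1(A\cap E)=\Pp_0(E)\le2\delta.
\]
This yields $c_1e^{-2m/\tau}\le2\delta$, that is, $m\ge\frac{\tau}{2}\log\frac{c_1}{2\delta}$.

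The constant in \eqref{eq:mixing-lower-sample} requires tuning $c_1$, $p$, and the factor in $\tau_{\rm mix}=\Theta(\tau)$. The $2\pi$ presumably arises from a Gaussian normalization in $\Pp(E)$ or $\pi_Y(S^c)$, with $\tau_{\rm mix}\le3\tau/2$ or similar giving the factor $1/3$.

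\textbf{Main obstacle.} The hard part is making the target chain simultaneously Gaussian-transition-based, with globally Lipschitz and realizable velocities, slowly mixing at a controlled rate $\Theta(\tau)$ (with matching upper and lower bounds at tolerance $1/4$), and giving an explicit lower bound on $\Pp(E)$. The first thing I would try is a linear-Gaussian AR chain $\vy^+=a\vy+\sqrt{1-a^2}\,\eta$ with $a=1-1/\tau$, with the instances differing only for $\vy>$ some threshold. Its mixing time is computable in closed form, and staying below the threshold for $m$ steps can be lower bounded through the Gaussian tail.
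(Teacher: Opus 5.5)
\textbf{Verdict: genuine gap.} Your high-level strategy matches the paper's: a two-point argument on the event $E$ that a slowly mixing chain never enters the region where the two candidate velocities differ. Your construction, however, breaks the step that makes this work.

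\textbf{The feedback problem.} In every variant you take $d=1$. The two instances must then differ in the target chain's own kernel $P_Y(\vy,\cdot)$ for $\vy\in S$. That difference feeds back into the dynamics of the same coordinate whose excursions define $E$.
\begin{itemize}
\item The stationary laws $\pi_Y^{(0)},\pi_Y^{(1)}$ then generally differ also on $S^c$, because mass leaving $S$ lands differently. Model~\ref{model:stationary-training} starts from $\vy_0\sim\pi_Y$. So the data law on $E$ is $\pi_Y^{(j)}(\dd\vy_0)\prod_{i<m}P_Y(\vy_i,\dd\vy_{i+1})$, with the shared kernel, and it still depends on $j$.
\item Hence your identity $\Pp_0(A^c\cap E)+\Pp_1(A\cap E)=\Pp_0(E)$ is unjustified. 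You would at least need a density-ratio bound between $\pi_Y^{(0)}$ and $\pi_Y^{(1)}$ on $S^c$.
\item Since $\epsilon$ is arbitrary, the mean gap must grow like $\sqrt\epsilon$. In your design this changes $\pi_Y(S)$, $\Pp(E)$ and the mixing time. So neither a universal $C_{4\to2}$ nor an observation mixing time $\Theta(\tau)$ uniform in $\epsilon$ follows.
\item Your first regime chain shows the problem directly. Return from the positive regime happens only through a Gaussian fluctuation whose probability depends on $c$ versus $c'$. Also, a draw from $\Normal(-c,1)$ can leave the negative regime anyway.
\end{itemize}

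\textbf{The bound on $\Pp(E)$.} The estimate $\Pp(E)\ge c_1(1-p)^m$ holds for an autonomous two-state switching chain. It does not hold for the Gaussian AR chain you end with. Gaussian tail bounds do not give persistence over $m$ steps.

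\textbf{The missing idea.} Put the slow chain and the instance difference in different coordinates. The paper takes $d=2$ and $\vy_i=(C_i,U_i)$, where:
\begin{itemize}
\item $C_{i+1}=aC_i+\eta_{i+1}$ with $a=e^{-1/\tau}$, identical under both instances;
\item $U_{i+1}=\theta\psi(C_i/\sigma_\tau)+\zeta_{i+1}$ with $\theta\in\{-h,h\}$, where $\psi$ is a smooth cutoff equal to $1$ on $(-\infty,-1]$ and $0$ on $[0,\infty)$;
\item $U$ never feeds back into $C$.
\end{itemize}
Then the law of $C$, $\Pp(E)$ and the mixing time are free of $\theta$ and $h$. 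Choosing $h=\sqrt{\epsilon/(\pi M_2)}$ gives separation exactly $\epsilon$ with $C_{4\to2}^4=16/(\pi M_2)$ for every $\epsilon$. Defining $U_0$ through a stationary extension $C_{-1}$ makes the two instances' data literally identical, under a common coupling, on $E=\{C_{-1},\dots,C_{m-1}\geq0\}$.

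\textbf{Where the constants come from.} Embed $(C_{-1},\dots,C_{m-1})/\sigma_\tau$ in a stationary Ornstein--Uhlenbeck process and use its persistence law. This gives $\Pp(E)\geq\frac1\pi\arcsin(e^{-m/\tau})\geq\frac1\pi e^{-m/\tau}$, which is the source of the $2\pi$. Pinsker's inequality and the bivariate Gaussian sign formula give $\tau/4\leq\tau_{\rm mix}\leq\tau+2\leq3\tau$, which is the source of the $\frac13$.
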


\begin{remark}[Comparing the upper and lower bounds]
The lower bound uses $\tau_{\rm mix}=\tau_{\rm mix}(1/4)$, whereas the
upper bound uses $K=\tau_{\rm mix}(\delta/(3m))$. We compare these rates under geometric mixing, i.e,
$\beta_\ell\leq c_0e^{-c_1\ell/\tau_{\rm mix}}$ for all $\ell\geq1$
and universal $c_0\geq1,c_1>0$. Then
$K=O(\tau_{\rm mix}\log(3m/\delta))$, so the bounds agree in mixing-time dependence up to
logarithmic factors. Appendix~\ref{app:mixing-comparison} derives this
comparison, verifies the exponential profile for our Gaussian construction,
and gives a variant under finite worst-case mixing time using total-variation
contraction \citep{levin2017markov}.
\end{remark}

% Finally, we connect estimation with generation by controlling velocity error at the
% state pairs visited during rollout. Under suitable regularity, coverage, and
% contraction assumptions stated in Theorem~\ref{thm:finite-rollout}
% (Appendix~\ref{app:stability-proofs}),
% the sample size in \eqref{eq:intro-sample-complexity} ensures that the
% exact-flow rollout satisfies
% $W_2(\Law(\widehat{\vy}_n),\pi_Y)
% \leq\rho^nW_2(\widehat\mu_0,\pi_Y)
% +e^\Lambda\sqrt{C_{\rm roll}\epsilon}/(1-\rho)$
% with probability at least $1-\delta$.
% Here $0\leq\rho<1$ is the target contraction factor, $C_{\rm roll}$ is the
% coverage factor, and $\Lambda$ is the integrated Lipschitz bound of the
% learned velocity. Appendix~\ref{app:stability-proofs} provides the detailed assumptions, statement and proofs of our result.

\section{Experiments}\label{sec:experiments}

We test Gaussian and nonlinear transition recovery and EEG image retrieval.
Lemma~\ref{lem:current-state-sufficiency} identifies the source law as
$P_X(\vx_i,\cdot)$, so conditioning also on $\vx_i$ gives the exact trajectory
guarantee of Proposition~\ref{prop:exact-trajectory}. We compare
our flow, an ablation omitting $\vx_i$, an unconditional marginal flow,
and an oracle. Marginal flow learns a single map $T$ between state
distributions and outputs $T(\vx_{i+1})$ without either context input.
Appendices~\ref{app:experiments} and~\ref{app:enigma-cfm} specify the protocols. Figure~\ref{fig:synthetic}(a--b) varies training size at $d=10$.
We measure conditional next-state error by sliced $W_2$
\citep{rowland2019orthogonal}, defined in Appendix~\ref{app:experiments}.
Both oracles transport the supplied source next state using the true transition laws. Their positive error reflects finite-sample evaluation.
% \red{see \eqref{eq:gaussian-oracle-benchmark}.}

\begin{figure}[!t]
 \centering
 \begin{subfigure}[t]{.32\linewidth}
  \centering
  \includegraphics[width=\linewidth]{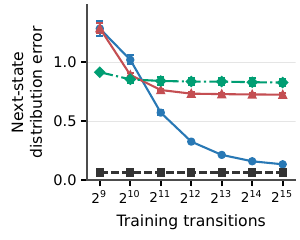}
  \caption{Gaussian transitions}
  \label{fig:synthetic-gaussian}
 \end{subfigure}\hfill
 \begin{subfigure}[t]{.32\linewidth}
  \centering
  \includegraphics[width=\linewidth]{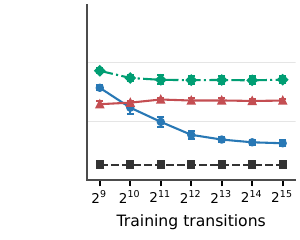}
  \caption{Nonlinear mixture transitions}
  \label{fig:synthetic-nonlinear}
 \end{subfigure}\hfill
 \begin{subfigure}[t]{.32\linewidth}
  \centering
  \includegraphics[width=\linewidth]{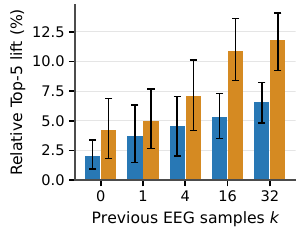}
  \caption{EEG image retrieval}
  \label{fig:enigma-main}
 \end{subfigure}
 \par\includegraphics[width=.66\linewidth]{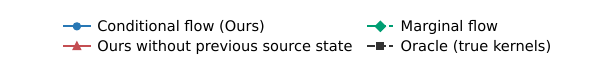}\hfill
 \includegraphics[width=.32\linewidth]{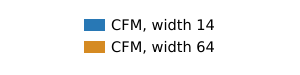}
 \caption{Synthetic transition recovery and EEG image retrieval.
 (a--b) Next-state distribution error for $d=10$ (mean $\pm$ standard error
 over three seeds; lower is better). The ablation omits $\vx_i$; marginal
 flow omits both context states; the oracle uses true kernels.
 (c) Relative Top-5 retrieval accuracy lift over ENIGMA
 with only $\mw_s$ adapted, versus EEG context length $k$; both flow
 networks precede $\mw_s$ and have two hidden layers: width 64 and width 14.
 Accuracies average three seeds per subject, then ten subjects; error bars show
 95\% subject-bootstrap percentile intervals.
 }
 \label{fig:synthetic}
 \label{fig:nonlinear-scaling}
\end{figure}

\paragraph{Gaussian transitions.}

We simulate independent processes in $\R^{10}$ with Gaussian transitions
$P_X(\vx,\cdot)=\Normal(\ma_X\vx,\mq_X)$ and
$P_Y(\vy,\cdot)=\Normal(\ma_Y\vy,\mq_Y)$, where $\ma_X,\ma_Y$ are
transition matrices and $\mq_X,\mq_Y$ are positive-definite noise covariances.
Source and target transition eigenvalue magnitudes lie in $[0.72,0.94]$
and $[0.15,0.58]$, respectively; all four matrices have independent random
orthogonal eigenbases.
Each process supplies $m\in\{2^9,2^{10},\ldots,2^{15}\}$ training transitions.
Figure~\ref{fig:synthetic}(a) shows that our error falls from $1.286$ at
$m=512$ to $0.133$ at $m=32768$, versus $0.725$ for the ablation and
$0.827$ for marginal flow. Conditioning improves recovery with more data.

\paragraph{Nonlinear mixture transitions.}

For $d=10$, let $\mR_X,\mR_Y$ be independent random orthogonal matrices.
Define $\vf(\vx)=0.85\mR_X\tanh(\mR_X^\top\vx)$,
$\vg(\vy)=\mR_Y[0.55\tanh(\mR_Y^\top\vy)+0.2\sin(2\mR_Y^\top\vy)]$,
and $\vu=d^{-1/2}\mR_Y(1,\ldots,1)^\top$, where $\tanh$ and $\sin$ act
coordinatewise. Each map applies scalar nonlinearities in rotated coordinates, then transforms back, coupling the observed coordinates. We use $P_X(\vx,\cdot)=\Normal(\vf(\vx),0.6^2\Id)$ and
$P_Y(\vy,\cdot)=\tfrac12\Normal(\vg(\vy)-0.8\sqrt d\,\vu,0.25^2\Id)
+\tfrac12\Normal(\vg(\vy)+0.8\sqrt d\,\vu,0.25^2\Id)$.
Each target noise vector uses one mixture sign, giving two modes along $\vu$.
We use the same training-size sweep as in panel (a).
In Figure~\ref{fig:synthetic}(b), conditioning improves with more data: at $m=32768$, our error is
$0.312$ versus $0.675$ for the ablation, $0.852$ for marginal flow, and
$0.131$ for the oracle. The ablation performs better at the smallest sample
sizes, while our gains taper at larger sizes under the fixed training budget.
Appendix~\ref{app:highdim-stress} reports the corresponding $d=100$ results.

\paragraph{Supervised EEG adaptation.}

In THINGS-EEG2 \citep{gifford2022things}, participants view images while
electroencephalography (EEG) records electrical activity from 63 scalp
channels. Each recording spans one second after an image appears. The task is to learn a decoder which predicts the viewed image among the 200 candidate images. Responses to the same image vary across people, motivating adaptation of a decoder for new subjects. We test whether a flow conditioned on preceding EEG measurements improves adaptation by transforming the recording before decoding.

For subject $s$ viewing image $u$, let
$\ma=\mE_s(u)=(\va_1,\ldots,\va_{250})\in\R^{63\times250}$.
Our ENIGMA baseline \citep{kneeland2026enigma,enigma2026code} predicts
$D(\mw_s(\ma))$, adapting the subject-specific temporal map $\mw_s$
while freezing the encoder and projection network $D$ pretrained on other
subjects. We predict $D(\mw_s(F_{\vtheta}(\ma)))$, jointly training the
flow $F_{\vtheta}$ and $\mw_s$. Both methods share 4,096
calibration and 1,000 validation images per subject; evaluation averages
80 EEG repetitions per image.

Training pairs $\ma$ with each reference subject's recording
$\mb_q=\mE_q(u)$ of the same image, with columns $\vb_{qj}$.
For $0\leq k\leq32$, the history
$\vh_k(\ma,j)=(\va_{j-1},\ldots,\va_{j-k},\vzero,\ldots,\vzero)
\in\R^{63\cdot32}$ contains the previous $k$ samples, with missing
samples and unused slots set to zero; define $\vh_k(\mb_q,j)$ similarly.
This keeps network size fixed as $k$ varies.
With $t\sim\Unif(0,1)$ and $\vz_{qj}=(1-t)\va_j+t\vb_{qj}$,
we combine ENIGMA's supervised objective with the mean flow-matching loss
\[
 \norm{\vv_{\vtheta}\left(t,\vz_{qj};
 \vh_k(\ma,j),\vh_k(\mb_q,j)\right)-(\vb_{qj}-\va_j)}^2.
\]
At inference, we generate $\widehat{\mb}=F_{\vtheta}(\ma)$ sequentially
by integrating from each $\va_j$, using previously generated target
history $\vh_k(\widehat{\mb},j)$ and holding both histories fixed during
each solve. Only the source recording is required.
Remaining details are in Appendix~\ref{app:enigma-cfm}.

Figure~\ref{fig:enigma-main} compares widths 64 and 14 at
$k\in\{0,1,4,16,32\}$. Top-5 accuracy is the frequency of ranking the viewed image among the top five candidates. At $k=32$, our width-64 model
achieves $49.05\%$ accuracy versus $43.87\%$ for the baseline, an
$11.82\%$ relative lift (95\% subject-bootstrap interval
$[9.25,14.06]\%$), up from $4.18\%$ lift at $k=0$.
The width-14 model similarly reaches $6.53\%$ lift at $k=32$.
Both widths benefit from temporal context and improve on the baseline
for all ten subjects after averaging seeds.

\section{Conclusion}

We prove that conditional flow matching transforms a fresh source trajectory
into the exact target trajectory distribution at the population level.
For a finite candidate class, we prove a sample-complexity upper bound
proportional to mixing time up to logarithmic factors, a lower bound showing
that this dependence is unavoidable, and a finite-sample rollout bound.
Our synthetic experiments show the value of conditioning on the previous
source state, and our EEG study shows improved adaptation between subjects.

Extending the analysis beyond finite classes, identifying when accuracy on
observed transitions carries over to generated trajectories, and handling
longer memory remain open directions. Generation can be slow because it
requires sequential ODE solves along the trajectory. Reducing this latency
is another interesting direction, potentially through parallel Picard
iteration \citep{shih2023parallel} or training flows to generate several
future states jointly.

\section{Acknowledgements}
SK gratefully acknowledges funding support from the Amazon AI PhD Fellowship. PS gratefully acknowledges NSF grants 2217069 and CCF-2505865.

\clearpage
\section*{Ethics statement}
Our EEG experiments reuse the publicly released THINGS-EEG2 dataset
\citep{gifford2022things}; we collect no new human-subject data. The dataset
is distributed under a CC BY 4.0 license. EEG-based image retrieval raises
privacy concerns because it seeks to infer viewed stimuli from neural
recordings. Our evaluation is limited to ranking a fixed set of candidate
images in a controlled experimental setting. Any use beyond this setting
requires separate validation and safeguards for participant consent and
neural-data privacy.

\section*{Reproducibility statement}
Appendix~\ref{app:proofs} contains the proofs, and Appendices~\ref{app:experiments} and~\ref{app:enigma-cfm} describe the experimental procedures.

\section*{AI use statement}
% Adapted from https://iclr.cc/Conferences/2027/AIPolicyForAuthors .
In this work, we used generative AI tools to explore ideas for the rollout
proof and the lower bound, assist with writing and revising proofs, and
support experiment implementation.
Additionally, we used these tools for manuscript editing, literature searches
and citation checking, code development, and figure preparation.
The authors reviewed the AI-assisted work by checking the proofs, testing
the code, and verifying the references.
The authors take responsibility for the final manuscript, mathematical
claims, code, figures, and experimental results, including content produced
with the assistance of generative AI.

% Keep references and supplementary material separate from the main text.
\clearpage
\bibliographystyle{plainnat}
\bibliography{refs}

\clearpage
\appendix
\section*{Appendix contents}
\begingroup
\newcommand{\appentry}[2]{%
  \noindent\hyperref[#1]{\ref*{#1}\quad #2}\dotfill\pageref{#1}\par}
\newcommand{\appsubentry}[2]{%
  \noindent\hspace*{1em}\hyperref[#1]{\ref*{#1}\quad #2}\dotfill\pageref{#1}\par}
\appentry{app:proofs}{Proofs}
\appsubentry{app:population-proofs}{Conditional transport and trajectory generation}
\appsubentry{app:estimation-proofs}{Dependent velocity estimation}
\appsubentry{app:lower-bound}{The sample-size lower bound}
\appsubentry{app:mixing-comparison}{Comparison of mixing tolerances}
\appsubentry{app:stability-proofs}{Accuracy of generated states and trajectories}
\smallskip
\appentry{app:gaussian}{Gaussian transitions, estimation, and rollout}
\appentry{app:noise-examples}{Examples for the conditional noise assumption}
\smallskip
\appentry{app:experiments}{Experimental details}
\appsubentry{app:gaussian-experiments}{Gaussian transition experiments}
\appsubentry{app:nonlinear}{Nonlinear mixture benchmark}
\appsubentry{app:highdim-stress}{Results at dimension 100}
\smallskip
\appentry{app:enigma-cfm}{Ten-subject EEG study}
\appsubentry{app:eeg-protocol}{Protocol}
\appsubentry{app:eeg-adapters}{Adapter configurations}
\appsubentry{app:eeg-throughput}{Additional results}
\appsubentry{app:eeg-post-ws}{Flow after the temporal map}
\smallskip
\appentry{app:related-work}{Further comparisons with prior work}
\appsubentry{app:related-transport}{Sequential transport and generation}
\appsubentry{app:related-statistics}{Statistical error and dependence}
\appsubentry{app:related-rollout}{Distribution mismatch and rollout stability}
\endgroup
\clearpage

\section{Proofs}\label{app:proofs}

\subsection{Conditional transport and trajectory generation}\label{app:population-proofs}

Proposition~\ref{prop:conditional-transport} identifies the law of a single
population flow. Lemma~\ref{lem:current-state-sufficiency} verifies that its
source input retains the required conditional law during generation; together
they give Proposition~\ref{prop:exact-trajectory}.

\begin{proof}[Proof of Proposition~\ref{prop:conditional-transport}]
We show that the interpolation law solves the continuity equation, then use
uniqueness to identify it with the ODE flow law.
Condition on $\vx_i=\vx$ and $\vy_i=\vy$. Throughout this proof,
all laws and expectations are under this conditioning, which we suppress
from the notation. Let
$\vz_t=(1-t)\vx^++t\vy^+$, $q_t=\Law(\vz_t)$, and
$\vr=\vy^+-\vx^+$. For every smooth
compactly supported $\varphi:\R^d\to\R$, finite first moments and conditioning on
$\vz_t$ give
\begin{align*}
 \frac{\dd}{\dd t}\int\varphi(\vz)q_t(\dd\vz)
 &=\E\ip{\nabla\varphi(\vz_t)}{\vr}
 =\E\ip{\nabla\varphi(\vz_t)}{\E[\vr\mid\vz_t]}\\
 &=\int\ip{\nabla\varphi(\vz)}{\vv_t^\star(\vz)}q_t(\dd\vz).
\end{align*}
Thus $q_t$ solves the continuity equation with velocity $\vv_t^\star$ and
initial law $P_X(\vx,\cdot)$. To identify this solution, observe that the
ODE flow in \eqref{eq:conditional-ode} has the same velocity and initial law.
Uniqueness under Assumption~\ref{ass:flow-regularity}
\citep[Proposition~8.1.7]{ambrosio2008gradient} therefore gives
$\Law(\Phi^{\vv^\star}_{\vx,\vy,t}(\vx^+))
=q_{t\mid\vx,\vy}$. At $t=1$, this is $P_Y(\vy,\cdot)$.
\end{proof}

\begin{proof}[Proof of Lemma~\ref{lem:current-state-sufficiency}]
We first identify the source law conditional on the generated history,
then apply the transport identity to obtain the target transition.
Independence of
$\widehat{\vy}_0$ and $\vx'$, followed by the source Markov property, gives
\begin{equation*}
 \Law\left(\vx'_{i+1}\mid\vx'_{0:i},\widehat{\vy}_0\right)
 =\Law\left(\vx'_{i+1}\mid\vx'_{0:i}\right)
 =P_X(\vx'_i,\cdot).
\end{equation*}
By recursion, $\widehat{\vy}_{0:i}$ is a deterministic function of
$(\vx'_{0:i},\widehat{\vy}_0)$. Since the generated history includes
$\widehat{\vy}_0$, these two histories generate the same sigma-field:
\begin{equation*}
 \sigma(\vx'_{0:i},\widehat{\vy}_{0:i})
 =\sigma(\vx'_{0:i},\widehat{\vy}_0).
\end{equation*}
This proves the first assertion. For updates of the form
\eqref{eq:population-sampler}, the transport identity
\eqref{eq:desired-kernel} at $(\vx'_i,\widehat{\vy}_i)$ now gives, for every
measurable $B\subseteq\R^d$,
\begin{align*}
 \Pp(\widehat{\vy}_{i+1}\in B\mid\vx'_{0:i},\widehat{\vy}_{0:i})
 &=\int\mathbf1\{\Phi_{\vx'_i,\widehat{\vy}_i}(\vu)\in B\}
 P_X(\vx'_i,\dd\vu)\\
 &=P_Y(\widehat{\vy}_i,B).
\end{align*}

\end{proof}

\begin{proof}[Proof of Proposition~\ref{prop:exact-trajectory}]
By Lemma~\ref{lem:current-state-sufficiency} and the tower property,
$\Law(\widehat{\vy}_{i+1}\mid\widehat{\vy}_{0:i})
=P_Y(\widehat{\vy}_i,\cdot)$. Starting from
$\widehat{\vy}_0\sim\pi_Y$, iteration gives the joint law
\begin{equation*}
 \pi_Y(\dd\vy_0)\prod_{j=0}^{n-1}P_Y(\vy_j,\dd\vy_{j+1}),
\end{equation*}
which is the stationary target trajectory distribution.
\end{proof}

\subsection{Dependent velocity estimation}\label{app:estimation-proofs}

Lemma~\ref{lem:paired-transition-mixing} bounds the mixing of the transition
observations. Lemma~\ref{lem:coupling} then couples regularly spaced
observations to independent copies. Lemma~\ref{lem:averaged-excess} bounds
the exponential moment of each excess loss, and Lemma~\ref{lem:excess-risk}
combines these bounds to control the risk of an empirical minimizer.

\begin{lemma}[Mixing of transition observations]\label{lem:paired-transition-mixing}
Under Model~\ref{model:stationary-training}, let $(\vo_i)_{i\geq0}$ be the
transition observations in \eqref{eq:observation-mixing}. Then $(\vo_i)$ is a
stationary, time-homogeneous Markov process. For every integer $\ell\geq2$ and
$\alpha\in(0,1)$,
\begin{equation*}
 \beta_\ell\leq\beta_{\ell-1}^X+\beta_{\ell-1}^Y,
 \qquad
 \tau_{\rm mix}(\alpha)
 \leq1+\max\{\tau_X(\alpha/2),\tau_Y(\alpha/2)\}.
\end{equation*}
\end{lemma}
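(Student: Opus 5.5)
We first verify that $(\vo_i)$ is a stationary, time-homogeneous Markov process, then bound its mixing profile by the two component profiles, and finally read off the mixing-time inequality.

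\textbf{Markov property and stationarity.} Since $\vo_i=(\vx_i,\vx_{i+1},\vy_i,\vy_{i+1})$, the history $\vo_{0:i}$ generates the same sigma-field as $(\vx_{0:i+1},\vy_{0:i+1})$. The two processes are independent and each is Markov. Hence the conditional law of $\vo_{i+1}=(\vx_{i+1},\vx_{i+2},\vy_{i+1},\vy_{i+2})$ given $\vo_{0:i}$ is
\[
\delta_{\vx_{i+1}}\otimes P_X(\vx_{i+1},\cdot)\otimes\delta_{\vy_{i+1}}\otimes P_Y(\vy_{i+1},\cdot),
\]
suitably ordered. This law depends only on $\vo_i$ and not on $i$, so $(\vo_i)$ is a time-homogeneous Markov process. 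Because $\vx_0\sim\pi_X$ and $\vy_0\sim\pi_Y$ are independent, each $\vo_i$ has law
\[
\pi_O=\pi_X(\dd\vx)P_X(\vx,\dd\vx^+)\otimes\pi_Y(\dd\vy)P_Y(\vy,\dd\vy^+),
\]
so the process is stationary.

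\textbf{Mixing-profile bound.} Fix $\ell\geq2$. Given $\vo_0$, the law of $\vo_\ell$ depends only on $(\vx_1,\vy_1)$, and it is the image of
\[
\Law(\vx_\ell\mid\vx_1)\otimes\Law(\vy_\ell\mid\vy_1)
\]
under the Markov kernel $(\va,\vb)\mapsto\delta_\va\otimes P_X(\va,\cdot)\otimes\delta_\vb\otimes P_Y(\vb,\cdot)$. The same kernel maps $\pi_X\otimes\pi_Y$ to $\pi_O$. Total variation cannot increase under a common Markov kernel (data processing), so
\[
\norm{\Law(\vo_\ell\mid\vo_0)-\pi_O}_{\TV}\leq\norm{\Law(\vx_\ell\mid\vx_1)\otimes\Law(\vy_\ell\mid\vy_1)-\pi_X\otimes\pi_Y}_{\TV}.
\]
For product measures, the TV distance is at most the sum of the marginal TV distances. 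One way to see this is the hybrid step $\mu\otimes\nu-\mu'\otimes\nu'=(\mu-\mu')\otimes\nu+\mu'\otimes(\nu-\nu')$; alternatively, glue the optimal couplings of the two marginals.

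Now take expectation over $\vo_0\sim\pi_O$. Under stationarity $\vx_1\sim\pi_X$, and time homogeneity gives $\Law(\vx_\ell\mid\vx_1)=\Law(\vx_{\ell-1}\mid\vx_0)$ evaluated at $\vx_1$. Therefore the expected $X$-term equals $\beta^X_{\ell-1}$, and likewise the $Y$-term equals $\beta^Y_{\ell-1}$. This gives $\beta_\ell\leq\beta_{\ell-1}^X+\beta_{\ell-1}^Y$.

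\textbf{Mixing time.} Set $\ell=1+\max\{\tau_X(\alpha/2),\tau_Y(\alpha/2)\}$, which is at least $2$. For each chain, the profile $\beta^X_k$ is nonincreasing in $k$: by the Chapman--Kolmogorov equation, stationarity of $\pi_X$, and the TV contraction of $P_X$, one more step cannot increase the TV distance. Hence $\beta^X_{\ell-1}\leq\alpha/2$ and $\beta^Y_{\ell-1}\leq\alpha/2$, so $\beta_\ell\leq\alpha$. By the definition of $\tau_{\rm mix}(\alpha)$ as a minimum, $\tau_{\rm mix}(\alpha)\leq\ell$.

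\textbf{Main obstacle.} The delicate step is the measure-theoretic bookkeeping in the profile bound. We must check that the conditional law of $\vo_\ell$ given $\vo_0$ factorizes through $(\vx_1,\vy_1)$ as claimed, which relies on independence of the two chains. We must also check that averaging over $\pi_O$ reduces exactly to averaging $\vx_1$ over $\pi_X$. Monotonicity of $\beta^X_k$ is the other point needing a short argument. The rest is routine.
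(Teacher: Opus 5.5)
Your proposal is correct and follows essentially the same route as the paper. Like the paper, you write $\Law(\vo_\ell\mid\vo_0)$ as the image of $P_X^{\ell-1}(\vx_1,\cdot)\otimes P_Y^{\ell-1}(\vy_1,\cdot)$ under the one-step observation kernel, then combine total-variation contraction, the hybrid bound for product measures, averaging over $\vx_1\sim\pi_X$ and $\vy_1\sim\pi_Y$, and monotonicity of the component profiles (with finiteness of $\tau_X(\alpha/2)$ and $\tau_Y(\alpha/2)$ coming from $\beta^X_\ell,\beta^Y_\ell\to0$ in Model~\ref{model:stationary-training}).
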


\begin{proof}
We express each observation as a random update of the current-state pair.
Total-variation contraction then reduces its mixing bound to those of the
two independent chains.

For
probability laws $\mu,\nu$ and a Markov kernel $Q$, write
$(\mu Q)(A)=\int Q(u,A)\mu(\dd u)$. Since $Q(u,A)\in[0,1]$,
\begin{align*}
 |(\mu Q)(A)-(\nu Q)(A)|
 &=\left|\int_0^1\left(\mu\{u:Q(u,A)>s\}
             -\nu\{u:Q(u,A)>s\}\right)\dd s\right|\\
 &\leq\norm{\mu-\nu}_{\TV}.
\end{align*}
Taking the supremum over $A$ proves
\begin{equation}
 \norm{\mu Q-\nu Q}_{\TV}\leq\norm{\mu-\nu}_{\TV}.
 \label{eq:tv-kernel-contraction}
\end{equation}
This contraction also applies to a kernel that appends an independent draw.
Combining it with the triangle inequality, using intermediate law
$\nu_X\otimes\mu_Y$, gives
\begin{align*}
 \norm{\mu_X\otimes\mu_Y-\nu_X\otimes\nu_Y}_{\TV}
 &\leq\norm{\mu_X\otimes\mu_Y-\nu_X\otimes\mu_Y}_{\TV}
      +\norm{\nu_X\otimes\mu_Y-\nu_X\otimes\nu_Y}_{\TV}\\
 &\leq\norm{\mu_X-\nu_X}_{\TV}+\norm{\mu_Y-\nu_Y}_{\TV}.
\end{align*}

To apply these inequalities, define the observation kernel as follows.
Let $Q((\vx,\vy),\cdot)$ be the law of
$(\vx,\vx^+,\vy,\vy^+)$ obtained by drawing
$\vx^+\sim P_X(\vx,\cdot)$ and $\vy^+\sim P_Y(\vy,\cdot)$ independently.
The Markov property and independence of the chains give
\begin{equation*}
 \Law(\vo_{i+1}\mid\vo_0,\ldots,\vo_i)
 =Q((\vx_{i+1},\vy_{i+1}),\cdot).
\end{equation*}
Thus $(\vo_i)$ is time-homogeneous and Markov. Stationarity of the two
trajectories gives its stationary law
$\pi_O=(\pi_X\otimes\pi_Y)Q$.

Conditioning on $\vo_0$ fixes
$\vx_1$ and $\vy_1$. For $\ell\geq2$,
the two chains then evolve independently for $\ell-1$ steps before the
operation $Q$ constructs $\vo_\ell$. Therefore, almost surely,
\begin{equation*}
 \Law(\vo_\ell\mid\vo_0)
 =\left(P_X^{\ell-1}(\vx_1,\cdot)
       \otimes P_Y^{\ell-1}(\vy_1,\cdot)\right)Q.
\end{equation*}
Applying the two total-variation inequalities above yields
\begin{align*}
 \norm{\Law(\vo_\ell\mid\vo_0)-\pi_O}_{\TV}
 &\leq\norm{P_X^{\ell-1}(\vx_1,\cdot)
       \otimes P_Y^{\ell-1}(\vy_1,\cdot)-\pi_X\otimes\pi_Y}_{\TV}\\
 &\leq\norm{P_X^{\ell-1}(\vx_1,\cdot)-\pi_X}_{\TV}
       +\norm{P_Y^{\ell-1}(\vy_1,\cdot)-\pi_Y}_{\TV}.
\end{align*}
Averaging over $\vo_0$, whose next-state marginals are $\pi_X$ and $\pi_Y$, gives
\begin{equation*}
 \beta_\ell
 =\E\norm{\Law(\vo_\ell\mid\vo_0)-\pi_O}_{\TV}
 \leq\beta_{\ell-1}^X+\beta_{\ell-1}^Y.
\end{equation*}

For the mixing-time bound, \eqref{eq:tv-kernel-contraction} also implies
monotonicity: for a Markov kernel $P$ with stationary law $\pi$,
\begin{equation*}
 \norm{P^{k+1}(u,\cdot)-\pi}_{\TV}
 =\norm{P^k(u,\cdot)P-\pi P}_{\TV}
 \leq\norm{P^k(u,\cdot)-\pi}_{\TV}.
\end{equation*}
Hence each stationary-average mixing profile is nonincreasing. For
$k=\max\{\tau_X(\alpha/2),\tau_Y(\alpha/2)\}$, we obtain
$\beta_{k+1}\leq\beta_k^X+\beta_k^Y\leq\alpha$, proving the mixing-time bound.
\end{proof}

For an integer spacing $1\leq K\leq m$, write
$I_a=\{a+jK:j\geq0,\ a+jK<m\}$ for $0\leq a<K$.
The next lemma supplies independent observations within each $I_a$ while
preserving the joint law of the original data.

\begin{lemma}[Residue-class coupling]\label{lem:coupling}
Let $(\vo_i)$ be a stationary, time-homogeneous Markov process on a Euclidean space with
stationary law $\pi_O$ and mixing profile $\beta_\ell$ defined by
\eqref{eq:observation-mixing}. For every integer $1\leq K\leq m$, there are copies
$(\overline{\vo}_i)_{i=0}^{m-1}$ jointly coupled with $(\vo_i)$ such that
$(\overline{\vo}_i)_{i\in I_a}$ are iid with law $\pi_O$ for each $a$, and
\begin{equation}
 \Pp(\overline{\vo}_i\neq\vo_i\text{ for at least one }i)\leq m\beta_K.
 \label{eq:coupling-error}
\end{equation}
The coupling preserves the joint law of $(\vo_i)$; the copies in different
residue classes may be dependent.
\end{lemma}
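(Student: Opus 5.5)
The plan is to construct the coupling one residue class at a time, using the classical Berbee maximal-coupling construction, and then glue the classes together so that the original sequence keeps its exact joint law. Fix $a$ and write $I_a=\{i_0<i_1<\dots<i_{r-1}\}$ with $i_{j+1}-i_j=K$. The subsequence $(\vo_{i_j})_{j}$ is itself a stationary Markov chain with kernel $P_O^K$.

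The key quantity is the TV distance between the joint law of $(\vo_{i_0},\dots,\vo_{i_j})$ and $\Law(\vo_{i_0},\dots,\vo_{i_{j-1}})\otimes\pi_O$. By the Markov property, the conditional law of $\vo_{i_j}$ given the past observations is $P_O^K(\vo_{i_{j-1}},\cdot)$. The distance therefore equals $\E\norms{P_O^K(\vo_{i_{j-1}},\cdot)-\pi_O}_{\TV}=\beta_K$, by stationarity and \eqref{eq:observation-mixing}. The TV distance of a product-form comparison like this is exactly the expected conditional TV distance, which is the step that uses the stationary-average definition of $\beta$ rather than worst-case mixing.

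Next I build the independent copies sequentially. Set $\overline{\vo}_{i_0}=\vo_{i_0}$, which has law $\pi_O$. Given constructed $\overline{\vo}_{i_0},\dots,\overline{\vo}_{i_{j-1}}$, I draw $\overline{\vo}_{i_j}\sim\pi_O$ independently of the previous copies. I couple it with $\vo_{i_j}$ conditionally on the whole past of the original process, using a maximal coupling of $P_O^K(\vo_{i_{j-1}},\cdot)$ and $\pi_O$, which exists on a Polish space; auxiliary uniform variables realize it measurably. Then $\Pp(\overline{\vo}_{i_j}\neq\vo_{i_j})=\beta_K$. A cleaner way to package all of this is Berbee's lemma \citep{berbee1979random}: for random elements $U$ and $V$ in Polish spaces there is $V^*\overset{d}{=}V$, independent of $U$, with $\Pp(V\neq V^*)=\beta(\sigma(U),\sigma(V))$. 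Applying it iteratively with $U=(\vo_{0:i_{j-1}},\overline{\vo}_{i_0:i_{j-1}})$ and $V=\vo_{i_j}$, and conditioning on the past copies only through $\vo$ so that the $\beta$-coefficient is still $\beta_K$, gives independence of the copies within $I_a$. A union bound then gives a mismatch probability of at most $|I_a|\beta_K$ within the class.

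To glue the classes, I realize everything on one probability space. Take the original trajectory $(\vo_i)$ and, for each $a$, run the above construction with its own independent auxiliary randomness. Each construction only adds variables whose conditional law is specified given $(\vo_i)$, so the marginal law of $(\vo_i)$ is untouched. Copies in different classes may be dependent through $\vo$, which the statement allows. Summing $|I_a|\beta_K$ over $a$ gives at most $m\beta_K$, which is \eqref{eq:coupling-error}.

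The main obstacle is the independence claim inside the Berbee iteration. The conditioning $\sigma$-field includes earlier copies, which depend on $\vo$ beyond the single state $\vo_{i_{j-1}}$. I would handle this by conditioning on the full original past $\vo_{0:i_j-K}$ together with independent auxiliary variables. Given that past, $\vo_{i_j}$ has law $P_O^K(\vo_{i_j-K},\cdot)$ by the Markov property, and the auxiliary variables carry no further information. Hence the $\beta$-coefficient is still $\beta_K$, and measurable selection of the maximal coupling is standard on Euclidean space.
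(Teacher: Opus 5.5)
Your proposal is correct and follows essentially the same route as the paper. Within each residue class you keep the first observation and build each later copy from $\vo_{i_j}$ by a conditional maximal coupling of $P_O^K(\vo_{i_{j-1}},\cdot)$ with $\pi_O$, using fresh auxiliary randomness, so that the copy is independent of earlier copies and mismatches with probability $\beta_K$; you then use independent randomness across classes and a union bound. The differences are cosmetic: you condition on the full original past (or phrase the step as Berbee's lemma) where the paper conditions on the within-class history, and your per-class count $|I_a|\beta_K$ is slightly looser than the paper's $(|I_a|-1)\beta_K$, which sums to $(m-K)\beta_K\leq m\beta_K$.
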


\begin{proof}
We construct independent copies sequentially within each residue class,
then bound the probability of any mismatch by a union bound. The construction
uses maximal coupling \citep[Corollary~4.2.5]{berbee1979random}: given
$\vu\sim\mu$, auxiliary
randomness produces $\overline{\vu}\sim\nu$ with
$\Pp(\overline{\vu}\ne\vu)=\norm{\mu-\nu}_{\TV}$.

Write $P_O$ for the transition kernel. Fix a residue class
$I_a=\{a,a+K,\ldots\}\cap\{0,\ldots,m-1\}$, with indices
$i_1<\cdots<i_r$, and keep $\overline{\vo}_{i_1}=\vo_{i_1}\sim\pi_O$.
For $j\geq2$, suppose the earlier copies have been constructed from the
earlier observations in this class and independent auxiliary randomness.
The Markov property gives
\begin{equation*}
 \Law\left(\vo_{i_j}\mid
 \vo_{i_1},\ldots,\vo_{i_{j-1}},
 \overline{\vo}_{i_1},\ldots,\overline{\vo}_{i_{j-1}}\right)
 =P_O^K(\vo_{i_{j-1}},\cdot).
\end{equation*}
Apply maximal coupling conditionally to this law and $\pi_O$, using fresh
auxiliary randomness to construct $\overline{\vo}_{i_j}$ from
$\vo_{i_j}$. The conditional law of the new copy is $\pi_O$, so it is
independent of all earlier copies in this class. Its mismatch probability is
\begin{align*}
 \Pp(\overline{\vo}_{i_j}\ne\vo_{i_j})
 &=\E\norm{P_O^K(\vo_{i_{j-1}},\cdot)-\pi_O}_{\TV}\\
 &=\E_{\vo\sim\pi_O}\norm{P_O^K(\vo,\cdot)-\pi_O}_{\TV}
 =\beta_K.
\end{align*}
Repeat this construction separately for each class, using independent
auxiliary randomness. The original observations remain unchanged, and a
union bound yields
\begin{equation*}
 \Pp(\exists i:\overline{\vo}_i\ne\vo_i)
 \leq\sum_{a=0}^{K-1}(|I_a|-1)\beta_K
 =(m-K)\beta_K\leq m\beta_K.
\end{equation*}
\end{proof}

For $\vv\in\mathcal V$, recall the prediction error
$\vg_i(\vv,t)=\vv(\vh_i(t))-\vv^\star(\vh_i(t))$ from
\eqref{eq:empirical-excess-decomposition}. Define the excess loss and its
average over flow time by
\begin{equation}
 \begin{aligned}
 \excessloss_i(\vv,t)
 &=\norm{\vv(\vh_i(t))-\vr_i}^2
   -\norm{\vv^\star(\vh_i(t))-\vr_i}^2\\
 &=\norm{\vg_i(\vv,t)}^2
   +2\ip{\vg_i(\vv,t)}{\boldsymbol\xi_i(t)},
 &\avgexcess_i(\vv)&=\int_0^1\excessloss_i(\vv,t)\dd t.
 \end{aligned}
 \label{eq:averaged-excess-loss}
\end{equation}
Thus $m^{-1}\sum_i\avgexcess_i(\vv)=\widehat L(\vv)-\widehat L(\vv^\star)$.
The noise and moment assumptions ensure that the integrated losses are finite
almost surely: $\E\norm{\boldsymbol\xi}^2\leq d\nu^2$ and every candidate has finite
prediction risk. All time evaluations of a transition share its endpoints;
we keep them together when bounding the excess loss.

\begin{lemma}[Exponential bounds for excess loss]\label{lem:averaged-excess}
Under Assumptions~\ref{ass:conditional-noise} and~\ref{ass:velocity-class},
for a stationary transition and $\nu>0$, define
\begin{equation}
 \lambda=\frac1{4\nu^2},\qquad
 \riskdecay(u)=\frac1{4C_{4\to2}^4}
 \left(1-e^{-u/(16\nu^2)}\right),\quad u\geq0.
 \label{eq:excess-decay}
\end{equation}
Let $t\sim\Unif(0,1)$ be independent of the transition. For every
$\vv\in\mathcal V$ and transition index $i$,
\begin{equation}
 \max\left\{\E e^{-\lambda\excessloss_i(\vv,t)},
             \E e^{-\lambda\avgexcess_i(\vv)}\right\}
 \leq e^{-\riskdecay(R(\vv,\vv^\star))},
 \label{eq:averaged-excess-mgf}
\end{equation}
where $\excessloss_i$ and $\avgexcess_i$ are defined in
\eqref{eq:averaged-excess-loss}.
\end{lemma}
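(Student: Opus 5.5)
The plan is to first bound the exponential moment at a single random flow time, then pass to the time-averaged excess loss by convexity. Throughout, write $X=\norm{\vg_i(\vv,t)}^2$ with $t\sim\Unif(0,1)$ independent of the transition. Since $\vh_i(t)$ is stationary with the law of a fresh $\vh$, we have $\E X=R(\vv,\vv^\star)=:R$, and Assumption~\ref{ass:velocity-class} gives $\E X^2\leq C_{4\to2}^4R^2$.

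\textbf{Step 1 (conditioning on the regressor).} Recall $\excessloss_i(\vv,t)=X+2\ip{\vg_i(\vv,t)}{\boldsymbol\xi_i(t)}$, and note that $\vg_i(\vv,t)$ is a measurable function of $\vh_i(t)$, which includes $t$. I condition on $\vh_i(t)$ and apply Assumption~\ref{ass:conditional-noise} with $\va=\vg_i(\vv,t)$ and $\theta=-2\lambda$. This gives
\[
\E\left[e^{-\lambda\excessloss_i(\vv,t)}\mid\vh_i(t)\right]\leq e^{(-\lambda+2\lambda^2\nu^2)X}=e^{-X/(8\nu^2)},
\]
where the equality uses $\lambda=1/(4\nu^2)$. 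Taking expectations reduces the problem to bounding $\E e^{-cX}$ with $c=1/(8\nu^2)$.

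\textbf{Step 2 (small-ball bound via Paley--Zygmund).} This is the main step: the fourth-moment assumption must turn the mean $R$ into a lower-tail bound. Paley--Zygmund gives
\[
\Pp(X\geq R/2)\geq\tfrac14\,\frac{R^2}{\E X^2}\geq\frac1{4C_{4\to2}^4}.
\]
On the event $\{X\geq R/2\}$ we have $e^{-cX}\leq e^{-cR/2}$, and elsewhere $e^{-cX}\leq1$. Hence
\[
\E e^{-cX}\leq1-\Pp(X\geq R/2)\left(1-e^{-cR/2}\right)\leq1-\frac{1-e^{-R/(16\nu^2)}}{4C_{4\to2}^4}.
\]
Applying $1-u\leq e^{-u}$ yields $e^{-\riskdecay(R)}$, which proves the first bound in \eqref{eq:averaged-excess-mgf}. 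The case $R=0$ is trivial since $\riskdecay(0)=0$.

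\textbf{Step 3 (averaging over flow time).} Since $u\mapsto e^{-\lambda u}$ is convex, Jensen's inequality over $t\in[0,1]$ gives
\[
e^{-\lambda\avgexcess_i(\vv)}\leq\int_0^1e^{-\lambda\excessloss_i(\vv,t)}\dd t.
\]
Taking expectations and using Tonelli, the right side equals $\E e^{-\lambda\excessloss_i(\vv,t)}$ with $t\sim\Unif(0,1)$ independent. Step~2 bounds this by $e^{-\riskdecay(R)}$. Finiteness of $\avgexcess_i$ almost surely, noted before the lemma, ensures that every quantity here is well defined.
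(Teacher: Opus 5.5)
Your proposal is correct and follows the paper's proof essentially step for step: you condition on $\vh$ with $\theta=-2\lambda$ to obtain $e^{-G/(8\nu^2)}$, prove the lower-tail bound $\Pp(G\geq R/2)\geq 1/(4C_{4\to2}^4)$, apply $1-u\leq e^{-u}$, and use Jensen plus Tonelli for the time average. The only cosmetic difference is that you cite Paley--Zygmund, whereas the paper derives the same bound inline via Cauchy--Schwarz.
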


\begin{proof}
We first bound the exponential moment at a random flow time using the
conditional noise assumption and the fourth-moment bound. Jensen's inequality
then transfers this bound to the integral over flow time.

Fix $\vv$ and draw a stationary transition and an independent uniform flow
time, giving $(\vh,\vr)$. Write
$\vg=\vv(\vh)-\vv^\star(\vh)$, $G=\norm{\vg}^2$, and
$R=R(\vv,\vv^\star)=\E G$.
Use $\lambda=1/(4\nu^2)$ from \eqref{eq:excess-decay}.
Conditional on $\vh$, the vector $\vg$ is fixed, so
Assumption~\ref{ass:conditional-noise} gives
\begin{align}
 \E\left[e^{-\lambda(G+2\ip{\vg}{\boldsymbol\xi})}\mid\vh\right]
 &\leq e^{-\lambda G+2\lambda^2\nu^2G}
 =e^{-G/(8\nu^2)}.
 \label{eq:single-time-excess-mgf}
\end{align}
If $R>0$, Cauchy--Schwarz and \eqref{eq:l4-l2} yield
\begin{equation*}
 R\leq\frac R2+\E[G\mathbf1\{G\geq R/2\}]
 \leq\frac R2+\sqrt{\E G^2\,\Pp(G\geq R/2)},\qquad
 \Pp(G\geq R/2)\geq\frac{R^2}{4\E G^2}
 \geq\frac1{4C_{4\to2}^4}.
\end{equation*}
Averaging the conditional bound \eqref{eq:single-time-excess-mgf} and using
this probability bound gives
\begin{equation*}
 \E e^{-\lambda(G+2\ip{\vg}{\boldsymbol\xi})}
 \leq\E e^{-G/(8\nu^2)}
 \leq1-\frac{1-e^{-R/(16\nu^2)}}{4C_{4\to2}^4}
 \leq e^{-\riskdecay(R)}.
\end{equation*}
For $R=0$, \eqref{eq:single-time-excess-mgf} directly gives the same bound.
This proves the random-time bound. For the integrated loss, Jensen's
inequality gives, pointwise in the
transition observation,
\begin{equation*}
 e^{-\lambda\avgexcess_i(\vv)}
 =\exp\left(-\lambda\int_0^1\excessloss_i(\vv,t)\dd t\right)
 \leq\int_0^1e^{-\lambda\excessloss_i(\vv,t)}\dd t.
\end{equation*}
Taking expectations and applying Tonelli's theorem proves
\eqref{eq:averaged-excess-mgf}.
\end{proof}

The next lemma uses only the exponential bound for a stationary loss and
the mixing of its observations. We state it for a general loss
$\avgexcess_i(\vv)=\avgexcess(\vv,\vo_i)$ so that it applies to both
integrated and sampled flow times.

\begin{lemma}[Risk from empirical excess loss]\label{lem:excess-risk}
Let $(\vo_i)$ be a stationary Markov process as in Lemma~\ref{lem:coupling},
and let $\mathcal V$ be finite. Fix $\nu>0$, $C_{4\to2}\geq1$, and use $\lambda,\riskdecay$
from \eqref{eq:excess-decay}, with finite risks $R$ as in \eqref{eq:erm-risk}.
Suppose real-valued measurable losses
$\avgexcess_i(\vv)=\avgexcess(\vv,\vo_i)$ satisfy
\begin{equation}
 \E e^{-\lambda\avgexcess_i(\vv)}
 \leq e^{-\riskdecay(R(\vv,\vv^\star))}
 \quad\text{for every }\vv\in\mathcal V.
 \label{eq:stationary-loss-bound}
\end{equation}
Fix an integer $1\leq K\leq m$ and $\varepsilon_{\rm opt}\geq0$. Any possibly
data-dependent $\widetilde{\vv}\in\mathcal V$ with
$m^{-1}\sum_i\avgexcess_i(\widetilde{\vv})\leq\varepsilon_{\rm opt}$
almost surely satisfies, for every $u>0$,
\begin{equation}
 \Pp\left(R(\widetilde{\vv},\vv^\star)\geq u\right)
 \leq m\beta_K+|\mathcal V|
 \exp\left\{\frac mK\left(\lambda\varepsilon_{\rm opt}-\riskdecay(u)\right)\right\}.
 \label{eq:general-estimation-failure}
\end{equation}
\end{lemma}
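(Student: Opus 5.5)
We combine the residue-class coupling of Lemma~\ref{lem:coupling} with a union bound over the finite class and a Chernoff bound in each class, using H\"older's inequality to avoid any independence assumption across classes.

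\textbf{Step 1: coupling.} Apply Lemma~\ref{lem:coupling} with spacing $K$ to obtain copies $(\overline{\vo}_i)$. Set $\overline{\avgexcess}_i(\vv)=\avgexcess(\vv,\overline{\vo}_i)$, and let $\mathcal E$ be the event that $\overline{\vo}_i=\vo_i$ for all $i$, so that $\Pp(\mathcal E^c)\leq m\beta_K$. On $\mathcal E$ we have $\sum_i\overline{\avgexcess}_i(\widetilde{\vv})=\sum_i\avgexcess_i(\widetilde{\vv})\leq m\varepsilon_{\rm opt}$. Hence
\[
\Pp(R(\widetilde{\vv},\vv^\star)\geq u)\leq m\beta_K+\sum_{\vv\in\mathcal V:\,R(\vv,\vv^\star)\geq u}\Pp\Big(\sum_i\overline{\avgexcess}_i(\vv)\leq m\varepsilon_{\rm opt}\Big).
\]
This holds because on the event in question, $\widetilde{\vv}$ equals some fixed $\vv$ with risk at least $u$. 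The step of passing from the data-dependent $\widetilde{\vv}$ to fixed $\vv$ through the union bound is what makes finiteness of $\mathcal V$ essential.

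\textbf{Step 2: Chernoff bound with H\"older.} Fix $\vv$ with $R=R(\vv,\vv^\star)\geq u$. By Markov's inequality,
\[
\Pp\Big(\sum_i\overline{\avgexcess}_i\leq m\varepsilon_{\rm opt}\Big)\leq e^{\lambda m\varepsilon_{\rm opt}/K}\,\E\exp\Big(-\frac{\lambda}{K}\sum_{a=0}^{K-1}S_a\Big),\qquad S_a=\sum_{i\in I_a}\overline{\avgexcess}_i.
\]
The $K$ factors $e^{-\lambda S_a/K}$ may be dependent across classes. Generalized H\"older with $K$ exponents all equal to $K$ handles this:
\[
\E\prod_a e^{-\lambda S_a/K}\leq\prod_a\big(\E e^{-\lambda S_a}\big)^{1/K}.
\]
Within class $I_a$ the copies are iid with law $\pi_O$, so by \eqref{eq:stationary-loss-bound},
\[
\E e^{-\lambda S_a}\leq e^{-|I_a|\riskdecay(R)}.
\]
Since $\sum_a|I_a|=m$, the product is at most $e^{-m\riskdecay(R)/K}\leq e^{-m\riskdecay(u)/K}$, because $\riskdecay$ is nondecreasing. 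Multiplying by the Markov-inequality prefactor gives $\exp\{(m/K)(\lambda\varepsilon_{\rm opt}-\riskdecay(u))\}$ per candidate, and summing over at most $|\mathcal V|$ candidates yields \eqref{eq:general-estimation-failure}.

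\textbf{Main obstacle.} The delicate point is the H\"older step. One must scale the Chernoff parameter as $\lambda/K$ so that each class receives the full parameter $\lambda$ after H\"older, which is exactly where the stationary bound applies. This costs the factor $1/K$ in the exponent, which is the mixing penalty. I would also check that each $\overline{\vo}_i$ individually has law $\pi_O$, which holds by stationarity and the construction, so that \eqref{eq:stationary-loss-bound} applies to the copies.
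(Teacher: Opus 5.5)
Your proposal is correct and follows essentially the same route as the paper's proof: the residue-class coupling from Lemma~\ref{lem:coupling}, the within-class product bound from \eqref{eq:stationary-loss-bound}, H\"older's inequality with all $K$ exponents equal to $K$ across classes, Markov's inequality at parameter $\lambda/K$, monotonicity of $\riskdecay$, and a union bound over candidates with risk at least $u$, plus the coupling failure probability $m\beta_K$. The only difference is that you apply the union bound before the Chernoff step rather than after, which is immaterial.
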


\begin{proof}
Apply Lemma~\ref{lem:coupling} with spacing $K$ to obtain copies
$(\overline{\vo}_i)$. Let $\overline{\avgexcess}_i(\vv)$ be
$\avgexcess(\vv,\overline{\vo}_i)$.
Each copied loss is a function of its copied observation. Thus, for fixed
$\vv$, independence within $I_a$ and the stationary exponential bound
\eqref{eq:stationary-loss-bound} give
\begin{equation*}
 \E\exp\left(-\lambda\sum_{i\in I_a}\overline{\avgexcess}_i(\vv)\right)
 \leq e^{-|I_a|\riskdecay(R(\vv,\vv^\star))}.
\end{equation*}
To combine the classes, apply H\"older's inequality with all $K$ exponents
equal to $K$ (the identity when $K=1$):
\begin{align*}
 \E\exp\left(-\frac\lambda K\sum_i\overline{\avgexcess}_i(\vv)\right)
 &\leq\prod_{a=0}^{K-1}
 \left[\E\exp\left(-\lambda\sum_{i\in I_a}
           \overline{\avgexcess}_i(\vv)\right)\right]^{1/K}\\
 &\leq\exp\left(-\frac mK\riskdecay(R(\vv,\vv^\star))\right).
\end{align*}
This step requires no independence across residue classes.
Markov's inequality now gives
\begin{equation*}
 \Pp\left(\frac1m\sum_i\overline{\avgexcess}_i(\vv)
                 \leq\varepsilon_{\rm opt}\right)
 \leq\exp\left\{\frac mK
       \left(\lambda\varepsilon_{\rm opt}-\riskdecay(R(\vv,\vv^\star))\right)\right\}.
\end{equation*}
Since $\riskdecay$ is increasing, a union bound over candidates with
$R(\vv,\vv^\star)\geq u$ bounds the probability that any such candidate has
copied excess loss at most $\varepsilon_{\rm opt}$ by the exponential term
in \eqref{eq:general-estimation-failure}.
On the event that all copies agree with the data,
$\overline{\avgexcess}_i(\vv)=\avgexcess_i(\vv)$ for every $i,\vv$.
The assumed empirical loss bound for $\widetilde{\vv}$ then places it among
the candidates just considered whenever its risk is at least $u$.
Adding the coupling failure probability $m\beta_K$ proves the claim.
\end{proof}

\begin{proof}[Proof of Theorem~\ref{thm:velocity-estimation}]
We apply Lemma~\ref{lem:excess-risk} with the integrated excess losses in
\eqref{eq:averaged-excess-loss}. Lemma~\ref{lem:paired-transition-mixing}
supplies the stationary Markov observations, and
Lemma~\ref{lem:averaged-excess} supplies the exponential bound
\eqref{eq:stationary-loss-bound}. It remains to choose the risk threshold.
First suppose $\nu>0$, and set
\begin{equation*}
 K=\tau_{\rm mix}(\delta/(3m)),\qquad
 L=\log(6K|\mathcal V|/\delta),\qquad
 u=128C_{4\to2}^4\nu^2KL/m.
\end{equation*}
The sample-size condition \eqref{eq:estimation-sample-size} implies
$K\leq m$ and $u/(16\nu^2)=8C_{4\to2}^4KL/m\leq1/16$.
By the definition of $\riskdecay$ in \eqref{eq:excess-decay} and
$1-e^{-s}\geq s/2$ for $s\in[0,1]$,
\begin{equation*}
 \riskdecay(u)\geq\frac1{4C_{4\to2}^4}\frac{u}{32\nu^2}
 =\frac{KL}m.
\end{equation*}
Realizability and empirical optimality give
$m^{-1}\sum_i\avgexcess_i(\widehat{\vv})
=\widehat L(\widehat{\vv})-\widehat L(\vv^\star)\leq0$.
Lemma~\ref{lem:excess-risk} with $\varepsilon_{\rm opt}=0$ therefore yields
\begin{equation}
 \Pp\left(R(\widehat{\vv},\vv^\star)\geq u\right)
 \leq m\beta_K+|\mathcal V|e^{-L}
 \leq\frac\delta3+\frac\delta{6K}\leq\frac\delta2.
 \label{eq:estimation-probability}
\end{equation}
This proves the stated $1-\delta$ guarantee. More generally, the same
calculation holds for any $1\leq K\leq m$ with $m\beta_K\leq\delta/3$
and \eqref{eq:estimation-sample-size}.

If $\nu=0$, the conditional noise bound also holds with every positive
parameter $\eta$. The preceding argument then bounds
$\Pp(R(\widehat{\vv},\vv^\star)>128C_{4\to2}^4\eta^2KL/m)$ by $\delta/2$.
As $\eta\downarrow0$, these events increase to
$\{R(\widehat{\vv},\vv^\star)>0\}$.
Continuity of probability proves the zero-risk conclusion.
\end{proof}

\begin{corollary}[Sampled flow times]\label{cor:sampled-flow-times}
Under the assumptions and sample-size condition of
Theorem~\ref{thm:velocity-estimation}, fix an integer $J\geq1$ and draw
$(t_{ij})_{0\leq i<m,\,1\leq j\leq J}$ iid from $\Unif(0,1)$,
independently of the trajectories. Any minimizer $\widehat{\vv}_J$ of
\begin{equation}
 \widehat L_J(\vv)=\frac1{mJ}\sum_{i=0}^{m-1}\sum_{j=1}^J
 \norm{\vv(\vh_i(t_{ij}))-\vr_i}^2
 \label{eq:sampled-time-loss}
\end{equation}
obeys \eqref{eq:estimation-bound}, with $\widehat{\vv}_J$ in place of
$\widehat{\vv}$, with probability at least $1-\delta$ over the trajectories
and flow-time samples.
\end{corollary}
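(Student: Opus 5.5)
The plan is to reuse Lemma~\ref{lem:excess-risk} unchanged, and to redefine the observation so that it carries the sampled flow times. Set $\widetilde{\vo}_i=(\vo_i,t_{i1},\ldots,t_{iJ})$. The flow times are iid and independent of the trajectories, so $(\widetilde{\vo}_i)$ is again a stationary, time-homogeneous Markov process. Its kernel draws the next observation from $P_O(\vo_i,\cdot)$ and appends $J$ fresh uniforms, and its stationary law is $\pi_O\otimes\Unif(0,1)^{\otimes J}$.

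Conditional on $\widetilde{\vo}_0$, the law of $\widetilde{\vo}_\ell$ for $\ell\geq1$ is $\Law(\vo_\ell\mid\vo_0)\otimes\Unif(0,1)^{\otimes J}$. By the product bound in the proof of Lemma~\ref{lem:paired-transition-mixing}, with zero contribution from the identical time factor, its total-variation distance to stationarity equals that of $\vo_\ell$. Hence the mixing profile $\beta_\ell$, and therefore $K=\tau_{\rm mix}(\delta/(3m))$, are unchanged.

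Next I define the sampled excess loss $\avgexcess^J(\vv,\widetilde{\vo}_i)=J^{-1}\sum_{j=1}^J\excessloss_i(\vv,t_{ij})$. With this definition,
\[
m^{-1}\sum_i\avgexcess^J(\vv,\widetilde{\vo}_i)=\widehat L_J(\vv)-\widehat L_J(\vv^\star),
\]
and realizability together with empirical optimality make this quantity nonpositive at $\widehat{\vv}_J$. To verify \eqref{eq:stationary-loss-bound}, I apply Jensen's inequality to the uniform average over $j$:
\[
e^{-\lambda J^{-1}\sum_j\excessloss_i(\vv,t_{ij})}\leq J^{-1}\sum_j e^{-\lambda\excessloss_i(\vv,t_{ij})}.
\]
Each term has a single independent uniform time and a stationary transition, so the random-time part of Lemma~\ref{lem:averaged-excess} bounds its expectation by $e^{-\riskdecay(R(\vv,\vv^\star))}$. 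Averaging over $j$ keeps the same bound. The risk $R$ is defined in terms of a fresh $\vh$ and does not involve the samples, so it is the same risk as in the theorem.

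Finally I repeat the threshold calculation in the proof of Theorem~\ref{thm:velocity-estimation} verbatim, with $\varepsilon_{\rm opt}=0$ and the same $u$, $L$, $K$. This gives failure probability at most $\delta/2$ over the joint law of the trajectories and flow times. The $\nu=0$ case follows by the same limiting argument.

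The only step needing care is the mixing claim, namely that adjoining iid times neither changes $\beta_\ell$ nor breaks the Markov property. Lemma~\ref{lem:coupling} needs the joint process to be Markov with that profile. I would handle this by writing the augmented kernel explicitly as $P_O\otimes\Unif^{\otimes J}$, noting that it depends only on $\vo_i$, and checking the TV identity by computing on product sets. Measurability of $\avgexcess^J$ in $\widetilde{\vo}_i$ is routine.
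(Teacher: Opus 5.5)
Your proposal is correct and follows the paper's proof essentially step for step. You augment each observation with its $J$ flow times and show the augmented chain is stationary Markov with the same mixing profile; the paper gets $\leq$ from kernel contraction and $\geq$ from projection onto $\vo_i$, which matches your product-bound and product-set check. You then verify the exponential hypothesis of Lemma~\ref{lem:excess-risk} by Jensen over $j$ together with the random-time bound of Lemma~\ref{lem:averaged-excess}, and reuse the same $K,L,u$, with the limiting argument for $\nu=0$.
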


\begin{proof}
First suppose $\nu>0$. Append the independent time vector
$(t_{i1},\ldots,t_{iJ})$ to $\vo_i$.
The resulting process is stationary and Markov. Appending independent
uniform draws cannot increase total variation by
\eqref{eq:tv-kernel-contraction}; contraction under projection onto
$\vo_i$ gives the reverse inequality. Its mixing profile is therefore exactly
$\beta_\ell$. Apply Lemma~\ref{lem:excess-risk} with the loss
$\avgexcess_i(\vv)=J^{-1}\sum_j\excessloss_i(\vv,t_{ij})$.
To verify its exponential hypothesis, convexity gives
\begin{equation*}
 \exp\left(-\frac\lambda J\sum_j\excessloss_i(\vv,t_{ij})\right)
 \leq\frac1J\sum_j e^{-\lambda\excessloss_i(\vv,t_{ij})}.
\end{equation*}
Taking expectations and applying the random-time bound in
Lemma~\ref{lem:averaged-excess} to each summand gives
$\E\exp(-\lambda J^{-1}\sum_j\excessloss_i(\vv,t_{ij}))
\leq e^{-\riskdecay(R(\vv,\vv^\star))}$, with $\lambda,\riskdecay$
from \eqref{eq:excess-decay}. This verifies \eqref{eq:stationary-loss-bound}.
Realizability and empirical optimality give
$m^{-1}\sum_i\avgexcess_i(\widehat{\vv}_J)\leq0$, so Lemma~\ref{lem:excess-risk} applies
with $\varepsilon_{\rm opt}=0$. Use the same $K,L,u$ as in the proof of
Theorem~\ref{thm:velocity-estimation}; its limiting argument also covers
$\nu=0$.
\end{proof}

The corollary controls risk integrated over flow time, for each fixed $J$,
without treating $mJ$ evaluations as independent transitions. A deterministic
time grid instead requires moment and noise assumptions for its time
sampling distribution; transferring grid-averaged risk to integrated risk
also requires control of the time-discretization error.
For approximate minimization of the integrated loss,
$\widehat L(\widetilde{\vv})\leq\min_{\vv\in\mathcal V}\widehat L(\vv)
+\varepsilon_{\rm opt}$ gives
$m^{-1}\sum_i\avgexcess_i(\widetilde{\vv})\leq\varepsilon_{\rm opt}$
by realizability. Lemma~\ref{lem:excess-risk} applies with the integrated
excess loss in \eqref{eq:averaged-excess-loss}.
Thus \eqref{eq:general-estimation-failure} also records the effect of an
empirical optimization error; we do not certify this error for AdamW.

For \eqref{eq:intro-sample-complexity}, set
$C=128C_{4\to2}^4\max\{1,\nu^2\}$ and
$L=\log(6K|\mathcal V|/\delta)$. The condition
$m\geq CKL\max\{1,\epsilon^{-1}\}$ implies both
\eqref{eq:estimation-sample-size} and
$128C_{4\to2}^4\nu^2KL/m\leq\epsilon$.
Theorem~\ref{thm:velocity-estimation} therefore gives risk at most
$\epsilon$ with probability at least $1-\delta$.

\subsection{The sample-size lower bound}\label{app:lower-bound}

\begin{proof}[Proof of Proposition~\ref{prop:mixing-lower-bound}]
We construct two models with separated velocities whose observations agree
when a slowly mixing Gaussian coordinate stays nonnegative. We first verify
the flow and moment assumptions and compute the velocity separation. We then
bound the probability that the data agree and apply a two-point testing
argument, as in \citet[Appendix~C.1, proof of Theorem~1]{nagaraj2020least}.
Finally, we relate the Gaussian coordinate's time scale to the observation
mixing time.

For the prescribed integer $\tau\geq1$, set $a=e^{-1/\tau}$ and
$\sigma_\tau^2=(1-a^2)^{-1}$.
Fix a smooth function $\psi:\R\to[0,1]$ equal to $1$ on $(-\infty,-1]$
and to $0$ on $[0,\infty)$. For $G\sim N(0,1)$, write
$M_2=\E\psi(G)^2\geq\Pp(G\leq-1)>0$ and set
$h=\sqrt{\epsilon/(\pi M_2)}$. Under $\theta\in\{-h,h\}$, let
\begin{equation}
 \vy_i=(C_i,U_i),\qquad
 C_{i+1}=aC_i+\eta_{i+1},\qquad
 U_{i+1}=\theta\psi(C_i/\sigma_\tau)+\zeta_{i+1},
 \label{eq:lower-model}
\end{equation}
where $(\eta_i)$ and $(\zeta_i)$ are independent iid standard Gaussian
sequences. Use a stationary extension of $(C_i)$ to index $-1$ and set
$U_0=\theta\psi(C_{-1}/\sigma_\tau)+\zeta_0$.
Then $\vy$ is stationary and $C_i/\sigma_\tau\sim N(0,1)$.
Independently, take $\vx_i\overset{\mathrm{iid}}{\sim}N(\vzero,\Id_2)$.
For $\vy=(c,u)$, put
$\vg_\theta(\vy)=(ac,\theta\psi(c/\sigma_\tau))$.
The conditional transitions are
$P_X(\vx,\cdot)=N(\vzero,\Id_2)$ and
$P_{Y,\theta}(\vy,\cdot)=N(\vg_\theta(\vy),\Id_2)$.

For fixed $(\vx,\vy)$, apply Lemma~\ref{lem:gaussian-flow} with means
$\vzero,\vg_\theta(\vy)$ in dimension two. With $s_t=(1-t)^2+t^2$, it gives
\begin{equation}
 \vv_{\theta,t}^\star(\vz,\vx,\vy)
 =\frac{2t-1}{s_t}\vz+\frac{1-t}{s_t}\vg_\theta(\vy),\qquad
 \mathcal V=\{\vv_{-h}^\star,\vv_h^\star\}.
 \label{eq:lower-velocity}
\end{equation}
Its flow is
$\Phi^{\vv_\theta^\star}_{\vx,\vy,t}(\vz_0)
=t\vg_\theta(\vy)+\sqrt{s_t}\,\vz_0$.
Lemma~\ref{lem:gaussian-flow} verifies Assumption~\ref{ass:flow-regularity}
for both candidates, with $L_t=|2t-1|/s_t$ and
$\int_0^1L_t\dd t=\log2$. It also gives
$\boldsymbol\xi\mid\vh\sim N(\vzero,\Id_2/s_t)$, verifying
Assumption~\ref{ass:conditional-noise} with $\nu^2=2$.

Write $R_\theta$ for risk under model $\theta$. Only the second velocity
coordinate differs, by $2h(1-t)\psi(c/\sigma_\tau)/s_t$ in absolute value.
Using the integral \eqref{eq:gaussian-time-integrals},
\begin{equation}
 R_\theta(\vv_{-\theta}^\star,\vv_\theta^\star)
 =4h^2M_2\int_0^1\frac{(1-t)^2}{s_t^2}\dd t
 =\pi h^2M_2=\epsilon.
 \label{eq:lower-separation}
\end{equation}
The same class contains the truth under either model. Since the candidate
difference has norm at most $4h$,
\begin{equation*}
 \E_\theta\norm{\vv_{-\theta}^\star(\vh)-\vv_\theta^\star(\vh)}^4
 \leq16h^2 R_\theta(\vv_{-\theta}^\star,\vv_\theta^\star)
 =\frac{16}{\pi M_2}
 R_\theta(\vv_{-\theta}^\star,\vv_\theta^\star)^2.
\end{equation*}
This verifies Assumption~\ref{ass:velocity-class} with the universal constant
$C_{4\to2}^4=16/(\pi M_2)$.

The candidates thus satisfy the flow and moment assumptions and have risk
separation $\epsilon$. To show that a short trajectory cannot reliably
distinguish them, we exhibit an event on which their observations coincide.
Couple the two models using the same source trajectory, Gaussian noises,
$C$-trajectory, and auxiliary randomness of the estimator. On
$E=\{C_{-1},C_0,\ldots,C_{m-1}\geq0\}$, the observed
$(U_0,\ldots,U_m)$ equal $(\zeta_0,\ldots,\zeta_m)$ under both models.
Thus the data agree on $E$, including the initial observation $U_0$.
Let $(Z_s)_{s\geq0}$ be a stationary Ornstein--Uhlenbeck process with
covariance $\E[Z_sZ_t]=e^{-|s-t|}$. The vectors
$(C_{-1},\ldots,C_{m-1})/\sigma_\tau$ and
$(Z_0,Z_{1/\tau},\ldots,Z_{m/\tau})$ have the same centered Gaussian law.
Continuous nonnegativity implies nonnegativity at these sampled times.
The survival formula in \citet[Equation~(2.5)]{aurzada2011survival}, evaluated
at time $2m/\tau$ to account for their covariance $e^{-|s-t|/2}$, gives
\begin{equation}
 \Pp(E)\geq\Pp(Z_s\geq0\text{ for all }s\in[0,m/\tau])
 =\frac1\pi\arcsin(e^{-m/\tau})
 \geq\frac1\pi e^{-m/\tau}.
 \label{eq:lower-no-visit}
\end{equation}
Since the estimator takes values in $\mathcal V=\{\vv_{-h}^\star,\vv_h^\star\}$,
let $p$ be its conditional probability of returning $\vv_h^\star$ given $E$.
The conditional data law is the same under both models, so the conditional
error probabilities are $p$ under $\theta=-h$ and $1-p$ under $\theta=h$.
Their maximum is at least $1/2$.
Together with \eqref{eq:lower-separation}, the two-point argument gives
\begin{equation}
 \max_{\theta\in\{-h,h\}}
 \Pp_\theta\left(R_\theta(\widetilde{\vv},\vv_\theta^\star)\geq\epsilon\right)
 \geq\frac12\Pp(E)\geq\frac1{2\pi}e^{-m/\tau}.
 \label{eq:mixing-lower-bound}
\end{equation}

It remains to verify that $\tau$ is proportional to the observation mixing
time. Write
$\tau_{{\rm mix},\theta}(\alpha)$ for the observation mixing time
under model $\theta$, as defined in \eqref{eq:observation-mixing}.
Let $\pi_C=N(0,\sigma_\tau^2)$ and let $\beta_k^C$ be the
stationary-average coefficient of $(C_i)$. The joint-law identity
\eqref{eq:joint-mixing-tv}, applied to this chain, gives
$\beta_k^C=\norm{\Law(C_0,C_k)-\pi_C\otimes\pi_C}_{\TV}$.
For $r=a^k$, the relative entropy of the joint standardized Gaussian pair
$(C_0/\sigma_\tau,C_k/\sigma_\tau)$ with respect to two independent standard
Gaussians is $-\frac12\log(1-r^2)$, as computed in
\eqref{eq:gaussian-mixing-entropy}. Pinsker's inequality
\citep[Lemma~2]{canonne2022kl} gives the upper bound
below. For the lower bound, the bivariate Gaussian sign formula
\citep[proof of Lemma~2.11]{aurzada2011survival} shows that the probability
that $C_0,C_k$ have the same sign exceeds its value under independence by
$\arcsin(a^k)/\pi$. Thus, for $k\geq1$,
\begin{equation*}
 \frac1\pi\arcsin(a^k)\leq\beta_k^C
 \leq\frac12\sqrt{-\log(1-a^{2k})}.
\end{equation*}
For $\ell\geq3$, conditional on $\vo_0$, the law of $C_{\ell-1}$ depends
only on $C_1$ and is the $(\ell-2)$-step Gaussian transition law.
Given $C_{\ell-1}$, draw $C_\ell,C_{\ell+1}$ and the independent noises
defining $U_\ell,U_{\ell+1}$, as well as the independent source pair.
This constructs $\vo_\ell$ by a common Markov kernel.
The contraction bound \eqref{eq:tv-kernel-contraction} gives
the upper bound below; projecting $(\vo_0,\vo_\ell)$ onto
$(C_1,C_\ell)$ gives the lower bound:
\begin{equation}
 \beta_{\ell-1}^C\leq\beta_\ell\leq\beta_{\ell-2}^C.
 \label{eq:lower-bound-mixing}
\end{equation}
The lower bound also holds for $\ell=2$, and $\beta_1=1$ since adjacent
observations share the continuous coordinate $C_1$.
Since $\arcsin(e^{-1/4})/\pi>1/4$ and
$\sqrt{-\log(1-e^{-2})}/2<1/4$, these bounds imply, under both models,
\begin{equation*}
 \frac\tau4\leq\tau_{{\rm mix},\theta}(1/4)\leq\tau+2\leq3\tau.
\end{equation*}
The source is iid, and the target is a projection of the mixing observation
process, so both satisfy Model~\ref{model:stationary-training}. With the
mixing-time comparison established, an estimator that succeeds under both
models with probability at least $1-\delta$ must satisfy, by
\eqref{eq:mixing-lower-bound},
\begin{equation*}
 m\geq\tau\log\frac1{2\pi\delta}
 \geq\frac13\tau_{{\rm mix},\theta}(1/4)\log\frac1{2\pi\delta}
 \quad\text{for either }\theta,
\end{equation*}
which proves \eqref{eq:mixing-lower-sample}.
\end{proof}

\subsection{Comparison of mixing tolerances}\label{app:mixing-comparison}

\begin{remark}[Comparing the upper and lower bounds]
To compare the bounds, we relate the estimation spacing
$K=\tau_{\rm mix}(\delta/(3m))$ to $\tau_{\rm mix}$. If
$\beta_\ell\leq c_0\exp(-c_1\ell/\tau_{\rm mix})$ for all $\ell\geq1$
and universal $c_0\geq1,c_1>0$, then
\begin{equation*}
 \tau_{\rm mix}(\delta/(3m))
 \leq\left\lceil\frac{\tau_{\rm mix}}{c_1}
 \log\frac{3c_0m}{\delta}\right\rceil.
\end{equation*}
If we strengthen the mixing assumption in Model~\ref{model:stationary-training}
to require finite worst-case mixing time $\tau_{\rm wc}(1/4)$ for the
observation process $(\vo_i)$, with $\tau_{\rm wc}$ defined in
Section~\ref{sec:preliminaries},
then
\begin{equation*}
 \tau_{\rm mix}(\delta/(3m))\leq\tau_{\rm wc}(\delta/(3m))
 \leq \underbrace{\tau_{\rm wc}(1/4)
 \left\lceil\log_2\frac{3m}{\delta}\right\rceil}_{K}
\end{equation*}
by total-variation contraction
\citep[Lemmas~4.10--4.11 and (4.34)]{levin2017markov}.
This choice satisfies $m\beta_K\leq\delta/3$. The proof of
Theorem~\ref{thm:velocity-estimation} therefore applies with this $K$
in both its risk bound and sample-size condition.
The Gaussian models in Proposition~\ref{prop:mixing-lower-bound} satisfy
the exponential stationary-average profile above, by
\eqref{eq:lower-bound-mixing} and its Gaussian bounds.
Their worst-case mixing time is infinite: conditional on $C_1=c$,
the mean of $C_\ell$ is $a^{\ell-1}c$, which is unbounded as $|c|$ grows,
while its conditional variance is fixed. Thus $d_\ell^{\rm wc}=1$ at every
finite lag.
\end{remark}

\subsection{Accuracy of generated states and trajectories}\label{sec:stability}\label{app:stability-proofs}\label{sec:finite-rollout}

The estimation theorem controls
velocity error at stationary training pairs, but the learned recursion
visits pairs drawn from its own rollout distribution. We connect these
errors through the concentrability condition
\eqref{eq:rollout-concentrability}, then use contraction of the target
transitions to control their accumulation across steps.
Throughout this subsection, laws and expectations involving a fitted
velocity are taken with the training data held fixed.
For a candidate velocity $\vv\in\mathcal V$, let $\widehat{\vy}_i^{\vv}$ be its rollout
in \eqref{eq:population-sampler} using $\Phi^{\vv}$ and an initialization
$\widehat{\vy}_0\sim\widehat\mu_0$ independent of the fresh source and training data.
Write $\Gamma_i^{\vv}=\Law(\vx'_i,\widehat{\vy}_i^{\vv})$ for the
distribution of the source and generated state pair. The velocity error
at a fixed pair is
\begin{equation}
 \ell_{\vv}(\vx,\vy)=\int_0^1\E_{q_{t\mid\vx,\vy}}
 \norm{\vv_t(\vz,\vx,\vy)-\vv_t^\star(\vz,\vx,\vy)}^2\dd t,
 \qquad R(\vv,\vv^\star)=\E_{\pi_X\otimes\pi_Y}\ell_{\vv}.
 \label{eq:conditional-velocity-risk}
\end{equation}
The next theorem assumes that the error under $\Gamma_i^{\vv}$ is
controlled by its training average and combines this with target contraction.
\begin{theorem}[Finite-sample rollout]\label{thm:finite-rollout}
Under the assumptions of Theorem~\ref{thm:velocity-estimation} and
Assumption~\ref{ass:flow-regularity}, fix $\epsilon>0$, $\delta\in(0,1)$,
and an integer $n\geq1$. Suppose the candidate rollouts are well-defined,
every candidate is $\widehat L_t$-Lipschitz in $\vz$, uniformly over
current-state pairs, for a common nonnegative function $\widehat L_t$
with $\Lambda=\int_0^1\widehat L_t\dd t<\infty$. Suppose, for some finite
$C_{\rm roll}\geq1$,
\begin{equation}
 \E_{\Gamma_i^{\vv}}\ell_{\vv}\leq C_{\rm roll}R(\vv,\vv^\star)
 \qquad\text{for all }\vv\in\mathcal V,\quad 0\leq i<n.
 \label{eq:rollout-concentrability}
\end{equation}
Suppose the target transitions and $\pi_Y$ have finite second moments,
$a_0=W_2(\widehat\mu_0,\pi_Y)<\infty$, and
$W_2(P_Y(\vy,\cdot),P_Y(\vy',\cdot))\leq\rho\norm{\vy-\vy'}$
for some $0\leq\rho<1$. If $m$ satisfies \eqref{eq:intro-sample-complexity},
then, with probability at least $1-\delta$ over the training trajectories,
the rollout using the exact ODE flow of $\widehat{\vv}$ satisfies
\begin{equation}
 W_2(\Law(\widehat{\vy}_n),\pi_Y)
 \leq\rho^na_0+\frac{e^\Lambda\sqrt{C_{\rm roll}\epsilon}}{1-\rho}.
 \label{eq:finite-rollout}
\end{equation}
\end{theorem}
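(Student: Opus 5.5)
Write $\widehat{\vy}_i=\widehat{\vy}_i^{\widehat{\vv}}$ and $a_i=W_2(\Law(\widehat{\vy}_i),\pi_Y)$. The plan is to prove a one-step recursion $a_{i+1}\leq\rho a_i+e^\Lambda\sqrt{C_{\rm roll}\epsilon}$ and unroll it. Unrolling gives $a_n\leq\rho^na_0+e^\Lambda\sqrt{C_{\rm roll}\epsilon}\sum_{j<n}\rho^j$, which yields \eqref{eq:finite-rollout}.

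First, I would fix the training data on the event of Theorem~\ref{thm:velocity-estimation}, combined as in the derivation of \eqref{eq:intro-sample-complexity}. There $R(\widehat{\vv},\vv^\star)\leq\epsilon$ with probability at least $1-\delta$. Since $\widehat{\vv}\in\mathcal V$ and the rollouts are defined for every fixed candidate, the concentrability condition \eqref{eq:rollout-concentrability} applies to $\widehat{\vv}$ with the data held fixed. It gives $\E_{\Gamma_i}\ell_{\widehat{\vv}}\leq C_{\rm roll}\epsilon$ for each $i<n$.

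Second, I would split one step with the triangle inequality. Introduce the auxiliary variable $\widetilde{\vy}_{i+1}=\Phi^{\vv^\star}_{\vx'_i,\widehat{\vy}_i}(\vx'_{i+1})$. By Lemma~\ref{lem:current-state-sufficiency} and Proposition~\ref{prop:conditional-transport}, its conditional law given the history is $P_Y(\widehat{\vy}_i,\cdot)$, so $\Law(\widetilde{\vy}_{i+1})=\Law(\widehat{\vy}_i)P_Y$. Stationarity gives $\pi_Y=\pi_YP_Y$. Couple $\widehat{\vy}_i$ optimally with some $\vy\sim\pi_Y$, and glue conditionally optimal couplings of $P_Y(\widehat{\vy}_i,\cdot)$ and $P_Y(\vy,\cdot)$; this needs a measurable selection of optimal plans, which is standard. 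The Wasserstein contraction hypothesis then gives $W_2(\Law(\widetilde{\vy}_{i+1}),\pi_Y)\leq\rho a_i$. It remains to bound $W_2(\Law(\widehat{\vy}_{i+1}),\Law(\widetilde{\vy}_{i+1}))\leq(\E\norm{\widehat{\vy}_{i+1}-\widetilde{\vy}_{i+1}}^2)^{1/2}$ using the synchronous coupling: both variables come from the same $\vx'_{i+1}$ and the same state pair.

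Third, and this is the main obstacle, I would run a flow-stability estimate. Fix $(\vx,\vy)=(\vx'_i,\widehat{\vy}_i)$. Let $\vz_t=\Phi^{\vv^\star}_t(\va)$ and $\widehat{\vz}_t=\Phi^{\widehat{\vv}}_t(\va)$ with $\va=\vx'_{i+1}$. Then
\[
\frac{\dd}{\dd t}\norm{\widehat{\vz}_t-\vz_t}\leq\widehat L_t\norm{\widehat{\vz}_t-\vz_t}+\norm{\widehat{\vv}_t(\vz_t)-\vv^\star_t(\vz_t)}.
\]
Here the Lipschitz constant is that of the \emph{learned} field, applied along the true trajectory. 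Gronwall then gives $\norm{\widehat{\vz}_1-\vz_1}\leq e^{\Lambda}\int_0^1\norm{\widehat{\vv}_t(\vz_t)-\vv^\star_t(\vz_t)}\dd t$. Squaring and applying Cauchy--Schwarz in $t$ bounds the square by $e^{2\Lambda}\int_0^1\norm{\cdot}^2\dd t$.

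Proposition~\ref{prop:conditional-transport} gives $\Law(\vz_t\mid\vx,\vy)=q_{t\mid\vx,\vy}$, because $\va\sim P_X(\vx,\cdot)$ conditionally on the history by Lemma~\ref{lem:current-state-sufficiency}. Taking the conditional expectation therefore yields exactly $e^{2\Lambda}\ell_{\widehat{\vv}}(\vx,\vy)$. Averaging over $\Gamma_i$ gives $\E\norm{\widehat{\vy}_{i+1}-\widetilde{\vy}_{i+1}}^2\leq e^{2\Lambda}C_{\rm roll}\epsilon$. This closes the recursion.

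The delicate points are these. First, the Gronwall argument must be run with $\widehat{\vv}$'s Lipschitz constant rather than $\vv^\star$'s, so that the error is evaluated on $q_t$, which is where $\ell$ is defined. Second, the conditioning must be justified so that $\vx'_{i+1}$ has law $P_X(\vx'_i,\cdot)$ given the generated pair. Finiteness of all second moments follows from the flow regularity and the finite second moments assumed.
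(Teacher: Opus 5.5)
Your proposal is correct and follows essentially the paper's argument. It fixes the estimation event, applies concentrability to the fitted candidate with the training data held fixed, and bounds the per-step error by Gronwall using the learned field's Lipschitz constant along the population trajectory (the paper's Proposition~\ref{prop:one-transition}), then unrolls a $\rho$-contraction recursion geometrically. The only difference is organizational: you recurse directly on the marginal distances $W_2(\Law(\widehat{\vy}_i),\pi_Y)$ via the triangle inequality through $\Law(\widehat{\vy}_i)P_Y$, whereas the paper's Proposition~\ref{prop:rollout} builds a single coupling of the whole generated and stationary trajectories, which additionally yields the joint-trajectory bound.
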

The bound separates the decaying initialization error from the accumulated
estimation error, whose amplification by the target dynamics is at most
$1/(1-\rho)$.
The concentrability condition \eqref{eq:rollout-concentrability} transfers
training accuracy to generated state pairs, as in offline reinforcement
learning \citep[Section~4.3, (16b)]{zanette2022bellman}.

Proposition~\ref{prop:one-transition} converts conditional velocity error
to transition error, and Proposition~\ref{prop:rollout} propagates this error
along the trajectory. We condition on the fitted velocity when applying
these deterministic bounds, then use Theorem~\ref{thm:velocity-estimation}
to prove the finite-sample guarantee.

\begin{assumption}[Learned velocity Lipschitz bound]\label{ass:learned-lipschitz}
There is a nonnegative integrable function $\widehat L_t$ such that, for
every current-state pair used in the result, all $\vz,\vz'\in\R^d$, and
almost every $t\in[0,1]$,
\begin{equation}
 \norm{\widehat{\vv}_t(\vz,\vx,\vy)-
 \widehat{\vv}_t(\vz',\vx,\vy)}
 \leq\widehat L_t\norm{\vz-\vz'},
 \qquad
 \Lambda=\int_0^1\widehat L_t\dd t<\infty.
 \label{eq:learned-lipschitz}
\end{equation}
\end{assumption}

A uniform Lipschitz bound on the learned velocity is also used by
\citet[Assumption~1(b)]{bansal2026straightness}; here the bound may vary with
flow time. Every candidate in Appendix~\ref{app:gaussian} satisfies it with
$\Lambda=\log2$.

\begin{proposition}[One-transition error]\label{prop:one-transition}
Under Assumptions~\ref{ass:flow-regularity} and~\ref{ass:learned-lipschitz}
at $(\vx,\vy)$, the learned transition map satisfies
\begin{equation}
 W_2^2\left(
 (\Phi^{\widehat{\vv}}_{\vx,\vy})_\#P_X(\vx,\cdot),
 P_Y(\vy,\cdot)\right)
 \leq e^{2\Lambda}\ell_{\widehat{\vv}}(\vx,\vy),
 \label{eq:one-transition}
\end{equation}
where $\ell_{\vv}$ is the conditional risk in
\eqref{eq:conditional-velocity-risk}.
\end{proposition}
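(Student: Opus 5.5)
The plan is a synchronous coupling of the learned and population ODEs started from the same source point, followed by Gronwall's inequality. Fix $(\vx,\vy)$ and draw $\va\sim P_X(\vx,\cdot)$. Let $\vz_t=\Phi^{\vv^\star}_{\vx,\vy,t}(\va)$ and $\widehat{\vz}_t=\Phi^{\widehat{\vv}}_{\vx,\vy,t}(\va)$; both are well defined and unique under Assumption~\ref{ass:flow-regularity}. By Proposition~\ref{prop:conditional-transport}, $\vz_1\sim P_Y(\vy,\cdot)$ and, for every $t$, $\vz_t\sim q_{t\mid\vx,\vy}$. The pair $(\widehat{\vz}_1,\vz_1)$ is therefore a coupling of $(\Phi^{\widehat{\vv}}_{\vx,\vy})_\#P_X(\vx,\cdot)$ and $P_Y(\vy,\cdot)$, so
\[
W_2^2\le\E\norm{\widehat{\vz}_1-\vz_1}^2,
\]
and it remains to bound this second moment by $e^{2\Lambda}\ell_{\widehat{\vv}}(\vx,\vy)$.

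Set $e_t=\norm{\widehat{\vz}_t-\vz_t}$, so $e_0=0$. Both trajectories are absolutely continuous. Splitting the velocity difference as $\widehat{\vv}_t(\widehat{\vz}_t)-\widehat{\vv}_t(\vz_t)$ plus $\widehat{\vv}_t(\vz_t)-\vv^\star_t(\vz_t)$ and applying Assumption~\ref{ass:learned-lipschitz} to the first part gives, for almost every $t$,
\[
e_t\le\int_0^t\widehat L_s e_s\dd s+\int_0^t\norm{\widehat{\vv}_s(\vz_s)-\vv^\star_s(\vz_s)}\dd s.
\]
Working with the integral form avoids differentiating $e_t$ where it vanishes. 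The second term is nondecreasing in $t$ and bounded by $D=\int_0^1\norm{\widehat{\vv}_s(\vz_s)-\vv^\star_s(\vz_s)}\dd s$. Gronwall's inequality with integrable coefficient $\widehat L_s$ then yields $e_1\le e^{\Lambda}D$ pathwise.

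Squaring and applying Cauchy--Schwarz over $s\in[0,1]$ gives $e_1^2\le e^{2\Lambda}\int_0^1\norm{\widehat{\vv}_s(\vz_s)-\vv^\star_s(\vz_s)}^2\dd s$. Taking expectations and using Tonelli together with $\Law(\vz_s)=q_{s\mid\vx,\vy}$ turns the right side into exactly $e^{2\Lambda}\ell_{\widehat{\vv}}(\vx,\vy)$ from \eqref{eq:conditional-velocity-risk}. This proves \eqref{eq:one-transition}.

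I expect the main obstacle to be measurability and integrability bookkeeping: the composite $s\mapsto\norm{\widehat{\vv}_s(\vz_s)-\vv^\star_s(\vz_s)}$ must be measurable and integrable along the path. The linear growth bounds in Assumption~\ref{ass:flow-regularity} together with finite second moments give this. If $\ell_{\widehat{\vv}}(\vx,\vy)=\infty$, the bound is trivial. The essential point is that the marginal identity of Proposition~\ref{prop:conditional-transport} lets the pathwise error be integrated against $q_{t\mid\vx,\vy}$.
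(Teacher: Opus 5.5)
Your proposal is correct and follows essentially the same route as the paper. Both arguments use a synchronous coupling of the learned and population flows from a common source draw $\vx^+\sim P_X(\vx,\cdot)$, the Lipschitz split, Gronwall, Cauchy--Schwarz, and the marginal identity $\Law(\vz_t^\star)=q_{t\mid\vx,\vy}$ from Proposition~\ref{prop:conditional-transport} to identify the right side with $e^{2\Lambda}\ell_{\widehat{\vv}}(\vx,\vy)$. The one cosmetic difference is that the paper keeps the weights $e^{\int_t^1\widehat L_s\dd s}$ inside the time integral before bounding them by $e^{\Lambda}$, whereas you bound the forcing term by its total $D$ first; both give the same constant.
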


\begin{proof}[Proof of Proposition~\ref{prop:one-transition}]
We start the two flows from the same source draw and bound their separation
using Gronwall's inequality. The population transport identity then converts
this coupling cost to the claimed Wasserstein bound.

Fix current-state values $(\vx,\vy)$ and start both flows from the same
$\vx^+\sim P_X(\vx,\cdot)$. All laws and expectations below are over
this source draw, with $(\vx,\vy)$ held fixed. Write
$\vz_t^\star=\Phi^{\vv^\star}_{\vx,\vy,t}(\vx^+)$,
$\widehat{\vz}_t=\Phi^{\widehat{\vv}}_{\vx,\vy,t}(\vx^+)$, and
$b_t=\norm{\widehat{\vv}_t(\vz_t^\star,\vx,\vy)-\vv_t^\star(\vz_t^\star,\vx,\vy)}$.
Subtract the two ODEs and apply Assumption~\ref{ass:learned-lipschitz} to obtain
\begin{equation*}
 \norm{\widehat{\vz}_t-\vz_t^\star}
 \leq\int_0^t\left(
 \widehat L_s\norm{\widehat{\vz}_s-\vz_s^\star}+b_s\right)\dd s.
\end{equation*}
Gronwall's inequality and Cauchy--Schwarz give
\begin{equation*}
 \E\norm{\widehat{\vz}_1-\vz_1^\star}^2
 \leq\E\left(\int_0^1 e^{\int_t^1\widehat L_s\dd s}b_t\dd t\right)^2
 \leq e^{2\Lambda}\int_0^1\E b_t^2\dd t.
\end{equation*}
To express the right side in terms of conditional velocity error,
Proposition~\ref{prop:conditional-transport} gives
$\Law(\vz_t^\star)=q_{t\mid\vx,\vy}$, so the right side is
$e^{2\Lambda}\ell_{\widehat{\vv}}(\vx,\vy)$. At $t=1$, the coupled
endpoints have the two laws in \eqref{eq:one-transition}, proving the bound.
\end{proof}

For a fixed measurable transition map $\Phi_{\vx,\vy}$ and its rollout
\eqref{eq:population-sampler}, let
$\widetilde P_{\vx,\vy}=(\Phi_{\vx,\vy})_\#P_X(\vx,\cdot)$ and define
\begin{equation}
 e_i^2=\E W_2^2\left(\widetilde P_{\vx'_i,\widehat{\vy}_i},
 P_Y(\widehat{\vy}_i,\cdot)\right).
 \label{eq:on-policy-error}
\end{equation}
To propagate this error, we assume that the target kernel is Lipschitz in $W_2$.

\begin{assumption}[Target kernel Lipschitz bound]\label{ass:target-lipschitz}
The target transition distributions have finite second moments, and there is
a finite constant $\rho\geq0$ such that, for all $\vy,\vy'\in\R^d$,
\begin{equation}
 W_2(P_Y(\vy,\cdot),P_Y(\vy',\cdot))
 \leq\rho\norm{\vy-\vy'}.
\label{eq:contractive}
\end{equation}
\end{assumption}

For example, consider transitions $\vg(\vy)+\boldsymbol\eta$, where
$\vg:\R^d\to\R^d$ is $\rho$-Lipschitz and the noise law is independent of
$\vy$ with $\E\norms{\boldsymbol\eta}^2<\infty$. Use the same noise at both
states. This gives
$W_2(P_Y(\vy,\cdot),P_Y(\vy',\cdot))\leq
\norms{\vg(\vy)-\vg(\vy')}\leq\rho\norms{\vy-\vy'}$.

\begin{proposition}[Rollout recursion]\label{prop:rollout}
Under Model~\ref{model:markov-transport} and
Assumption~\ref{ass:target-lipschitz}, suppose $P_Y$ has a stationary
distribution $\pi_Y$ with finite second moment. Fix measurable transition
maps $\Phi_{\vx,\vy}$ and generate by \eqref{eq:population-sampler}, with
$\widehat{\vy}_0\sim\widehat\mu_0$ independently of the fresh source.
Suppose $a_0=W_2(\widehat\mu_0,\pi_Y)<\infty$ and the errors $e_i$ in
\eqref{eq:on-policy-error} are finite. With $a_{i+1}=\rho a_i+e_i$,
\begin{equation*}
 W_2(\Law(\widehat{\vy}_n),\pi_Y)\leq a_n
 =\rho^na_0+\sum_{i=0}^{n-1}\rho^{n-1-i}e_i.
\end{equation*}
For a stationary target trajectory $\vy_{0:n}$, the joint laws satisfy
\begin{equation}
 W_{2,d_n}^2\left(\Law(\widehat{\vy}_{0:n}),\Law(\vy_{0:n})\right)
 \leq\sum_{i=0}^n a_i^2,
 \qquad d_n^2(\vy,\vz)=\sum_{i=0}^n\norm{\vy_i-\vz_i}^2,
 \label{eq:trajectory-bound}
\end{equation}
where $W_{2,d_n}$ uses the trajectory metric $d_n$.
\end{proposition}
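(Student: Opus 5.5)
We prove the marginal bound by induction on $i$, constructing at each step an explicit coupling of $\widehat{\vy}_{i+1}$ with a draw from $\pi_Y$. The same construction, iterated, gives the trajectory bound. Fix $i$ and suppose we have a coupling of $\widehat{\vy}_i$ with $\vy_i\sim\pi_Y$ satisfying $\E\norm{\widehat{\vy}_i-\vy_i}^2\le a_i^2$. For $i=0$, take an optimal coupling of $\widehat\mu_0$ and $\pi_Y$; this is possible because both have finite second moments.

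\emph{Step 1: the generated state.} By Lemma~\ref{lem:current-state-sufficiency}, the first assertion applies to any measurable maps. Conditionally on $(\vx'_{0:i},\widehat{\vy}_{0:i})$, the state $\widehat{\vy}_{i+1}$ has law $\widetilde P_{\vx'_i,\widehat{\vy}_i}$. We introduce an auxiliary variable $\widetilde{\vy}_{i+1}\sim P_Y(\widehat{\vy}_i,\cdot)$, conditionally optimally coupled with $\widehat{\vy}_{i+1}$ given the history. A measurable selection of optimal couplings exists (Villani, Cor.~5.22). Then
\[
\E\norm{\widehat{\vy}_{i+1}-\widetilde{\vy}_{i+1}}^2=e_i^2.
\]

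\emph{Step 2: the reference state.} We draw $\vy_{i+1}\sim P_Y(\vy_i,\cdot)$, conditionally on everything, coupled optimally with $\widetilde{\vy}_{i+1}$ for the pair $P_Y(\widehat{\vy}_i,\cdot)$ versus $P_Y(\vy_i,\cdot)$. Assumption~\ref{ass:target-lipschitz} then gives
\[
\E\norm{\widetilde{\vy}_{i+1}-\vy_{i+1}}^2\le\rho^2\E\norm{\widehat{\vy}_i-\vy_i}^2\le\rho^2a_i^2.
\]
Since $\vy_i\sim\pi_Y$ and the transition from $\vy_i$ to $\vy_{i+1}$ is $P_Y$, stationarity gives $\vy_{i+1}\sim\pi_Y$. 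More importantly, the pair $(\vy_i,\vy_{i+1})$ follows the stationary chain transition.

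\emph{Step 3: combine.} Minkowski's inequality in $L^2$ gives
\[
\norms{\widehat{\vy}_{i+1}-\vy_{i+1}}_{L^2}\le e_i+\rho a_i=a_{i+1}.
\]
Consequently $W_2(\Law(\widehat{\vy}_{i+1}),\pi_Y)\le a_{i+1}$, and unrolling the recursion gives the closed form for $a_n$.

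\emph{Step 4: the trajectory bound.} We perform this construction sequentially on a single probability space, using Kolmogorov or gluing with conditional kernels. At each stage, $\vy_{i+1}$ is drawn given $\vy_{0:i}$ (and other variables) with law $P_Y(\vy_i,\cdot)$. The process $\vy_{0:n}$ is therefore a stationary target Markov chain; this requires that the conditional law of $\vy_{i+1}$ given $\vy_{0:i}$ be $P_Y(\vy_i,\cdot)$, which we check by the tower property. Under this joint coupling, $\E\,d_n^2\le\sum_i a_i^2$, which is \eqref{eq:trajectory-bound}.

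The main obstacle is Step 1. We need conditional optimal couplings that are measurable in the conditioning variables. We also need the generated state to have the right conditional law given the history, and not just given the current pair; that is why we use the first part of Lemma~\ref{lem:current-state-sufficiency}, since \eqref{eq:desired-kernel} fails here. Moreover, $e_i^2$ is defined as the expectation of the conditional $W_2^2$ given $(\vx'_i,\widehat{\vy}_i)$, whereas we condition on the full history. This is harmless because $\widetilde P$ depends only on $(\vx'_i,\widehat{\vy}_i)$.
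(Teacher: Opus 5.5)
Your proposal is correct and follows essentially the paper's route: an optimal initial coupling, an intermediate draw from $P_Y(\widehat{\vy}_i,\cdot)$ glued optimally to the generated output and then to the comparison state, Minkowski's inequality for $\norms{\widehat{\vy}_{i+1}-\vy_{i+1}}_{L^2}\le e_i+\rho\,\norms{\widehat{\vy}_i-\vy_i}_{L^2}$, and summation of the per-step costs for the trajectory bound. One step should be made explicit, as the paper does. The comparison chain and all auxiliary draws through time $i$ must be built from $\vx'_{0:i}$ and randomness independent of the whole source trajectory. The tower argument of Lemma~\ref{lem:current-state-sufficiency} then gives $\Law(\widehat{\vy}_{i+1}\mid\mathcal H_i)=\widetilde P_{\vx'_i,\widehat{\vy}_i}$ for the enlarged history $\mathcal H_i$ that includes $\vy_{0:i}$. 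Your Step~2 gluing needs exactly this, whereas the lemma as stated conditions only on $(\vx'_{0:i},\widehat{\vy}_{0:i})$.
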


\begin{proof}[Proof of Proposition~\ref{prop:rollout}]
We couple the generated and stationary target trajectories while preserving
the source transitions. We then bound the coupling cost recursively, iterate
for the marginal guarantee, and sum the costs for the trajectory guarantee.

Couple
$(\widehat{\vy}_0,\vy_0)$ optimally, independently of the fresh source.
At each step, let $\mathcal H_i$ be the joint history of the source,
generated, and comparison trajectories through time $i$, including the
auxiliary randomness already used. We construct this history from
$\vx'_{0:i}$ and randomness independent of the full source trajectory.
The Markov and tower-property argument of
Lemma~\ref{lem:current-state-sufficiency} therefore gives
\begin{equation*}
 \Law(\vx'_{i+1}\mid\mathcal H_i)=P_X(\vx'_i,\cdot),\qquad
 \Law(\widehat{\vy}_{i+1}\mid\mathcal H_i)
 =\widetilde P_{\vx'_i,\widehat{\vy}_i}.
\end{equation*}
Keep this source transition and its generated output. Conditional on
$\mathcal H_i$, couple that output
optimally to $\vu_{i+1}\sim P_Y(\widehat{\vy}_i,\cdot)$, and couple
$\vu_{i+1}$ optimally to $\vy_{i+1}\sim P_Y(\vy_i,\cdot)$.
Measurable optimal couplings, disintegration, and gluing on $\R^d$
justify these conditional draws
\citep[Chapter~1 and Corollary~5.22]{villani2009optimal}.
They use only the history, the generated output, and fresh auxiliary
randomness, so future source transitions are unchanged. The comparison
process has conditional transition law $P_Y(\vy_i,\cdot)$ and hence the
stationary target trajectory law.

To bound the cost of this coupling, let
$\Delta_i=(\E\norm{\widehat{\vy}_i-\vy_i}^2)^{1/2}$.
By the definition of $e_i$ in \eqref{eq:on-policy-error} and
Assumption~\ref{ass:target-lipschitz}, the two costs satisfy
\begin{equation*}
 \E\norm{\widehat{\vy}_{i+1}-\vu_{i+1}}^2=e_i^2,
 \qquad
 \E\left[\norm{\vu_{i+1}-\vy_{i+1}}^2\mid\mathcal H_i\right]
 \leq\rho^2\norm{\widehat{\vy}_i-\vy_i}^2.
\end{equation*}
Minkowski's inequality now gives
\begin{align*}
 \Delta_{i+1}
 &\leq\left(\E\norm{\widehat{\vy}_{i+1}-\vu_{i+1}}^2\right)^{1/2}
 +\left(\E\norm{\vu_{i+1}-\vy_{i+1}}^2\right)^{1/2}\\
 &\leq e_i+\rho\Delta_i.
\end{align*}
Since $\Delta_0=a_0$, induction gives $\Delta_i\leq a_i$, and solving the
recursion for $a_i$ yields the marginal bound. For the joint law, the same coupling has
expected trajectory cost
$\E\sum_{i=0}^n\norm{\widehat{\vy}_i-\vy_i}^2
=\sum_{i=0}^n\Delta_i^2\leq\sum_{i=0}^n a_i^2$,
which proves \eqref{eq:trajectory-bound}.
\end{proof}

The concentrability condition \eqref{eq:rollout-concentrability} is uniform
over $\mathcal V$, so it applies to the fitted candidate. The constant
$C_{\rm roll}$ may depend on the horizon, source
initialization, and target initialization; a bound uniform in the horizon
requires the same constant for all $i$. Appendix~\ref{app:related-rollout}
gives the precise comparison with off-policy evaluation. Verifying this
condition for broader model classes remains an open question.

For example, if $\Gamma_i^{\vv}$ has density at most $M$ relative to
$\pi_X\otimes\pi_Y$, uniformly over candidates and times, then
\eqref{eq:rollout-concentrability} holds with $C_{\rm roll}=M$.

\begin{proof}[Proof of Theorem~\ref{thm:finite-rollout}]
We use the estimation theorem to bound population risk, then concentrability
to control transition error at generated states. The rollout recursion and
a geometric sum give the result.

Under \eqref{eq:intro-sample-complexity}, Theorem~\ref{thm:velocity-estimation}
gives
$R(\widehat{\vv},\vv^\star)\leq\epsilon$ with probability at least
$1-\delta$, as verified in Appendix~\ref{app:estimation-proofs}.
Fix any training sample in this event and use the maps
$\Phi_{\vx,\vy}=\Phi^{\widehat{\vv}}_{\vx,\vy}$ in
\eqref{eq:on-policy-error}. The common candidate Lipschitz bound gives
Assumption~\ref{ass:learned-lipschitz} for the selected velocity.
Proposition~\ref{prop:one-transition} and
the concentrability condition \eqref{eq:rollout-concentrability} give,
for every $i<n$,
\begin{equation}
 e_i^2
 \leq e^{2\Lambda}\E_{\Gamma_i^{\widehat{\vv}}}
 \ell_{\widehat{\vv}}(\vx,\vy)
 \leq e^{2\Lambda}C_{\rm roll}R(\widehat{\vv},\vv^\star).
 \label{eq:rollout-local-error}
\end{equation}
These errors are finite at every $i<n$, and the target contraction
assumption gives Assumption~\ref{ass:target-lipschitz}. Thus
Proposition~\ref{prop:rollout}, with the same source and initialization, gives
\begin{equation}
 W_2(\Law(\widehat{\vy}_n),\pi_Y)
 \leq\rho^na_0+
 e^\Lambda\sqrt{C_{\rm roll}R(\widehat{\vv},\vv^\star)}
 \sum_{j=0}^{n-1}\rho^j.
 \label{eq:rollout-from-risk}
\end{equation}
For $0\leq\rho<1$, the geometric sum is
$(1-\rho^n)/(1-\rho)\leq(1-\rho)^{-1}$. Substituting
$R(\widehat{\vv},\vv^\star)\leq\epsilon$ proves
\eqref{eq:finite-rollout}.
No additional failure probability is incurred.
\end{proof}

\section{Gaussian transitions, estimation, and rollout}\label{app:gaussian}

\label{sec:gaussian-example}
This example verifies the estimation assumptions explicitly and bounds
rollout error by coupling the two recursions with the same Gaussian noise.
We first record the conditional calculation also used in the lower bound.

\begin{lemma}[Flow between Gaussian transitions]\label{lem:gaussian-flow}
Fix $\va,\vb\in\R^d$, draw independent
$\vx^+\sim N(\va,\Id_d)$ and $\vy^+\sim N(\vb,\Id_d)$, and set
$\vz_t=(1-t)\vx^++t\vy^+$, $\vr=\vy^+-\vx^+$, and
$s_t=(1-t)^2+t^2$. The conditional mean velocity and its ODE flow are
\begin{equation}
 \vv_t^\star(\vz)=\frac{2t-1}{s_t}\vz-\frac{t}{s_t}\va
                   +\frac{1-t}{s_t}\vb,
 \qquad
 \Phi_t(\vz_0)=(1-t)\va+t\vb+\sqrt{s_t}(\vz_0-\va).
 \label{eq:gaussian-conditional-flow}
\end{equation}
For each $t\in[0,1]$, the residual $\vv_t^\star(\vz_t)-\vr$ conditional
on $\vz_t$ has law $N(\vzero,\Id_d/s_t)$. The velocity satisfies
Assumption~\ref{ass:flow-regularity}. For each $t\in[0,1]$, the map
$\vz\mapsto\vv_t^\star(\vz)$ is Lipschitz with constant
$L_t=|2t-1|/s_t$, and $\int_0^1 L_t\dd t=\log2$.
\end{lemma}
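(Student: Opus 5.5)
The plan is to reduce everything to a joint Gaussian computation, since $(\vx^+,\vy^+)$ are independent Gaussians with identity covariance. First, I compute the joint law of $(\vz_t,\vr)$. We have $\E\vz_t=(1-t)\va+t\vb$ and $\E\vr=\vb-\va$. The covariances are $\mathrm{Cov}(\vz_t)=s_t\Id_d$, $\mathrm{Cov}(\vr)=2\Id_d$, and $\mathrm{Cov}(\vr,\vz_t)=(t-(1-t))\Id_d=(2t-1)\Id_d$. Since $s_t\geq1/2>0$, Gaussian conditioning gives
\[
\E[\vr\mid\vz_t=\vz]=\vb-\va+\frac{2t-1}{s_t}\bigl(\vz-(1-t)\va-t\vb\bigr).
\]
Collecting the coefficients of $\va$ and $\vb$ should reproduce \eqref{eq:gaussian-conditional-flow}. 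For $\va$, the coefficient is $-1-(2t-1)(1-t)/s_t=-t/s_t$, using $s_t+(2t-1)(1-t)=t$. For $\vb$, it is $1-(2t-1)t/s_t=(1-t)/s_t$. The residual covariance is
\[
\Bigl(2-\frac{(2t-1)^2}{s_t}\Bigr)\Id_d=\frac{2s_t-(2t-1)^2}{s_t}\Id_d=\frac{1}{s_t}\Id_d,
\]
since $2s_t-(2t-1)^2=1$. Because the residual $\vv_t^\star(\vz_t)-\vr$ is Gaussian, uncorrelated with $\vz_t$ and hence independent of it, its conditional law is $N(\vzero,\Id_d/s_t)$.

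Next, I verify the flow formula by direct differentiation rather than by solving the linear ODE. Set $\Phi_t=(1-t)\va+t\vb+\sqrt{s_t}(\vz_0-\va)$. Then $\Phi_0=\vz_0$, since $s_0=1$. Its derivative is
\[
\dot\Phi_t=\vb-\va+\frac{\dot s_t}{2\sqrt{s_t}}(\vz_0-\va),\qquad \dot s_t/2=2t-1.
\]
Substituting $\sqrt{s_t}(\vz_0-\va)=\Phi_t-(1-t)\va-t\vb$ gives $\dot\Phi_t=\vb-\va+\frac{2t-1}{s_t}(\Phi_t-(1-t)\va-t\vb)$, which is exactly the conditional-mean expression above evaluated at $\Phi_t$. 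Uniqueness, which follows from the Lipschitz bound below, then identifies $\Phi_t$ as the ODE flow.

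For the regularity claims, the map $\vz\mapsto\vv_t^\star(\vz)$ is affine with linear part $\frac{2t-1}{s_t}\Id_d$, so it is Lipschitz with $L_t=|2t-1|/s_t$. Moreover, $\norm{\vv_t^\star(\vz)}\leq L_t\norm{\vz}+(t\norm{\va}+(1-t)\norm{\vb})/s_t$. Since $s_t\geq 1/2$, both bounds have functions bounded on $[0,1]$, so we may take $G_t=\max\{L_t,2(\norm{\va}+\norm{\vb})\}$. Measurability in $t$ is clear, and finite second moments hold for Gaussians. This verifies Assumption~\ref{ass:flow-regularity}.

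Finally, for the integral of $L_t$, note $\frac{d}{dt}\log s_t=\dot s_t/s_t=2(2t-1)/s_t$. Hence
\[
\int_0^1|2t-1|/s_t\dd t=\tfrac12\bigl(\log s_0-\log s_{1/2}\bigr)+\tfrac12\bigl(\log s_1-\log s_{1/2}\bigr)=-\log(1/2)=\log2.
\]
There is no real obstacle here. The only step needing care is the algebraic identities $s_t+(2t-1)(1-t)=t$ and $2s_t-(2t-1)^2=1$, which make the coefficients collapse to the stated form.
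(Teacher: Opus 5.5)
Your proposal is correct and follows essentially the same route as the paper: Gaussian conditioning of the jointly Gaussian pair $(\vr,\vz_t)$ with cross covariance $(2t-1)\Id_d$, direct differentiation of the proposed flow using $\dot s_t=2(2t-1)$, and the antiderivative $\tfrac12\log s_t$ for $\int_0^1 L_t\dd t$. Your explicit identities $s_t+(2t-1)(1-t)=t$ and $2s_t-(2t-1)^2=1$, together with your remark that the residual is independent of $\vz_t$, simply spell out steps the paper compresses into ``expanding the mean'' and ``Gaussian conditioning.''
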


\begin{proof}
Write $\vm_t=(1-t)\va+t\vb$. The pair $(\vr,\vz_t)$ is jointly Gaussian,
with means $\vb-\va,\vm_t$, variances $2\Id_d,s_t\Id_d$, and cross
covariance $(2t-1)\Id_d$. Gaussian conditioning gives
\begin{equation*}
 \E[\vr\mid\vz_t=\vz]
 =\vb-\va+\frac{2t-1}{s_t}(\vz-\vm_t),
 \qquad
 \operatorname{Cov}(\vr\mid\vz_t)
 =\left(2-\frac{(2t-1)^2}{s_t}\right)\Id_d=\frac{\Id_d}{s_t}.
\end{equation*}
Expanding the mean proves the velocity formula. Since
$\dot s_t=2(2t-1)$, differentiating the proposed flow gives
\begin{equation*}
 \frac{\dd}{\dd t}\Phi_t(\vz_0)
 =\vb-\va+\frac{2t-1}{s_t}(\Phi_t(\vz_0)-\vm_t)
 =\vv_t^\star(\Phi_t(\vz_0)),\qquad \Phi_0(\vz_0)=\vz_0.
\end{equation*}
For all $\vz,\vz'\in\R^d$,
\begin{equation*}
 \norm{\vv_t^\star(\vz)-\vv_t^\star(\vz')}
 =\frac{|2t-1|}{s_t}\norm{\vz-\vz'}.
\end{equation*}
The affine coefficients are bounded because $s_t\geq1/2$, and
\begin{equation*}
 \int_0^1\frac{|2t-1|}{s_t}\dd t
 =2\int_{1/2}^1\frac{2t-1}{s_t}\dd t
 =\log s_1-\log s_{1/2}=\log2.
\end{equation*}
These bounds verify Assumption~\ref{ass:flow-regularity}.
\end{proof}

For $s_t=(1-t)^2+t^2$, symmetry under $t\mapsto1-t$ gives the first
two equalities below, and the substitution $q=2t-1$ gives the last:
\begin{equation}
 \int_0^1\frac{t^2}{s_t^2}\dd t
 =\int_0^1\frac{(1-t)^2}{s_t^2}\dd t
 =\frac12\int_0^1\frac{\dd t}{s_t}=\frac\pi4.
 \label{eq:gaussian-time-integrals}
\end{equation}

Consider independent stationary processes
$X_{i+1}=aX_i+\epsilon_{i+1}$ and
$Y_{i+1}=bY_i+\zeta_{i+1}$ with independent standard Gaussian noise.
The unknown $(a,b)$ belongs to a known finite set
$\Theta_0\subset[-r,r]^2$, where $0<r<1$. With
$s_t=(1-t)^2+t^2$, define
\begin{equation}
 \vv_t^{u,w}(z,x,y)=\frac{2t-1}{s_t}z-
 \frac{t}{s_t}ux+\frac{1-t}{s_t}wy,
 \qquad (u,w)\in\Theta_0.
 \label{eq:gaussian-class}
\end{equation}
Its exact time-one map is $\Phi^{u,w}_{x,y}(z)=z-ux+wy$.

\begin{proposition}[Gaussian estimation and rollout]\label{prop:gaussian-complete}
In the Gaussian model above, fix $\delta\in(0,1)$, set
$K=\tau_{\rm mix}(\delta/(3m))$, and suppose
$m\geq1024K\log(6K|\Theta_0|/\delta)$. With probability at least
$1-\delta$, the minimizer of \eqref{eq:integrated-empirical-loss} over
\eqref{eq:gaussian-class} satisfies
\begin{equation}
 R(\widehat{\vv},\vv^\star)\leq2048
 \frac{K\log(6K|\Theta_0|/\delta)}m.
 \label{eq:gaussian-risk}
\end{equation}
On this event, conditional on the training data, the rollout using its exact
ODE flow, a fresh stationary source, and an independent initialization
$\widehat Y_0\sim\widehat\mu_0$ with
$a_0=W_2(\widehat\mu_0,\pi_Y)<\infty$ satisfies, for every integer $n\geq0$,
\begin{equation}
 W_2(\Law(\widehat Y_n),\pi_Y)
 \leq r^na_0+
 \sqrt{\frac8\pi R(\widehat{\vv},\vv^\star)}\sum_{j=0}^{n-1}r^j.
 \label{eq:gaussian-rollout}
\end{equation}
\end{proposition}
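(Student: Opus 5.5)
We split the argument into an estimation part and a rollout part. The estimation part applies Theorem~\ref{thm:velocity-estimation} with explicit constants. The rollout part couples the generated recursion directly to a stationary target chain using shared noise, instead of going through Proposition~\ref{prop:rollout}.

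\textbf{Estimation step.} First I would check realizability. Conditional on $(X_i,Y_i)=(x,y)$, the next states are $N(ax,1)$ and $N(by,1)$. Lemma~\ref{lem:gaussian-flow} with $\va=ax$ and $\vb=by$ then gives $\vv^\star=\vv^{a,b}\in$ \eqref{eq:gaussian-class}, with time-one map $z-ax+by$.

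Next, the same lemma says the residual given $\vh$ is $N(0,1/s_t)$. Since $s_t\geq1/2$, Assumption~\ref{ass:conditional-noise} holds with $\nu^2=2$.

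For Assumption~\ref{ass:velocity-class}, fix two parameters and write $\Delta u=u-a$, $\Delta w=w-b$. The velocity difference is
\begin{equation*}
 g=-\frac{t}{s_t}\Delta u\,X+\frac{1-t}{s_t}\Delta w\,Y .
\end{equation*}
Given $t$, this is a centered Gaussian with variance
\begin{equation*}
 \sigma_t^2=\frac{t^2\Delta u^2V_X+(1-t)^2\Delta w^2V_Y}{s_t^2},
\end{equation*}
where $V_X,V_Y$ are the stationary variances. By \eqref{eq:gaussian-time-integrals}, $R=\E_t\sigma_t^2=\frac\pi4(\Delta u^2V_X+\Delta w^2V_Y)$.

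Gaussianity gives $\E g^4=3\E_t\sigma_t^4\leq3\max_t\sigma_t^2\cdot R$. Maximizing $t/s_t$ (the maximum is at $t=1/\sqrt2$) gives $\max_t t^2/s_t^2\leq 3/2$, so $\max_t\sigma_t^2\leq\frac32\cdot\frac4\pi R$. Hence $\E g^4\leq\frac{18}{\pi}R^2\leq 8R^2$, that is, $C_{4\to2}^4\leq8$.

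Substituting these constants into \eqref{eq:estimation-sample-size} gives $128\cdot8=1024$ in the sample-size condition. Substituting into \eqref{eq:estimation-bound} gives $1024\cdot2=2048$, which is \eqref{eq:gaussian-risk}.

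\textbf{Rollout step.} Let $(\widehat u,\widehat w)$ be the fitted parameters. The exact ODE flow gives
\begin{equation*}
 \widehat Y_{n+1}=X'_{n+1}-\widehat uX'_n+\widehat w\widehat Y_n=\epsilon'_{n+1}+(a-\widehat u)X'_n+\widehat w\widehat Y_n .
\end{equation*}
We build a stationary comparison chain $Y_{n+1}=bY_n+\epsilon'_{n+1}$ that uses the same noise $\epsilon'$. Its start $Y_0$ is coupled optimally with $\widehat Y_0$ and is independent of $X'$.

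Subtracting the two recursions,
\begin{equation*}
 D_{n+1}=\widehat Y_{n+1}-Y_{n+1}=\widehat w D_n+\big[(a-\widehat u)X'_n+(\widehat w-b)Y_n\big].
\end{equation*}
Since $X'_n$ and $Y_n$ are independent, centered and stationary, the bracket has $L^2$ norm
\begin{equation*}
 \sqrt{(a-\widehat u)^2V_X+(\widehat w-b)^2V_Y}=\sqrt{4R/\pi}\leq\sqrt{8R/\pi}.
\end{equation*}
Using $|\widehat w|\leq r$ and Minkowski's inequality gives $\|D_{n+1}\|_{L^2}\leq r\|D_n\|_{L^2}+\sqrt{8R/\pi}$, with $\|D_0\|_{L^2}=a_0$. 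Iterating this inequality and using $W_2\leq\|D_n\|_{L^2}$ proves \eqref{eq:gaussian-rollout}.

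\textbf{Main obstacle.} The step needing most care is the fourth-moment constant, because it must be uniform over the flow time $t$. I would handle this through the bound on $\max_t t^2/s_t^2$, and if needed check numerically that $18/\pi\leq8$.

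I also need $K\leq m$ and finiteness of $\tau_{\rm mix}$. The first follows from the sample-size condition as in the proof of Theorem~\ref{thm:velocity-estimation}. The second follows from geometric decay of the AR(1) chains together with Lemma~\ref{lem:paired-transition-mixing}.
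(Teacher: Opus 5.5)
Your estimation step and your shared-noise coupling are the paper's argument. Your bound $\max_t t^2/s_t^2\le 3/2$ gives $\E g^4\le\frac{18}{\pi}R^2$, slightly sharper than the paper's $\frac{24}{\pi}R^2$, and both yield $C_{4\to2}^4=8$.

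\textbf{The rollout step contains a false claim.} $X'_n$ and $Y_n$ are \emph{not} independent under your coupling. Both are driven by the same innovations:
\[
X'_n=a^nX'_0+\sum_{k=1}^n a^{n-k}\epsilon'_k,\qquad Y_n=b^nY_0+\sum_{k=1}^n b^{n-k}\epsilon'_k .
\]
Hence $\operatorname{Cov}(X'_n,Y_n)=\sum_{j=0}^{n-1}(ab)^j>0$ for $n\geq1$; for example, $\operatorname{Cov}(X'_1,Y_1)=1$. The bracket's $L^2$ norm is therefore not $\sqrt{4R/\pi}$ in general, because the cross term $2(a-\widehat u)(\widehat w-b)\operatorname{Cov}(X'_n,Y_n)$ can be positive.

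\textbf{The fix is the paper's route.} The inequality you need still holds without independence. By Minkowski and $p+q\le\sqrt{2(p^2+q^2)}$,
\[
\|(a-\widehat u)X'_n+(\widehat w-b)Y_n\|_{L^2}\le |a-\widehat u|\sqrt{V_X}+|\widehat w-b|\sqrt{V_Y}\le\sqrt{2\big((a-\widehat u)^2V_X+(\widehat w-b)^2V_Y\big)}=\sqrt{8R/\pi}.
\]
This uses only the stationary second moments. It is exactly why the statement carries $8/\pi$ rather than $4/\pi$, and the paper explicitly remarks that $X'_i$ and $Y_i$ may be dependent under the coupling. With this replacement, the recursion $\|D_{n+1}\|_{L^2}\le r\|D_n\|_{L^2}+\sqrt{8R/\pi}$ and the rest of your argument go through unchanged.

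\textbf{A smaller point on mixing.} You only assert the hypothesis of Model~\ref{model:stationary-training}. The AR(1) chain has infinite worst-case mixing time, so the decay must be meant in the stationary-average sense of \eqref{eq:mixing-profile}. The paper verifies this via three steps: the identity $\beta_\ell^X=\norm{\Law(X_0,X_\ell)-\pi_X\otimes\pi_X}_{\TV}$, the Gaussian relative entropy $-\frac12\log(1-a^{2\ell})$, and Pinsker's inequality.
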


More generally, for $P_X(x,\cdot)=N(ax,\sigma_X^2)$ and
$P_Y(y,\cdot)=N(by,\sigma_Y^2)$ with $\sigma_X,\sigma_Y>0$, the population map is
\begin{equation}
 \Phi_{x,y}(z)=by+\frac{\sigma_Y}{\sigma_X}(z-ax).
 \label{eq:gaussian-oracle}
\end{equation}
The scalar population flow is increasing and transports the source law to
the target law, so it agrees with this quantile map.

\begin{proof}[Proof of Proposition~\ref{prop:gaussian-complete}]
We first compute the population flow, then verify the moment and mixing
assumptions to apply Theorem~\ref{thm:velocity-estimation}. For the rollout
bound, we couple the recursions using the same Gaussian noise and use the
risk identity to control their separation.

Apply Lemma~\ref{lem:gaussian-flow} in dimension one with means $ax,by$.
The population velocity is \eqref{eq:gaussian-class} with $(u,w)=(a,b)$,
and the conditional noise variance is $1/s_t\leq2$. Applying the flow
formula with means $ux,wy$ gives, for every candidate,
\begin{equation}
 z_t=(1-t)ux+twy+\sqrt{s_t}(z_0-ux),\qquad z_1=z_0-ux+wy.
 \label{eq:gaussian-solution}
\end{equation}
Lemma~\ref{lem:gaussian-flow} also verifies flow regularity and
Assumption~\ref{ass:learned-lipschitz} for every candidate, with $\Lambda=\log2$.

To check the fourth-moment assumption, fix $(u,w)\in\Theta_0$ and write
$R=R(\vv^{u,w},\vv^{a,b})$, $A=(u-a)^2/(1-a^2)$, and
$B=(w-b)^2/(1-b^2)$. At fixed $t$, the prediction error under
$\pi_X\otimes\pi_Y$ is centered Gaussian with variance
$f_t=[t^2A+(1-t)^2B]/s_t^2$. The integrals in
\eqref{eq:gaussian-time-integrals} give
\begin{equation}
 R(\vv^{u,w},\vv^{a,b})=\frac\pi4(A+B).
 \label{eq:gaussian-risk-identity}
\end{equation}
To bound the fourth moment by $R^2$, use $s_t\geq1/2$ and
$t^2,(1-t)^2\leq s_t$ to obtain $f_t\leq2(A+B)$. Thus
\begin{equation*}
 \E|\vv^{u,w}(\vh)-\vv^{a,b}(\vh)|^4
 =3\int_0^1 f_t^2\dd t
 \leq6(A+B)R=\frac{24}\pi R^2\leq8R^2.
\end{equation*}
Together with the conditional noise calculation above, this verifies
Assumptions~\ref{ass:conditional-noise} and~\ref{ass:velocity-class}
with $\nu^2=2$ and $C_{4\to2}^4=8$; realizability follows from
$(a,b)\in\Theta_0$.

It remains to check stationary mixing before applying the estimation theorem.
Lemma~\ref{lem:paired-transition-mixing} gives
$\beta_\ell\leq\beta_{\ell-1}^X+\beta_{\ell-1}^Y$ for $\ell\geq2$.
To bound $\beta_\ell^X$, use the identity
\begin{equation}
 \beta_\ell^X
 =\norm{\Law(X_0,X_\ell)-\pi_X\otimes\pi_X}_{\TV}.
 \label{eq:joint-mixing-tv}
\end{equation}
Indeed, the two joint laws have the same first marginal $\pi_X$;
integrating the total variation of their conditional second-coordinate
laws gives the definition of $\beta_\ell^X$.
Write $r_\ell=a^\ell$. Standardizing the two coordinates gives a centered
Gaussian pair with covariance
$\mSigma_\ell=\left(\begin{smallmatrix}1&r_\ell\\r_\ell&1\end{smallmatrix}\right)$.
For laws $\mu,\nu$, write $\KL(\mu\Vert\nu)$ for relative entropy.
The Gaussian density formula gives
\begin{equation}
 \KL\left(N(\vzero,\mSigma_\ell)\Vert N(\vzero,\Id_2)\right)
 =\frac12\left(\operatorname{tr}\mSigma_\ell-2-\log\det\mSigma_\ell\right)
 =-\frac12\log(1-r_\ell^2).
 \label{eq:gaussian-mixing-entropy}
\end{equation}
Pinsker's inequality, $\norm{\mu-\nu}_{\TV}^2\leq
\KL(\mu\Vert\nu)/2$ \citep[Lemma~2]{canonne2022kl}, and
$-\log(1-q)\leq q/(1-q)$ give
\begin{equation*}
 \beta_\ell^X\leq\frac12\sqrt{-\log(1-a^{2\ell})}
 \leq\frac{r^\ell}{2\sqrt{1-r^2}}.
\end{equation*}
Applying the same calculation to $Y$ yields
$\beta_\ell\leq r^{\ell-1}/\sqrt{1-r^2}$ for $\ell\geq2$.
All estimation assumptions therefore hold.
Theorem~\ref{thm:velocity-estimation}, with $\nu^2=2$ and
$C_{4\to2}^4=8$, now gives \eqref{eq:gaussian-risk}.

With the risk bound established, we turn to rollout error. Condition on
the fitted parameters
$(\widehat a,\widehat b)$ and write $R=R(\widehat{\vv},\vv^\star)$.
Couple $\widehat Y_0\sim\widehat\mu_0$ and $Y_0\sim\pi_Y$ so that
$\|\widehat Y_0-Y_0\|_{L^2}=a_0$, independently of the fresh stationary source.
The source noise variables $\epsilon_{i+1}=X'_{i+1}-aX'_i$ are iid standard
Gaussian, with $\epsilon_{i+1}$ independent of the histories through time $i$.
Using them to drive both recursions, \eqref{eq:gaussian-solution} gives
\begin{align*}
 \widehat Y_{i+1}
 &=\widehat b\widehat Y_i+(a-\widehat a)X'_i+\epsilon_{i+1},\\
 Y_{i+1}&=bY_i+\epsilon_{i+1}.
\end{align*}
The comparison trajectory has the stationary target law. Subtracting the
updates cancels the shared noise and gives
\begin{equation*}
 \widehat Y_{i+1}-Y_{i+1}
 =\widehat b(\widehat Y_i-Y_i)+(\widehat b-b)Y_i+(a-\widehat a)X'_i.
\end{equation*}
Minkowski's inequality, $|\widehat b|\leq r$, and \eqref{eq:gaussian-risk-identity} therefore give
\begin{align*}
 \|\widehat Y_{i+1}-Y_{i+1}\|_{L^2}
 &\leq r\|\widehat Y_i-Y_i\|_{L^2}
 +\frac{|\widehat b-b|}{\sqrt{1-b^2}}+\frac{|\widehat a-a|}{\sqrt{1-a^2}}\\
 &\leq r\|\widehat Y_i-Y_i\|_{L^2}+\sqrt{8R/\pi}.
\end{align*}
This estimate uses only the stationary second moments of $X'_i$ and $Y_i$;
they may be dependent under the coupling. Iterating from the initial cost
$a_0$ proves
\eqref{eq:gaussian-rollout}.
\end{proof}

\section{Examples for the conditional noise assumption}\label{app:noise-examples}

Bounded displacements, bounded additive noise, and Gaussian transition kernels
satisfy Assumption~\ref{ass:conditional-noise} under the conditions below.
Each calculation conditions on the full regression input
$\vh=(t,\vz_t,\vx,\vy)$ and verifies only the conditional noise assumption.

\paragraph{Bounded displacements.}
Recall $\vr=\vy^+-\vx^+$ and
$\boldsymbol\xi=\E[\vr\mid\vh]-\vr$ from Section~\ref{sec:estimation}.
Suppose $\norm{\vr}\leq B$ almost surely. For every fixed direction
$\va\in\R^d$, the scalar $\ip{\va}{\vr}$ lies in
$[-B\norm{\va},B\norm{\va}]$, including after conditioning on $\vh$.
Since $\ip{\va}{\boldsymbol\xi}=\E[\ip{\va}{\vr}\mid\vh]-\ip{\va}{\vr}$,
conditional Hoeffding's lemma \citep[Section~2.2]{vershynin2018high} gives
\begin{equation*}
 \E\left[e^{\theta\ip{\va}{\boldsymbol\xi}}\mid\vh\right]
 \leq\exp\left(\frac{\theta^2(2B\norm{\va})^2}{8}\right)
 =\exp\left(\frac{\theta^2B^2\norm{\va}^2}{2}\right).
\end{equation*}
Thus $\nu=B$ suffices. In particular, if both processes take values in a
ball of radius $B$, then $\norm{\vy^+-\vx^+}\leq2B$, so $\nu=2B$
suffices for either discrete or continuous state distributions.

\paragraph{Bounded additive noise.}
Subtracting any function of $(\vx,\vy)$ from the displacement leaves
its conditional residual unchanged. This allows the preceding Hoeffding
bound to apply when only the additive noise is bounded.
Consider
\begin{equation*}
 \vx^+=\vf(\vx)+\boldsymbol\eta^X,\qquad
 \vy^+=\vg(\vy)+\boldsymbol\eta^Y,\qquad
 \norm{\boldsymbol\eta^X},\norm{\boldsymbol\eta^Y}\leq B
 \quad\text{almost surely},
\end{equation*}
where $\vf,\vg:\R^d\to\R^d$ may be nonlinear and the noise distributions
may depend on the current states.
Conditional on $\vh$, the term $\vg(\vy)-\vf(\vx)$ is fixed. Consequently,
\begin{equation*}
 \boldsymbol\xi
 =\E[\boldsymbol\eta^Y-\boldsymbol\eta^X\mid\vh]
   -(\boldsymbol\eta^Y-\boldsymbol\eta^X).
\end{equation*}
The noise difference $\boldsymbol\eta^Y-\boldsymbol\eta^X$ has norm at most $2B$, so the preceding
calculation gives $\nu=2B$. The states and transition means themselves
need not be bounded.

\paragraph{Gaussian transitions.}
Suppose the two transition kernels are Gaussian with arbitrary mean
functions and covariance matrices bounded above by $\sigma^2\Id$, uniformly
in the current states. Fix $(\vx,\vy,t)$. The endpoints are independent
under \eqref{eq:conditional-endpoints}, and $(\vr,\vz_t)$ is jointly Gaussian.
Gaussian conditioning therefore gives a centered Gaussian residual with
\begin{equation*}
 \operatorname{Var}(\ip{\va}{\boldsymbol\xi}\mid\vh)
 =\operatorname{Var}(\ip{\va}{\vr}\mid\vh)
 \leq\operatorname{Var}(\ip{\va}{\vr}\mid\vx,\vy,t)
 \leq2\sigma^2\norm{\va}^2.
\end{equation*}
The Gaussian moment-generating function yields \eqref{eq:noise} with
$\nu=\sqrt2\sigma$, including for nonlinear conditional means.

For other sub-Gaussian transition models, Assumption~\ref{ass:conditional-noise}
requires a bound conditional on the full input $\vh$, including
$\vz_t=(1-t)\vx^++t\vy^+$. Sub-Gaussian bounds on the stationary states,
or on additive transition noise conditional only on the current states,
do not by themselves give the conditional bound used in
Lemma~\ref{lem:averaged-excess}.

\section{Experimental details}\label{app:experiments}

The Gaussian and nonlinear experiments use seeds $0,1,2$, with $d=10$
for the main panels and $d=100$ as a stress test. The main panels use
$m\in\{512,1024,2048,4096,8192,16384,32768\}$ training transitions;
the stress test uses $m\in\{256,512,1024,2048,4096,8192\}$.
Architecture and optimization settings stay fixed across training sizes.
Each seed produces one complete experiment run. We report the mean across
these three runs, with standard error equal to the sample standard deviation
divided by $\sqrt{3}$.
Each training trajectory starts from zero, discards 1,000 states, and retains
32,769 states at $d=10$ or 8,193 at $d=100$.
Training sets are nested prefixes; process parameters and
evaluation draws stay fixed across training sizes within each seed.
Independent random streams generate parameters, source and target training
trajectories, evaluation contexts, projection directions, and evaluation noise.

At each training update, we sample transition indices uniformly with
replacement and draw one fresh independent $t\sim\Unif(0,1)$ per sampled
transition. Each repeated occurrence of a transition receives a new flow
time, including repeated indices within a minibatch. Source and target
transitions retain their paired indices. The minibatch loss is an unbiased
estimate, conditional on the training trajectories and current network, of
$\widehat L(\vv)/d$, where \eqref{eq:integrated-empirical-loss} defines the
time-integrated objective. The factor $1/d$ averages over coordinates.
The velocity network takes $(t,\vz,\vx,\vy)$ and outputs a vector in $\R^d$.
It has three SiLU hidden layers, width 64 at $d=10$ and 256 at $d=100$,
and a zero-initialized final layer. Training uses 2,400 AdamW updates with
batch size 256, initial learning rate $2\times10^{-3}$, cosine decay,
weight decay $10^{-5}$, and gradient norm clipping at 5.
We compare the full conditional flow with two learned baselines.
All three learned flows share initialization, minibatch indices, and flow
times. The source-state ablation sets $\vx_i$ to zero. Training still uses
$\vx_{i+1}$ to form the interpolation and conditions on $\vy_i$;
generation starts from $\vx'_{i+1}$ and conditions on $\widehat{\vy}_i$.
The marginal-flow baseline zeros both context inputs and fits an
unconditional flow from the same source and target next-state observations.
Its ODE defines a single map $T$ applied pointwise as
$\widehat{\vy}_{i+1}=T(\vx'_{i+1})$. It is trained to match the marginal state
distributions and uses the same architecture,
training budget, and solver. Every learned flow uses the final training
update; no test metric selects a checkpoint or hyperparameter.
For a fixed current-state pair $(\vx,\vy)$, generation initializes
$\vz_0=\vx^+$ and applies 32 explicit midpoint steps, with $h=1/32$
and $k=0,\ldots,31$:
\begin{equation}
 \va_k=\widehat{\vv}_{kh}(\vz_k,\vx,\vy),\qquad
 \vz_{k+1}=\vz_k+h\widehat{\vv}_{(k+1/2)h}(\vz_k+h\va_k/2,\vx,\vy).
 \label{eq:rk2-solver}
\end{equation}

We evaluate the next-state distribution at 128 fresh current-state pairs.
For each pair $c$, each method produces 512 next states. We project them onto
$J_{\rm dir}=64$ independent directions drawn uniformly from the unit sphere.
Let $\widehat q_{c,j,k}$ be the $k$th smallest generated projection in direction
$j$, and let $q_{c,j}(p)$ be the corresponding true target-transition quantile
at probability $p$. The reported error is
\begin{equation}
 \left(\frac{1}{128\cdot J_{\rm dir}\cdot512}
 \sum_{c=1}^{128}\sum_{j=1}^{J_{\rm dir}}\sum_{k=1}^{512}
 \left[\widehat q_{c,j,k}-q_{c,j}\left(\frac{k-1/2}{512}\right)\right]^2
 \right)^{1/2}.
 \label{eq:synthetic-shared-metric}
\end{equation}
Evaluation uses fresh samples independent of training, with current-state
pairs, projection directions, and source draws shared across methods.
For every method, we evaluate conditional-transition recovery at the supplied
$(\vx,\vy)$; marginal flow ignores these two context inputs.
Projected Gaussian quantiles are analytic; projected mixture quantiles are
computed by 55 bisection steps on their exact cumulative distribution functions.
Thus the metric does not require an independently sampled target reference.
The oracle error is constant across training sizes because its parameters
and evaluation draws are fixed.

\subsection{Gaussian transition experiments}\label{app:gaussian-experiments}

The processes have transitions
$P_X(\vx,\cdot)=N(\ma_X\vx,\mq_X)$ and
$P_Y(\vy,\cdot)=N(\ma_Y\vy,\mq_Y)$.
For each seed, all four matrices have independent random orthogonal eigenbases.
Source transition eigenvalue magnitudes are uniform on $[0.72,0.94]$ and
target magnitudes on $[0.15,0.58]$, with independent random signs.
Noise covariance eigenvalues $\lambda$ satisfy
$\log\lambda\sim\Unif(\log0.25,\log1.75)$.
The true-kernel oracle is
\begin{equation}
 \Phi^{\rm or}_{\vx,\vy}(\vz)
 =\ma_Y\vy+\mq_Y^{1/2}\mq_X^{-1/2}(\vz-\ma_X\vx).
 \label{eq:gaussian-oracle-benchmark}
\end{equation}
It maps $N(\ma_X\vx,\mq_X)$ to $N(\ma_Y\vy,\mq_Y)$.

\subsection{Nonlinear mixture benchmark}\label{app:nonlinear}

For independent random orthogonal matrices $\mR_X,\mR_Y$, define
\begin{equation*}
 \vf(\vx)=0.85\mR_X\tanh(\mR_X^\top\vx),\qquad
 \vg(\vy)=\mR_Y\left[0.55\tanh(\mR_Y^\top\vy)
                  +0.2\sin(2\mR_Y^\top\vy)\right],
\end{equation*}
where the nonlinearities act coordinatewise. Let
$\vu=d^{-1/2}\mR_Y(1,\ldots,1)^\top$, so $\norms{\vu}=1$.
The source transition is $P_X(\vx,\cdot)=N(\vf(\vx),0.6^2\Id)$, and
\begin{equation*}
 P_Y(\vy,\cdot)=\tfrac12N(\vg(\vy)-0.8\sqrt d\,\vu,0.25^2\Id)
              +\tfrac12N(\vg(\vy)+0.8\sqrt d\,\vu,0.25^2\Id).
\end{equation*}
There is one mixture sign per noise vector, giving two multivariate
modes. The rotations couple the observed coordinates in both conditional means.

Let $F_{\rm N}$ be the standard normal cumulative distribution function and
$H_d$ that of $\tfrac12N(-0.8\sqrt d,0.25^2)+\tfrac12N(0.8\sqrt d,0.25^2)$.
For a supplied source next state $\vz$, set
$\vw=(\vz-\vf(\vx))/0.6$ and $s=\vu^\top\vw$.
The oracle is
\begin{equation}
 \Phi^{\rm or}_{\vx,\vy}(\vz)
 =\vg(\vy)+0.25(\vw-s\vu)+\vu H_d^{-1}(F_{\rm N}(s)).
 \label{eq:nonlinear-oracle-benchmark}
\end{equation}
Under the source transition, $s\sim N(0,1)$ independently of
$\vw-s\vu\sim N(0,\Id-\vu\vu^\top)$. The quantile map gives the required
mixture along $\vu$, while the orthogonal component supplies independent
Gaussian noise, yielding exactly $P_Y(\vy,\cdot)$.
For a unit evaluation direction $\va$, the projected target law is a mixture
with means $\va^\top\vg(\vy)\pm0.8\sqrt d\,\va^\top\vu$ and variance $0.25^2$
in each component; its quantiles enter \eqref{eq:synthetic-shared-metric}.

\subsection{Results at dimension 100}\label{app:highdim-stress}

We repeat both sample-size experiments at $d=100$ with width 256 and all
other training and evaluation settings unchanged. At $m=8192$, the mean
next-state errors (standard errors in parentheses) are
\begin{center}
\begin{tabular}{lcccc}
\toprule
Benchmark & Conditional flow & Without $\vx_i$ & Marginal flow & Oracle \\
\midrule
Gaussian & $0.733\ (0.015)$ & $1.043\ (0.024)$ & $1.032\ (0.016)$ & $0.070\ (0.001)$ \\
Nonlinear mixture & $0.565\ (0.027)$ & $0.666\ (0.021)$ & $0.829\ (0.009)$ & $0.129\ (0.006)$ \\
\bottomrule
\end{tabular}
\end{center}
At this sample size, the full conditional flow has lower mean error than
both learned baselines in both benchmarks. Its improvement over the
source-state ablation is smaller for nonlinear transitions. Errors remain above
the oracle, and the nonlinear ablation performs better at some smaller
sample sizes under the fixed training budget.

\section{Ten-subject EEG study}\label{app:enigma-cfm}

\subsection{Protocol}\label{app:eeg-protocol}

We evaluate all ten THINGS-EEG2 subjects using the released ENIGMA
implementation \citep{kneeland2026enigma,enigma2026code} under the matched
subject-adaptation protocol in Section~\ref{sec:experiments}.
ENIGMA uses subject-specific alignment and fine-tuning on calibration data
to adapt to differences between subjects
\citep[Sections~3.3 and~4.3]{kneeland2026enigma}.
Other EEG retrieval methods address these differences through label-free
calibration or coordinate alignment \citep{huang2026sattc,cui2026score}.
Each recording contains 250 time samples from 63 channels over one second
following image presentation. ENIGMA maps recordings to image embeddings
and ranks candidate images by embedding similarity.
Split seed 20260904 selects 4,096 calibration and 1,000 validation images.
Evaluation averages 80 EEG repetitions per image in the official
200-image gallery. Baseline and flow use identical splits and seeds.
This 63-channel adaptation protocol differs from ENIGMA's published
32-channel joint-training experiment \citep[Sections~3.3--4.1]{kneeland2026enigma}.

For adapted subject $s$, ENIGMA is pretrained on reference
subjects $\mathcal S_s$: subjects 5--10 for $s\in\{1,2,3,4\}$;
subjects $\{1,2,3,4,7,8,9,10\}$ for $s\in\{5,6\}$; and subjects 1--6
for $s\in\{7,8,9,10\}$. The baseline and both flow widths share this
pretrained model, whose encoder and projection network remain frozen.
We average seeds 0, 1, and 42 within each subject. Confidence intervals are
percentiles 2.5 and 97.5 over 100,000 paired bootstrap resamples of ten
subjects. For Figure~\ref{fig:enigma-main}, each resample recomputes the mean
CFM and baseline accuracies $\bar a$ and $\bar b$ and their relative lift
$100(\bar a/\bar b-1)\%$; for the absolute gains below, it computes
their difference. All configurations use the final training
update; no test value selects a checkpoint, width, or context length.

\subsection{Adapter configurations}\label{app:eeg-adapters}

\paragraph{Inputs and outputs.}
Let $\mE_s(u)\in\R^{63\times250}$ be subject $s$'s EEG epoch for image $u$,
$F_{\vtheta}$ the flow map with learned parameters $\vtheta$, and $D$ the frozen ENIGMA encoder and
projection network. The flow precedes the subject-specific temporal map
$\mw_s$; its endpoints and predicted embedding are
\begin{equation}
 \ma(u)=\mE_s(u),\qquad \mb_q(u)=\mE_q(u)\quad(q\in\mathcal S_s),
 \qquad D\left(\mw_s\left(F_{\vtheta}(\ma(u))\right)\right).
 \label{eq:eeg-endpoints}
\end{equation}
The map $\mw_s$ has a $250\times250$ weight matrix and 250-dimensional
bias shared across channels, initialized by averaging the reference subjects'
pretrained maps. We train one flow and one map for each adapted subject,
seed, width, and context length. The flow outputs a $63\times250$ epoch;
$D$ maps the adapted epoch to an image embedding.

\paragraph{Past-state conditioning.}
For any epoch $\mE$ with columns $\mE_j\in\R^{63}$, define the ordered
history vector $\vh_k(\mE,j)=(\mE_{j-1},\ldots,\mE_{j-32})$, with each
slot replaced by zero if its lag exceeds $k$ or its time index is at most
zero. Thus $k\in\{0,1,4,16,32\}$ specifies the number of previous samples
available, while the input dimension stays fixed. The velocity
$\vv_{\vtheta}(t,\vz;\vh_k(\ma,j),\vh_k(\mb_q,j))\in\R^{63}$ receives
flow time $t$, the current interpolation $\vz$, and both histories.
Its implementation concatenates one zero 63-channel slot, the source
history, $\vz$, the target history, and $t$, giving 4,159 input coordinates.
The current source endpoint is used to form the interpolation but is not
separately supplied to the velocity.
A network shared across all EEG samples has two hidden GELU layers, each
of width 64 for our main model or 14 for the smaller-model comparison,
and a 63-dimensional linear output; all layers have biases.
There is no normalization, dropout, skip, or recurrent layer.
Zero output-layer initialization makes the initial flow the identity.

\paragraph{One source and multiple reference targets.}
For a batch of $N=64$ distinct calibration images $u_b$, write
$\ma_b=\ma(u_b)$ and $\mb_{bq}=\mb_q(u_b)$, with columns $\va_{bj}$ and
$\vb_{bqj}$. Pair each source epoch separately with every $q\in\mathcal S_s$.
Draw independent $t_{bq}\sim\Unif(0,1)$, shared across the 250 samples of
that pair, and set $\vz_{bqj}=(1-t_{bq})\va_{bj}+t_{bq}\vb_{bqj}$ and
$\vr_{bqj}=\vb_{bqj}-\va_{bj}$.

Let $\Lenigma$ denote the supervised
embedding and contrastive objective described in the optimization paragraph
below. We add the flow-matching loss $\Lfm$ to obtain
\begin{equation}
 \begin{split}
 \Leeg&=\Lenigma+0.02\Lfm,\\
 \Lfm&=\frac{1}{N|\mathcal S_s|\,63\cdot250}
 \sum_{b=1}^{N}\sum_{q\in\mathcal S_s}\sum_{j=1}^{250}
 \norm{\vv_{\vtheta}\left(t_{bq},\vz_{bqj};
 \vh_k(\ma_b,j),\vh_k(\mb_{bq},j)\right)-\vr_{bqj}}^2.
 \end{split}
 \label{eq:eeg-main-loss}
\end{equation}
Reference subjects receive equal weight, with one loss term for each
source--reference pair. The same
velocity network handles all references, without a reference-subject identity
input. Training conditions on the observed reference history
$\vh_k(\mb_{bq},j)$, held fixed as flow time varies.

\paragraph{Sequential generation.}
For a source epoch $\ma$ with columns $\va_j$, generate columns
$\widehat{\vb}_j$ of $\widehat{\mb}=F_{\vtheta}(\ma)$ in order
$j=1,\ldots,250$ by approximately solving
\begin{equation}
 \vz_j(0)=\va_j,\qquad
 \frac{\dd\vz_j(t)}{\dd t}
 =\vv_{\vtheta}\left(t,\vz_j(t);
 \vh_k(\ma,j),\vh_k(\widehat{\mb},j)\right),\qquad
 \widehat{\vb}_j=\vz_j(1).
 \label{eq:eeg-generation}
\end{equation}
Two explicit-midpoint steps of size $1/2$ integrate each local ODE, holding
both histories fixed. Target history contains only previously generated
samples; at $j=1$, both histories are zero. Thus the initial sample uses a
learned marginal flow, and generation requires no reference recording or
reference-subject choice. The supervised loss $\Lenigma$ differentiates
through the full sequential generation and $\mw_s$; $D$ remains frozen.
For $k=1$ and $j>1$, the velocity inputs and source initialization follow
our theoretical transition construction. Here training pairs recordings of
the same image, and generation uses a
learned flow for the initial sample. The theory assumes independent source
and target training trajectories and stationary target initialization.
We do not assume that the EEG data satisfy its stationary Markov or mixing
hypotheses.

\paragraph{Optimization.}
We retain ENIGMA's mean squared error over image-embedding coordinates and
its symmetric image and text contrastive losses, weighted $1$, $0.5$, and
$0.05$, respectively
\citep{enigma2026code}. Contrastive scores use normalized target embeddings,
unnormalized predictions, and the fixed pretrained score multiplier;
the squared loss uses unnormalized image embeddings.
The baseline trains only $\mw_s$; our method jointly trains $\mw_s$ and the flow.
Both use 300 AdamW updates, peak learning rate $3\cdot10^{-4}$, weight decay
$0.01$ excluding biases and normalization parameters, gradient clipping at
norm 1, and a one-cycle schedule with 10\% warmup and cosine decay.
Each update draws a fresh image batch and fresh flow times.

\subsection{Additional results}\label{app:eeg-throughput}

Table~\ref{tab:enigma-baseline} reports the $k=32$ comparison with the flow
before $\mw_s$.
Our main width-64 model achieves $49.05\%$ mean Top-5 accuracy, versus
$43.87\%$ for the baseline: a gain of $5.18$ percentage points (95\%
interval $[4.08,6.08]$), or $11.82\%$ relative lift ($[9.25,14.06]\%$).
The smaller width-14 model reaches $46.73\%$, a gain of $2.87$ points
($[2.13,3.53]$), or $6.53\%$ relative lift ($[4.81,8.22]\%$).
Both improve on the baseline for all ten subjects after averaging seeds.
Figure~\ref{fig:enigma-main} reports every context length under the same
protocol. Longer history improves retrieval at both widths without
increasing model size within either sweep.

\begin{table}[ht]
 \centering\small
 \caption{Retrieval accuracy (\%) at $k=32$, averaging seeds within subjects
 and then ten subjects. Both CFM networks precede $\mw_s$.}
 \label{tab:enigma-baseline}
 \begin{tabular}{lrrr}
 \toprule
 Adapter&Top-1&Top-5&Top-10\\\midrule
 $\mw_s$ only&17.78&43.87&57.40\\
 CFM, width 64&20.55&49.05&62.40\\
 CFM, width 14&18.60&46.73&59.82\\
 \bottomrule
 \end{tabular}
\end{table}

\subsection{Flow after the temporal map}\label{app:eeg-post-ws}

We also place the flow after $\mw_s$, giving the predicted embedding
$D(F_{\vtheta}(\mw_s(\mE_s(u))))$. The subjects, reference sets, data splits,
seeds, architectures, optimization settings, and final checkpoint rule are
unchanged. The source and reference endpoints are now
$\ma(u)=\mw_s(\mE_s(u))$ and $\mb_q(u)=\mw_q(\mE_q(u))$.
Each reference map $\mw_q$ is frozen at its pretrained value; $\mw_s$ and
$F_{\vtheta}$ train jointly, with gradients through $\mw_s$ in both the
retrieval and flow-matching losses. The velocity inputs, loss
\eqref{eq:eeg-main-loss}, and sequential generation
\eqref{eq:eeg-generation} use these transformed endpoints.
Thus $k$ counts preceding samples in the transformed sequence. Since the
temporal maps mix all 250 raw time samples, this history need not correspond
to preceding measurements in raw EEG time.

Figure~\ref{fig:eeg-post-ws} reports all five context lengths for both widths.
Relative Top-5 lift increases from $3.91\%$ at $k=0$ to $11.63\%$ at
$k=32$ for width 64, and from $1.44\%$ to $7.26\%$ for width 14.
At $k=32$, the width-64 model achieves $48.97\%$ Top-5 accuracy, a gain
of $5.10$ percentage points over the $43.87\%$ baseline (95\% interval
$[4.05,6.28]$), or $11.63\%$ relative lift ($[8.83,15.00]\%$).
Width 14 reaches $47.05\%$, a gain of $3.18$ points ($[2.10,4.33]$),
or $7.26\%$ relative lift ($[4.61,10.37]\%$).
Both improve on the baseline for all ten subjects after averaging seeds.
These results show that the benefit of additional context also appears
when the flow acts on temporally transformed EEG. Table~\ref{tab:eeg-post-ws}
gives the corresponding Top-1, Top-5, and Top-10 accuracies.

\begin{figure}[ht]
 \centering
 \includegraphics[width=0.8\linewidth]{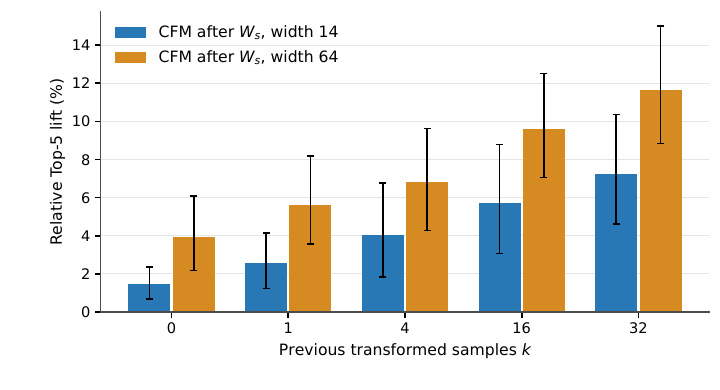}
 \caption{Additional EEG comparison with both flow networks after $\mw_s$.
 Relative Top-5 retrieval accuracy lift over the matched $\mw_s$-only baseline
 is plotted against context length $k$ in transformed EEG coordinates.
 Accuracies average three seeds within each of ten subjects; error bars
 give 95\% paired subject-bootstrap percentile intervals.}
 \label{fig:eeg-post-ws}
\end{figure}

\begin{table}[ht]
 \centering\small
 \caption{Retrieval accuracy (\%) at $k=32$ with CFM after $\mw_s$,
 averaging three seeds within subjects and then ten subjects.}
 \label{tab:eeg-post-ws}
 \begin{tabular}{lrrr}
 \toprule
 Adapter&Top-1&Top-5&Top-10\\\midrule
 $\mw_s$ only&17.78&43.87&57.40\\
 CFM after $\mw_s$, width 64&20.65&48.97&62.23\\
 CFM after $\mw_s$, width 14&19.38&47.05&60.32\\
 \bottomrule
 \end{tabular}
\end{table}

\section{Further comparisons with prior work}\label{app:related-work}

\subsection{Sequential transport and generation}\label{app:related-transport}
Time-series adaptation methods align observations, time axes, or
representations across domains \citep{painblanc2023match,mishra2026causal}.
Flow models generate entire functions or correlated sequences
\citep{kerrigan2024functional,wei2025stream}; Schr\"odinger-bridge methods
learn dynamics between endpoint laws or complete time-series laws
\citep{tong2024schrodinger,hamdouche2026nonparametric}.
COT-GAN uses causal transport as a training objective for sequential data
\citep{xu2020cotgan}. We instead transform a prescribed fresh source
trajectory by recursively applying a learned conditional transition map.

Causal transport requires
$\Law(\vy_{0:i}\mid\vx_{0:n})=\Law(\vy_{0:i}\mid\vx_{0:i})$;
bicausal transport imposes the reverse identity as well
\citep{backhoff2017causal,moulos2021bicausal}.
For Markov chains, \citet{moulos2021bicausal} compute bicausal transport costs
by value iteration, and \citet{oconnor2022optimal} optimize stationary
transition couplings for long-run average cost.
\citet{oconnor2024estimation} give finite-sample cost bounds for estimated
stationary couplings of finite-valued processes under mixing assumptions.
Our statistical bound controls the squared prediction error of a continuous
conditional velocity; \eqref{eq:rollout-from-risk} converts it into
accuracy of generated states.

\subsection{Statistical error and dependence}\label{app:related-statistics}
\citet[Theorem~1]{benton2024error} and
\citet[Proposition~6.2]{gao2024gaussian} convert population velocity error
into Wasserstein error. For stochastic diffusion flow matching,
\citet[Theorems~2--3]{silveri2024theoretical} bound Kullback--Leibler error;
\citet[Theorems~1--2]{bansal2026straightness} and \cite{roy2026low} analyze rectified-flow
discretization. We combine velocity estimation from dependent transitions
with conditional Wasserstein bounds and recursive generation.
Our estimation proof uses the coupling and fourth-moment approach to
dependent square-loss regression, as in
\citet[Theorem~3.1]{roy2021dependent}.

For convex or realizable predictor classes,
\citet[Theorem~3.1 and discussion following (5)]{ziemann2024sharp} show that,
under suitable moment assumptions, the leading square-loss error term need
not contain a mixing-time factor. Dependence enters the minimum sample size
and higher-order terms. Our lower bound is consistent with this distinction:
it establishes a mixing-dependent sample threshold.

\subsection{Distribution mismatch and rollout stability}\label{app:related-rollout}
In offline policy evaluation,
\citet[Section~4.3, Lemma~2, (16b)]{zanette2022bellman} compare squared
Bellman-residual norms under the target-policy occupancy law $d^\pi$ and
the data law $\mu$. If $d^\pi$ has density at most $M$ relative to $\mu$,
then every $f$ with $\E_{(s,a)\sim\mu}f(s,a)^2<\infty$ satisfies
\begin{equation*}
 \E_{(s,a)\sim d^\pi}f(s,a)^2
 \leq M\E_{(s,a)\sim\mu}f(s,a)^2,
\end{equation*}
where $(s,a)$ is a state--action pair. Their application takes $f$ to be
a Bellman residual.
\citet[Section~1]{duan2020minimax} similarly restrict distribution mismatch
to a function class through a restricted $\chi^2$ divergence.
The concentrability condition \eqref{eq:rollout-concentrability} compares our conditional
velocity error under the law $\Gamma_i^{\vv}$ of source and generated state
pairs and the training law $\pi_X\otimes\pi_Y$, with the conditional risk
$\ell_{\vv}$ from \eqref{eq:conditional-velocity-risk} playing the role of
the squared Bellman residual.

\citet[Section~3.1, Theorem~3.1]{rudolf2018perturbation} propagate
one-transition discrepancies through contracting Markov dynamics in
first-order Wasserstein distance. Proposition~\ref{prop:rollout} gives a
conditional $W_2$ coupling that retains the prescribed source transitions
and controls both state and joint-trajectory error.

\end{document}